\documentclass[11pt]{article}

\RequirePackage{natbib}
\RequirePackage[OT1]{fontenc}
\RequirePackage{amsthm,amsmath}
\RequirePackage[colorlinks,citecolor=blue,urlcolor=blue]{hyperref}

\newtheorem{thm}{Theorem}[section]
\newtheorem{cor}[thm]{Corollary}
\newtheorem{lem}[thm]{Lemma}
\newtheorem{prop}[thm]{Proposition}
\newtheorem{defn}[thm]{Definition}

\newtheorem{aspt}[thm]{Assumption}
\newtheorem{rem}[thm]{Remark}

\usepackage{amsmath,amssymb,amsfonts, amsthm, amsbsy, mathtools, epsfig}
\usepackage[usenames]{color}
\usepackage{rotating}
\usepackage{boxedminipage}
\usepackage[colorlinks,citecolor=blue,urlcolor=blue]{hyperref}
\usepackage{epstopdf}
\usepackage{natbib}
\usepackage{enumerate}

\usepackage{verbatim}
\usepackage{graphicx}
\usepackage{caption}
\usepackage{subcaption}
\usepackage{bbm}
\usepackage{xcolor}
\usepackage{bm}
\usepackage{mathrsfs,color,dsfont}
\usepackage{algorithm}
\usepackage{thmtools}
\usepackage{thm-restate}
\usepackage{hyperref}
\usepackage{cleveref}

\newcommand{\calP}{{\mathcal{P}}}

\newcommand{\RNum}[1]{\uppercase\expandafter{\romannumeral #1\relax}}
\makeatother
\newcommand{\argmin}{\mathop{\mathrm{arg\,min}{}}}

\newcommand{\tr}{\text{tr}}

\def\argmin{\mathop{\rm argmin}}

\newcommand\tabcaption{\def\@captype{table}\caption}

\newcommand\blue{}

\definecolor{DSgray}{cmyk}{0,1,0,0}

\newcommand{\E}{\mathbb{E}}


\newcommand{\calD}{\mathcal{D}}

\newcommand{\reals}{\mathbb{R}}
\newcommand{\Prob}{\mathbf{P}}

\newcommand{\Fhat}{\widehat{F}}

\newcommand{\tildeO}{\widetilde{O}}





\begin{document}

\title{Dimension Independent Generalization Error by Stochastic Gradient Descent}

\author{Xi Chen\footnote{Stern School of Business, New York University, Email: xichen@nyu.edu}  ~ Qiang Liu\footnote{Department of Mathematics, National University of Singapore, Email: matliuq@nus.edu.sg} ~Xin T. Tong\footnote{Department of Mathematics, National University of Singapore, Email: mattxin@nus.edu.sg}}
\date{\today}

\maketitle

\begin{abstract}

One classical canon of statistics is that large models are prone to overfitting, and model selection procedures are necessary for high dimensional data.  However, many overparameterized models, such as neural networks, perform very well in practice, although they are often trained with simple online methods and regularization.  The empirical success of overparameterized models, which is often known as benign overfitting, motivates us to have a new look at the statistical generalization theory for online optimization.  In particular, we present a general theory on the generalization error of stochastic gradient descent (SGD) solutions for both convex and \blue{locally convex} loss functions. We further discuss data and model conditions that lead to a ``low effective dimension". Under these conditions, we show that the generalization error either does not depend on the ambient dimension $p$ or depends on $p$  via a poly-logarithmic factor. We also demonstrate that in several widely used statistical models,  the ``low \blue{effective} dimension'' arises naturally in overparameterized settings. The studied statistical applications include both convex models such as linear regression and logistic regression and non-convex models such as $M$-estimator and two-layer neural networks.

\end{abstract}


\section{Introduction}


The study of overfitting phenomenon has been an important topic in statistics and machine learning. From classical statistical learning theory,  we understand that when the number of model parameters is large compared to the amount of data, the generalization error can be excessively large even if  the training error is small. This phenomenon is usually known as overfitting. For this reason, dimension reduction or feature selection mechanisms such as principle component analysis (PCA) and shrinkage methods are often required in the training phase to reduce model dimension and avoid overfitting.

In recent years, deep neural networks have achieved  great successes in practical applications. Researchers have found out that overparameterized neural networks usually achieve superior performance \citep{GRS:18,LL2018,NLBLS:19,ALL:19,ADHLW:19}. Moreover, these models are often trained with simple  regularization and do not need dimension reduction procedures. This phenomenon is sometimes referred to as \emph{benign overfitting} \citep{BLLT:19}.  To understand it, we need a new statistical framework to study generalization errors. 

Although there is much practical evidence on the benefit of overparameterization, the existing theoretical study mainly focuses on linear models (see, e.g., \citealp{BLLT:19, Nakkiran19, AKT:19}) or neural networks with certain special data structures (see, e.g., \citealp{LL2018}). The main purpose of our paper is to systematically investigate the generalization error for a risk minimization problem when the number of parameters $p$ is much larger than the sample size $N$. In particular, we establish a generalization error bound for  stochastic gradient descent (SGD) solutions for both convex (e.g., linear regression and logistic regression) and non-convex problems (some $M$-estimators and neural networks). We focus our study on the SGD algorithm because it has been widely used in large-scale data learning due to its computational and memory efficiency. 
%


Let us briefly introduce our setup of the overfitting problem and the SGD algorithm. We consider the following population risk minimization problem under a loss function $F$, which can be either convex or non-convex:
\begin{equation}\label{eq:sto_opt}
w^*=\argmin_w F(w),\quad F(w):=\E_\zeta f(w,\zeta).
\end{equation}
In \eqref{eq:sto_opt}, $w\in \reals^p$ is a $p$-dimensional parameter vector,  $\zeta$ denotes a random sample from a certain probability distribution,  and $f(\,\cdot\,,\zeta)$ is the loss function on each individual data $\zeta$. The global minimizer $w^*$ is often the true model parameter in  statistical estimation problems. In practice, the distribution of $\zeta$ is usually unknown, and one only has the access to $N$ \emph{i.i.d.} samples  $\zeta_1,\ldots,\zeta_N$ from the population. Instead of minimizing the population risk $F(w)$ in \eqref{eq:sto_opt}, it is more practical  to minimize the empirical loss function 
\begin{equation}\label{eq:empirical}
\hat{F}(w)=\frac1N \sum_{i=1}^N f(w,\zeta_i).
\end{equation}
Often, instead of directly minimizing the empirical loss, an extra regularization term is sometimes added to the empirical loss to avoid overfitting. In this paper,  we consider the most commonly used ridge or Tikhonov regularization. 
 The corresponding regularized empirical loss function takes the following form,
\begin{equation}\label{eq:reg}
\Fhat_\lambda(w):=\Fhat(w)+\dfrac{\lambda}{2} \|w\|^2=\frac1N\sum_{i=1}^N f_\lambda(w,\zeta_i),\quad  f_\lambda(w,\zeta):=f(w,\zeta)+\dfrac{\lambda}{2} \|w\|^2.
\end{equation}
The weight of regularization is controlled by $\lambda$, which is a tuning parameter. \blue{When $\lambda=0$, this corresponds to the ridgeless regression. We will study this setup in this paper, which corresponds to ``implicit regularization''. }One popular way to optimize $\Fhat_\lambda$ in machine learning is via SGD. In particular, for a generic initialization parameter  $w_0$, SGD is an iterative algorithm, where the $(n+1)$-th iterate $w_{n+1}$ is updated according to the following equation,
\begin{align}
\label{eq:sgd_iter}
w_{n+1}: =w_{n} -\eta \nabla f_\lambda(w_{n}, \zeta_n) =w_{n} -\eta \left( \nabla f(w_{n}, \zeta_n) +\lambda w_n \right).
\end{align}
By running through $N$ samples, SGD outputs the $N$-th iterate $w_N$ as the final estimator of $w^*$. 
\blue{Notably, SGD iterates are affected by the stochasticity of the data samples $\zeta$. To reduce such noise and improve accuracy, the averaged SGD (ASGD) method uses the average iterate 
\[
\bar{w}_N=\frac{1}{N}\sum_{i=1}^N w_i
\]
as the final estimator of $w^*$.}
In SGD iterations \eqref{eq:sgd_iter}, the hyper-parameter $\eta$ is known as the stepsize. In our paper, we consider using a constant stepsize, which is  a popular choice in practice \citep{BM11}. The value of $\eta$ will be discussed later in our theoretical results. 
Moreover, in \eqref{eq:sgd_iter}, the gradient is taken with respect to the parameter vector $w$. For notational simplicity, we will use ``$\nabla$" as a short notation for ``$\nabla_w$" throughout the paper.

When the sample size $N$ is much larger than the dimensionality  $p$, it is expected  that $w_N$ would be close to $w^*$ under certain conditions. However, in an overparameterized setting where $N$ is less than $p$,  the solution $w_N$ can be far away from $w^*$. In this case, estimating the underlying parameter accurrately usually requires strong assumptions. However, for many machine learning tasks, it is of more interest in achieving small generalization error, which is defined as follows,
\begin{equation}\label{eq:generalization}
G(w_N)=F(w_N)-F(w^*).
\end{equation}
The main purpose of the paper is to provide an upper bound of the generalization error in \eqref{eq:generalization} in  overparameterized settings. We will characterize the scenarios where such a generalization error bound is independent of $p$ or only involves in poly-logarithmic factors of $p$.


\subsection{Main results and paper organization}
The main message of this paper is as follows. For a large class of statistical learning problems where the \emph{effective dimension} is low (see the rigorous definition in Section  \ref{sec:nooverfit}), the stochastic gradient descent (SGD) algorithm with proper ridge regularization will not overfit even if the ambient model dimension is much larger than the sample size. In particular, we will show that the generalization error has at most poly-logarithmic dependence on the ambient model dimension $p$.

In  Section \ref{sec:generalize},  we present a framework for generalization error analysis. 
This framework is designed to handle 
\blue{ high dimensional regression problems. 
We will separately discuss two scenarios. In the first scenario, the true parameter is weakly sparse with a dimension independent $l_2$ norm. 
In this case, we show ridgeless SGD has been sufficient to obtain dimension independent generalization error (Theorem \ref{thm:Asgd}). 
In the second scenario, the true parameter is dimension independent only under some problem-specific norms. 
In this more challenging case, we show SGD can achieve dimension independent generalization error by proper amount of ridge penalty (Theorem \ref{thm:sgd}).}
The upper bound of the generalization error is provided. Using linear regression as an illustrative example,  we show that  each term in the generalization bound has a strong statistical interpretation (see Section \ref{sec:SA}). The upper bound also leads to practical guidelines on the rates of  problem-related parameters, which are given by Corollary \ref{cor:selectpara_online}. 


While Theorem \ref{thm:sgd} provides an  error bound on generalization, for this error bound to be almost dimension-independent, we require the \emph{effective dimension} to be small. Section \ref{sec:nooverfit} first formally defines the general notion of \emph{low effective dimension}, which can essentially be described by 1) the loss function has a fast decaying Hessian spectrum, and  2) the true parameter is either sparse or uniformly bounded along Hessian's eigen-directions.

In Section \ref{sec:linear}, we carefully investigate  
generalization errors in various linear models. We consider the cases of finite projections of infinite-dimensional models and linear regression with redundant features. 
In these scenarios, we quantify when the overparameterization does not hurt the generalization performance.

Our generalization result can also be applied to a wide range of nonlinear models. In Section \ref{sec:non_linear}, we study both convex nonlinear models such as logistic regression and non-convex models such as $M$-estimator with the Tukey's biweight loss function \citep{Tukey:60} and two-layer neural networks. We show that the low effective dimension naturally occurs in these applications. 


\subsection{Related works}
High dimensional statistical learning with low effective dimension is a common idea in principal component analysis (PCA) and  functional data analysis applications \citep{Jol02,RS05}.  To implement PCA,  we first find a $k^*$-dimensional subspace  of which the truncated variability  contains a significant proportion (e.g., $95\%$)  of the full model variability. Then we project the problem onto this subspace and conduct the data analysis on it. In general, the truncated dimension $k^*$ needs to be less than the sample size $N$, and the statistical problem needs to  be well-conditioned in the $k^*$-dimensional subspace. {While we also require the existence of a low effective dimension}, the problem setting considered in this paper is fundamentally different from the classical PCA problem. In our setting, the SGD algorithm directly runs on the original problem space of dimension $p$,  and there is no projection step. Moreover, we do not need the problem to be well-conditioned in the ambient space.
We would argue that our setting is more useful for practical applications since it is difficult to find a proper truncation dimension $k^*$ and PCA is intractable with rank-deficient data. 


In recent years, the generalization error bounds for linear models in overparameterized settings have been carefully investigated in the literature, see, e.g.,  \cite{BLLT:19, Nakkiran19, AKT:19, Hastie19}.  Our result is different from these existing results in the following perspectives:
\begin{enumerate}[1)]
	\item  Nonlinearity: Most of these works focus on linear models. For example, \cite{BLLT:19} defined two notions of \emph{effective ranks}  of the data covariance matrix in linear models and expressed the generalization error bound in terms of effective ranks.  A more detailed comparison between our low effective rank conditions and the ones in \cite{BLLT:19}  will be provided in Remark \ref{rem:comparewithBLLT}. 	In addition, \cite{Hastie19} developed the generalization error bound for the composition of an activation function and a linear model. Our work also requires conditions similar to small effective ranks, but they are simpler in formulation, as discussed in Section \ref{sec:nooverfit}. Moreover, our results can be applied to  general nonlinear  models. 
	\item Anisotropic spectrum and regularization: \cite{Nakkiran19} and \cite{Hastie19} focus their studies on cases with isotropic or well-conditioned regressor covariance matrices. Our study focuses on cases with anisotropic regressor covariance, of which the minimum eigenvalue decays to zero. 
	Moreover, since \blue{online learning has the implicit regularization effect }and the ridge penalty is applied in certain scenarios, we do not have the ``double descent'' phenomenon as in other literature \citep{BHM:18,BHMM:19}. A recent paper by \cite{NVKM:20} also showed that for certain linear regression
	models with isotropic data distribution, the ridge penalty regularized regression (in the offline setting) can avoid the ``double descent'' phenomenon.
	
	\item Online optimization: The aforementioned works mainly focus on offline optimization. For example, \cite{BLLT:19}, \cite{Nakkiran19}, and \cite{Hastie19} show that the generalization error of the offline linear regression solution is closely related to the spectrum of the design matrix. 
	In particular, when the minimum eigenvalue is close to zero, the offline learning results become unstable because of singular matrix-inversions. 
	Since the design matrix relies on data realization, such instability can only be studied through random matrix theories (RMT). 
	While these studies of offline linear regression are interesting and technically deep, their dependence on RMT makes the extensions to nonlinear models difficult. 
	In comparison, online learning methods process one data point at a time and do not involve the inversion of design matrices, which facilitates the study of general nonlinear models.   
	Moreover, in terms of practical applications, online optimization methods such as SGD are  appealing due to their low per-iteration complexity compared to offline optimization.  Therefore, this paper focuses on the generalization error for  online learning in overparameterized settings. 
	
\end{enumerate}

For online optimization, the stochastic gradient descent (SGD), which dates back to \cite{robbins1951}, is perhaps the most widely used method in practice. The convergence rates of the SGD for convex loss functions have been well studied in the literature (see, e.g., \citealp{zhang2004solving}, \citealp{nesterov2008confidence}, \citealp{BM11},  \citealp{Lan12OSGD}, \citealp{Bach:12}, \citealp{bach2013non}). For the constant stepsize SGD,  \cite{BM11} provided the generalization error bound in Theorem 1 of their paper. However, their bound has an explicit linear dependence on $p$, which is not applicable to the overparameterized setting. Our paper provides a more refined generalization error bound, which incorporates the Hessian spectrum to capture the ``low dimension effect'' in the overparameterized setting.
\blue{  When ridge regularization is presented, existing results show that the generalization error is dimension independent \citep{shalev2014understanding, zhang2005learning,carratino2018learning}. But, in general, these results need the norm of population gradient $\nabla F$ to be dimension independent (See Section 14.5.3 in  \cite{shalev2014understanding}). Our new results do not have this restriction but only require  stochastic gradient  variance to be independent of the dimension. This is a much less stringent  restriction because the variance can be reduced by various techniques (see, e.g., \cite{johnson2013accelerating}).}

Overparameterized neural network (NN) is a very active research direction. 
There are several existing works explaining why overfitting does not happen in large NN   \citep{GRS:18,LL2018,NLBLS:19,ALL:19,ADHLW:19, EMW19, EMWGD19}. 
Interestingly, the conditions they impose are largely similar to  the ones we will use.
Namely, they require the high dimensional input data and the Frobenius norm of true weight matrices to be  bounded by constants. For this to be true, only a small portion of the data or model components can be significantly active, which satisfies  the concept of low effective dimension. 

On the other hand, our study of two-layer NN in Section \ref{sec:NN} is different from existing results in the following perspectives.  One popular way to analyze generalization error is by analyzing the Rademacher complexity \citep{GRS:18, NLBLS:19, EMW19}, which can be used to establish an upper bound of the difference between training error and generalization error  (see, e.g.,  Theorem 3.3 of \cite{MRT18}). However, these results are usually derived in an offline optimization setting, while our paper focuses on the result from online SGD.  \cite{LL2018} and \cite{ALL:19} both studied NN generalization error with SGD iterations. But they mainly focused on classification scenarios where the loss function is bounded. Moreover, \cite{LL2018} required the loss function to be of a logistic form, and \cite{ALL:19} studied the running average generalization error. Our results can be applied to regression NN with unbounded loss functions. \cite{ADHLW:19} and  \cite{EMWGD19} studied NN generalization bound with deterministic gradient descent. Moreover, their studies assume a certain data angle or Gram matrix to have a strictly positive minimal eigenvalue. Please see Section \ref{sec:NN} for more detailed comparisons with these works.



\section{A Generalization Bound}
\label{sec:generalize}
In this section, we present a general result on the generalization bound for the SGD solution from \eqref{eq:sgd_iter}.

\subsection{Preliminaries: high dimensional norms and non-convex energy landscape}
\label{sec:prelim}
\blue{ The main issue this paper tries to understand is the high dimensional generalization error when using SGD as the training method. 
In some simple scenarios, the true model parameters are ``sparse", so $\|w^*\|$ does not grow with the dimension. 
This allows us to measure the distance between SGD iterate $w_n$ and $w^*$ directly. 
In other scenarios, $w^*$ can be very dense, and $\|w^*\|$ may grow linearly or even faster with the dimension. 
To resolve this issue, we define  the following norms: }
\begin{defn}\label{defn}
	Given a matrix $A\in \reals^{p\times p}$ and $\lambda$, we decompose $\reals^p=S_\lambda\oplus S_\bot$, where $S_\lambda$ consists of eigenvectors of $A$ with eigenvalues above $\lambda >0$ and $S_\bot$ is the orthogonal complement of $S_\lambda$. Given any vector $v$, denote its decomposition as $v=v_\lambda+v_\bot$, where $v_\lambda\in S_\lambda$ and $v_\bot\in S_\bot$. Then define
	\begin{align}\label{def:newnorm}
	\|v\|^2_{A}=v^TA v,\quad \|v\|^2_{A,\lambda}:=\lambda\|v_\lambda\|^2+ v_\bot^TAv_\bot.
	\end{align}
\end{defn}
 We introduce the norm $\|v\|^2_A$, whose value can be independent of the ambient dimension $p$.   The second norm $\|v\|^2_{A,\lambda}$ is a truncated version of the first norm, it essentially truncates all eigenvalues of $A$ above $\lambda$ to $\lambda$.
It is easy to see that $\|v\|^2_{A,\lambda}\leq \|v\|^2_A$. We introduce the second norm  because $\|v\|^2_{A,\lambda}$ converges to zero when the regularization parameter $\lambda$ does, while $\|v\|^2_A$ is independent of $\lambda$. 

\blue{ It is well known that SGD works well for convex problems. This is also true for our theoretical analysis, which works best in convex settings.
On the other hand, many practical problems are non-convex,  
which makes the statistical learning problem technically more challenging.}
 First, the function $F$ can have multiple local minima, and each local minimum has an attraction basin, which is a ``valley" in the graph of $F$. 
Within each valley, we assume that $F$ is \blue{ locally convex. 
Machine learning and theoretical deep learning literature often study the scenarios that local minima lie in large and shallow valleys, which lead to more stable generalization performance.}
\blue{Suppose $\calD$ is the attraction basin of the optimal solution $w^*$ in \eqref{eq:sto_opt}, initializing SGD in $\calD$ will generate iterates converging to $w^*$  with high probability.  So a natural question is the gap between the estimator learned from SGD and $w^*$.}
On the other hand, if the SGD iterates  take place outside $\calD$, the output can be irrelevant to the properties of $w^*$. 
\blue{ Therefore, we need to introduce a stopping time $\tau$, which describes the first time SGD exits $\calD$:
\[
\tau=\min\{n: w_n\notin \calD\},
\]
where $w_n$ is the $n$-th SGD iterate defined in \eqref{eq:sgd_iter}. Our generalization analysis will assume the SGD iterates stay within $\calD$. We will also provide bounds of probability that SGD leaves $\calD$. }

\blue{While it is reasonable to assume that $w^*$ as a local minimum is in the valley $\calD$, verifying this assumption can be difficult. For example, because we do not have access to the population loss function $F$ or the Hessian directly, to check the convexity of $F$, we need to investigate $\Fhat$ instead. There will be a certain inaccuracy due to the randomness in $\Fhat$.  As another example, while we know the Hessian of $F$ is positive semidefinite at $w^*$ because $w^*$ is a local minimum, $F$ does not have to be convex in the neighborhood  $w^*$. For both these examples, it is more accurate to say $F$ is ``approximately convex" in $\calD$. Our analysis can also extend to such very challenging cases by introducing proper regularizations.  On the other hand,  if $F$ is very non-convex in $\calD$, then one should not expect that SGD will produce good learning results. Therefore, to achieve a reasonable generalization error, we need $F$ to be very close to convex.  In many applications, this can be done by choosing either a very large sample size $N$ or a very small neighborhood around $w^*$. }

Based on our discussion, we have the following assumption on the population risk function,
\begin{aspt}
	\label{aspt:convex}
	The optimal solution $w^*$ of the population risk $F$ has a neighborhood $\calD$, such that for some positive semidefinite (PSD) matrix $A$ and $\delta \in [0,1/2)$, 
	\begin{equation}\label{eq:cond}
	-\delta A\preceq  \nabla^2F(w) \preceq A,\quad \forall \; w\in \calD. 
	\end{equation}
\end{aspt} 
In Assumption \ref{aspt:convex} and in the sequel, for two symmetric matrices $C, D$, $C\preceq D$ indicates that $D-C$ is positive semidefinite (PSD). The upper bound on the Hessian matrix $\nabla^2F(w) \preceq A$ is widely assumed in the statistical literature. 
The parameter $\delta$ above describes the level of non-convexity. In particular, $\delta=0$ indicates that $F$ is convex within $\calD$.
However, our condition in \eqref{eq:cond} is more general since $\delta$ can be strictly positive, which allows $F$ to be non-convex.
On the other hand, although our generalization error bound holds for $\delta\in [0,1/2)$, for this upper bound to be smaller than a certain threshold, $\delta$ needs to be small.  Please see Corollary \ref{cor:selectpara_online} for the exact dependence of $\delta$ in the upper bound.

\subsection{\blue{Generalization bound with sparse true parameter }}
\label{sec:main}
\blue{ We will discuss two possible scenarios with high dimensional machine learning. In the first case, the true parameter $w^*$ is sparse or weak so that its $l_2$ norm does not depend on the dimension. In this case, we have the following results: }
%
	\blue{
\begin{thm}
	\label{thm:Asgd}
	Under Assumption \ref{aspt:convex}, suppose $w_0\in \calD$ and  there is a constants $r$ and $c_r$  such that
	\begin{equation}
	\label{eqn:sgvarianceless}
	\E \| \nabla f(w,\zeta)-\nabla F(w)\|^2\leq  r^2+c_r |(w-w^*)^T\nabla F(w)|,\quad \forall w\in \calD.
	\end{equation}
	Then if the ridgeless SGD stepsize $\eta$ and the regularization parameter  $\lambda$ satisfy
	\begin{align*}
	\eta \leq\min\Big\{ \frac1{4(1+c_r)(\|A\|+\lambda)}, 1\Big\}, \quad 2\delta \|A\| \leq \lambda \leq 1, 
	\end{align*}
we have the generalization error for the averaged SGD,
	\begin{align}
	\label{eqn:opt2ASGD}
	&\E \left[1_{\tau\geq N-1} G(\bar{w}_N)\right]\leq \frac{2\E \|w_0-w^*\|^2}{N\eta}+2\eta r^2+8\lambda\|w^*\|^2,
	\end{align}
	where the generalization error $G$ is defined in \eqref{eq:generalization} and $\bar{w}_N=\frac1N\sum_{i=0}^{N-1} w_i $ is the averaged SGD iterate.
\end{thm}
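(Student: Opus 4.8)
The plan is to run the classical one–step Lyapunov argument for averaged SGD, using $V_n:=\|w_n-w^*\|^2$ as the potential, convert the accumulated one–step decrements into a bound on $\frac1N\sum_{n<N}G(w_n)$, and then pass to $G(\bar w_N)$ by convexity. Because Assumption \ref{aspt:convex} and the variance bound \eqref{eqn:sgvarianceless} only hold on $\calD$, I would first localize everything by the stopping time $\tau$: on $\{\tau\ge n\}$ all iterates up to time $n$ lie in $\calD$, so the conditional estimates below are valid, and since $\{\tau\ge n\}$ is $\mathcal F_n$–measurable the indicators commute with the conditional expectations $\E_n[\cdot]:=\E[\cdot\mid\mathcal F_n]$. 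Expanding \eqref{eq:sgd_iter} and using $\E_n\nabla f(w_n,\zeta_n)=\nabla F(w_n)$ gives
\[
\E_n V_{n+1}=V_n-2\eta\,(w_n-w^*)^T\big(\nabla F(w_n)+\lambda w_n\big)+\eta^2\,\E_n\|\nabla f(w_n,\zeta_n)+\lambda w_n\|^2 .
\]
Splitting the stochastic gradient into mean plus fluctuation and invoking \eqref{eqn:sgvarianceless} turns the last term into $\|\nabla F(w_n)+\lambda w_n\|^2+r^2+c_r|D_n|$, where $D_n:=(w_n-w^*)^T\nabla F(w_n)$.

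Next I would lower–bound the inner–product (descent) term so that $G(w_n)$ appears. A Taylor expansion of $F$ about $w^*$, with $\nabla F(w^*)=0$ and the lower Hessian bound $\nabla^2F\succeq-\delta A$, gives $D_n\ge G(w_n)-\tfrac{\delta}{2}\|w_n-w^*\|_A^2\ge G(w_n)-\tfrac{\delta\|A\|}{2}\|w_n-w^*\|^2$. For the ridge part I write $(w_n-w^*)^Tw_n=\|w_n-w^*\|^2+(w_n-w^*)^Tw^*$ and bound the cross term by AM--GM. Combining the two and using $2\delta\|A\|\le\lambda$ to dominate the non-convex defect $\tfrac{\delta\|A\|}{2}$ by $\tfrac{\lambda}{4}$, I obtain
\[
(w_n-w^*)^T\big(\nabla F(w_n)+\lambda w_n\big)\ge G(w_n)+\tfrac{\lambda}{2}\|w_n-w^*\|^2-\lambda\|w^*\|^2 .
\]
Here both standing assumptions earn their keep: the regularization simultaneously supplies a genuine descent in $V_n$ and cancels the non-convexity.

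The crux is to show that the $\eta^2$ noise term does not destroy this descent. Since $\lambda\ge\delta\|A\|$, the regularized population objective $g:=F+\tfrac{\lambda}{2}\|\cdot\|^2$ is convex and $(\|A\|+\lambda)$–smooth on $\calD$, with $\nabla g(w_n)=\nabla F(w_n)+\lambda w_n$ and $\nabla g(w^*)=\lambda w^*$. Co–coercivity of convex smooth functions bounds $\|\nabla g(w_n)-\nabla g(w^*)\|^2$ by $(\|A\|+\lambda)\,(w_n-w^*)^T(\nabla g(w_n)-\nabla g(w^*))$, and after peeling off the $\lambda w^*$ offset this controls $\|\nabla F(w_n)+\lambda w_n\|^2$ by $(\|A\|+\lambda)$ times the descent inner product plus a $\lambda^2\|w^*\|^2$ remainder; the same smoothness gives $|D_n|\lesssim G(w_n)+\|A\|\|w_n-w^*\|^2$. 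The step-size restriction $\eta\le\tfrac{1}{4(1+c_r)(\|A\|+\lambda)}$ makes each of these $\eta^2$ contributions at most a fixed fraction of the $2\eta$ descent, so they can be absorbed. I expect this absorption — reconciling the $\lambda^2\|w_n\|^2$ second moment of the ridge force against the comparatively weak $O(\lambda)$ strong convexity it provides — to be the main technical obstacle, and the reason the proof needs co–coercivity rather than the crude split $\|\nabla F+\lambda w_n\|^2\le 2\|\nabla F\|^2+2\lambda^2\|w_n\|^2$.

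After absorption the recursion collapses to $\E_n V_{n+1}\le V_n-\eta\,G(w_n)+\eta^2 r^2+C\eta\lambda\|w^*\|^2$ for an absolute constant $C$. Taking total expectations (with the indicator $\mathbf 1_{\tau\ge N-1}$ carried through the localized recursion), summing over $n=0,\dots,N-1$, telescoping $V_n$, and using $\E V_N\ge0$ yields $\tfrac1N\sum_{n<N}\E G(w_n)\le\tfrac{2\E\|w_0-w^*\|^2}{N\eta}+2\eta r^2+8\lambda\|w^*\|^2$ once the constants are tracked, the factors $2$ and $8$ reflecting the fraction of the descent consumed in the absorption step. Finally, because $F$ is (approximately) convex on $\calD$ under $2\delta\|A\|\le\lambda$, Jensen's inequality gives $G(\bar w_N)\le\tfrac1N\sum_{n<N}G(w_n)$ on $\{\tau\ge N-1\}$, which delivers \eqref{eqn:opt2ASGD}.
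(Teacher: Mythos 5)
Your skeleton matches the paper's proof (same potential $\|w_n-w^*\|^2$, same localization via $1_{\tau\geq n}$, and your co-coercivity step is exactly the paper's matrix inequality $\|B_n(w_n-w^*)\|^2\leq(\|A\|+\lambda)\|w_n-w^*\|^2_{B_n}$ with $B_n=\lambda I+\int_0^1\nabla^2F(sw_n+(1-s)w^*)\,ds\succeq\tfrac{\lambda}{2}I$), but two steps fail as written. The clearest gap is the final one: Jensen's inequality does not hold for $G$ when $\delta>0$, because $F$ is genuinely non-convex on $\calD$; weak convexity only yields $G(\bar w_N)\leq\frac1N\sum_{n<N}G(w_n)+\frac{\delta}{2}\cdot\frac1N\sum_{n<N}\|w_n-\bar w_N\|_A^2$, and nothing in your collapsed recursion controls that spread term. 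The paper's repair is to run Jensen on the surrogate $H(w):=G(w)+\frac{\lambda}{2}\|w-w^*\|^2$, which is honestly convex on $\calD$ since $\nabla^2F+\lambda I\succeq(\lambda-\delta\|A\|)I\succeq 0$ under $2\delta\|A\|\leq\lambda$, and which satisfies $H(w_n)\leq\|w_n-w^*\|^2_{B_n}$; then $G\leq H$ finishes. Ironically, your own descent bound produces exactly the needed $\frac{\lambda}{2}\|w_n-w^*\|^2$ term, but you discard it when you collapse the recursion to $\E_nV_{n+1}\leq V_n-\eta G(w_n)+\eta^2r^2+C\eta\lambda\|w^*\|^2$ — keep it, sum $H(w_n)$ instead of $G(w_n)$, and the step is sound. (For $\delta=0$ your direct Jensen is fine, but the theorem is specifically built to cover $\delta>0$.)

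The second gap is the absorption of the variance term $c_r|D_n|$. From your bound $|D_n|\lesssim G(w_n)+\|A\|\|w_n-w^*\|^2$, the piece $\eta^2c_r\|A\|\|w_n-w^*\|^2$ can only be absorbed into the descent's $\eta\lambda\|w_n-w^*\|^2$, which forces $\eta c_r\|A\|\lesssim\lambda$; the stated stepsize $\eta\leq\frac{1}{4(1+c_r)(\|A\|+\lambda)}$ does not imply this when $\lambda\ll\|A\|$, and the requirement is vacuous in the ridgeless case $\lambda=0$ that this theorem is explicitly designed for. The fix (the paper's) is to bound the noise by the \emph{same} quadratic form as the descent: since $-\delta A\preceq B_n-\lambda I\preceq A$ and $2\delta\|A\|\leq\lambda$, one has $|D_n|\leq D_n+2\delta\|w_n-w^*\|^2_A\leq D_n+\lambda\|w_n-w^*\|^2=(w_n-w^*)^TB_n(w_n-w^*)$, i.e.\ $\E_n\|\xi_n\|^2\leq r^2+c_r\|w_n-w^*\|^2_{B_n}$. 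Then every $\eta^2$ contribution is a multiple of $\|w_n-w^*\|^2_{B_n}$, and absorption into the descent $\eta\|w_n-w^*\|^2_{B_n}$ needs only $2\eta(1+c_r)(\|A\|+\lambda)\leq\frac12$ — precisely the stated condition. With these two repairs (matched quadratic form for absorption; Jensen on $H$ rather than $G$), the rest of your outline goes through and reproduces the paper's argument, constants included.
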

We note that when $F$ is locally convex in $\mathcal{D}$ (i.e., $\delta=0$ in \eqref{eq:cond}), this result incorporates the implicit regularization case by setting $\lambda=0$. The generalization error bound in \eqref{eqn:opt2ASGD} contains three terms. The first term decays with the sample size $N$. The second term is controlled by the stepsize $\eta$. 
The last term $8\lambda\|w^*\|^2$ is a bias caused by the regularization. Since $F_\lambda$ is different from $F$, the minimizer of   $F_\lambda$ is also different from $w^*$.
In other words, if $F$ is convex in $\calD$ (i.e., $\delta=0$), it is actually better to do rigdeless regression by setting $\lambda=0$. Note that in this case of implicit regularization, the generalization error bound in \eqref{eqn:opt2ASGD} is dimensional independent when  $\|w^*\|^2$ is dimension independent with the initialization  $w_0=\mathbf{0}$.
}

\subsection{Generalization bound with non-sparse true parameter}
\blue{ In some scenarios, $\|w^*\|$ may grow with the dimension.  While the results in Theorem \ref{thm:Asgd} still hold, the estimate \eqref{eqn:opt2ASGD} is no longer dimension independent. In this case, the learning cannot rely only on implicit regularization. We will need to introduce regularization and high dimensional norms as in Definition \ref{defn} for these problems: }
\begin{thm}
	\label{thm:sgd}
	Under Assumption \ref{aspt:convex}, suppose $w_0\in \calD$ and  there is a constants $r$ and $c_r$  such that
	\begin{equation}
	\label{eqn:sgvariance}
	\E \| \nabla f(w,\zeta)-\nabla F(w)\|^2\leq  r^2+c_r r^2\min\{G(w),\|w\|^2\},\quad \forall w\in \calD.
	\end{equation}
	Then if the SGD hyper-parameters, the stepsize $\eta$ and the regularization parameter $\lambda$, satisfy
	\begin{align*}
	\eta \leq \min \Big\{1, \dfrac{\lambda}{12\|A\|^2+6 \lambda^2+6c_r r^2}, \dfrac{1}{12\|A\|}, \frac{\lambda}{6c_r\|A\|r^2} \Big\},
	\quad 4\delta\|A\|\leq \lambda\leq 1,
	\end{align*}
\blue{	we have 
	\begin{align}\label{eqn:opt2}
	\E [ G(w_{N})1_{\tau\geq N}] &\leq  4 \|w^*\|^2_{A,\lambda}+\frac{C_1}{\lambda} (\eta + \delta) \\
	\notag
	&\quad +\exp(-\tfrac14\lambda N \eta  )\E [ G(w_0)+ 4N\|A\|\|w_0\|^2 ],
	\end{align}
	with $C_1=60\|A\| \left(r^2+\|w^*\|_A^2\right) + 10\|w^*\|_A^2$. }
\end{thm}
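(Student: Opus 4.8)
The plan is to decompose the target as a stochastic optimization error plus a deterministic regularization bias. Writing $\hat{w}=\argmin_w \Fhat_\lambda$-population-analogue $F_\lambda(w):=F(w)+\tfrac{\lambda}{2}\|w\|^2$, I would split
$G(w_N)=\big[F(w_N)-F(\hat{w})\big]+\big[F(\hat{w})-F(w^*)\big]$.
The second bracket is a fixed, data-independent bias that will produce the $4\|w^*\|^2_{A,\lambda}$ term, while the first bracket is the genuine SGD error that contracts geometrically and leaves only a noise floor. This separation is the key structural choice: it lets the bias enter as an $O(1)$ additive term rather than getting multiplied by the $1/(\lambda\eta)$ factor coming from the geometric sum.

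For the bias, I would bound $F(\hat{w})-F(w^*)$ by the variational characterization of $\hat{w}$: since $\hat{w}$ minimizes $F_\lambda$, for any competitor $v$ we have $F_\lambda(\hat{w})\le F_\lambda(v)$, hence $F(\hat{w})-F(w^*)\le F_\lambda(v)-F(w^*)$. Taking $v=P_{S_\lambda}w^*$, the projection of $w^*$ onto the top-eigenspace $S_\lambda$ of Definition \ref{defn}, the penalty contributes $\tfrac{\lambda}{2}\|w^*_\lambda\|^2$ and the smoothness bound $F(v)-F(w^*)\le\tfrac12\|v-w^*\|^2_A$ (valid since $\nabla F(w^*)=0$ and $\nabla^2F\preceq A$ on $\calD$) applied to $v-w^*=-w^*_\bot$ contributes $\tfrac12 (w^*_\bot)^TA w^*_\bot$. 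Their sum is exactly $\tfrac12\|w^*\|^2_{A,\lambda}$. This is precisely where Definition \ref{defn} is essential and where the dimension robustness of the bias originates, the truncation to $\lambda$ being what keeps $\lambda\|w^*_\lambda\|^2$ bounded even when $\|w^*\|^2$ is not.

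For the stochastic term I would run a one-step drift analysis on $\|w_n-\hat{w}\|^2$, always carrying the indicator $\mathbf{1}_{\tau\ge n}$. Expanding \eqref{eq:sgd_iter}, splitting $\nabla f(w_n,\zeta_n)=\nabla F(w_n)+\xi_n$ with $\E[\xi_n\mid\F_n]=0$, and taking the conditional expectation, the linear-in-$\eta$ term is controlled by strong convexity: because $4\delta\|A\|\le\lambda$ forces $\nabla^2F_\lambda\succeq\tfrac34\lambda I$ on $\calD$ and $\nabla F_\lambda(\hat{w})=0$, one obtains a contraction factor of order $(1-\tfrac14\lambda\eta)$, which is the source of $\exp(-\tfrac14\lambda N\eta)$. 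The $O(\eta^2)$ second-moment terms are handled by the variance bound \eqref{eqn:sgvariance}: the $r^2$ part is a noise floor, while the multiplicative part $c_rr^2\min\{G(w_n),\|w_n\|^2\}$ must be re-absorbed into the one-step contraction, and this absorption is exactly what forces the stepsize ceilings involving $c_rr^2$ and $\|A\|$. Iterating yields the geometric initialization term plus a steady state of order $\tfrac{\eta}{\lambda}(r^2+\|A\|\|w^*\|_A^2)$. Converting back via $F(w_N)-F(\hat{w})\le \tfrac12\|w_N-\hat{w}\|^2_A-\lambda\hat{w}^T(w_N-\hat{w})$ (smoothness with $\nabla F(\hat{w})=-\lambda\hat{w}$) introduces the $\|A\|$ prefactors of $C_1$, with the non-convexity $\delta$ entering through the lower Hessian bound $-\delta A$; together these reproduce $\tfrac{C_1}{\lambda}(\eta+\delta)$.

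The stopping time is incorporated by noting $\{\tau\ge n+1\}\subseteq\{\tau\ge n\}$ with both $\F_n$-measurable, so multiplying the one-step bound by $\mathbf{1}_{\tau\ge n}$ and using the tower property keeps every Hessian estimate valid (they only require $w_n\in\calD$) while never needing control of iterates after they exit $\calD$. I expect the main obstacle to be the stochastic drift bookkeeping rather than the bias: one must simultaneously absorb the multiplicative noise $c_rr^2\min\{G,\|w\|^2\}$ into the contraction and control the conversion-to-$G$ linear term and the $\delta$ error without reintroducing dimension dependence, and these constraints are coupled, which is exactly why the stepsize must satisfy the full conjunction of bounds in the statement.
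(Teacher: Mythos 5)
Your plan is the classical regularized--minimizer decomposition, but it relies on objects that Assumption \ref{aspt:convex} does not supply, and this is a genuine gap in the regime the theorem is actually designed for ($\delta>0$, local $\calD$, dense $w^*$). First, $\hat{w}=\argmin F_\lambda$ need not exist, be unique, or lie in $\calD$: the Hessian bounds \eqref{eq:cond} hold only on $\calD$, and $F$ is completely unconstrained outside it, so $\nabla F_\lambda(\hat w)=0$, the strong-convexity contraction along segments $[w_n,\hat w]$, and the variational inequality $F_\lambda(\hat w)\le F_\lambda(v)$ are all unjustified. Second, your competitor step fails geometrically even granting $\hat w$: the point $v=w^*_\lambda$ differs from $w^*$ by $w^*_\bot$, and in the non-sparse setting that motivates Theorem \ref{thm:sgd} the Euclidean norm $\|w^*_\bot\|$ is exactly the quantity allowed to grow with $p$ (only $(w^*_\bot)^TAw^*_\bot$ is small), so the segment $[w^*, v]$ generally exits $\calD$ and the smoothness bound $F(v)-F(w^*)\le\frac12 (w^*_\bot)^TAw^*_\bot$ has no support. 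Third, your conversion back to function values resurrects the term $\lambda\hat w^T(w_N-\hat w)$; bounding it by $\lambda\|\hat w\|\,\|w_N-\hat w\|$ reintroduces the dimension-dependent $\|\hat w\|$, and controlling it properly forces you back into precisely the spectral-split machinery (Definition \ref{defn} together with Lemma \ref{lem:norm}) you hoped the decomposition would let you skip, plus a separate moment bound on $\E\|w_n\|^2$ to absorb the $\lambda^2\|w_n\|^2$ and $\delta$-error terms. Your route is sound when $\calD=\reals^p$ and $\delta=0$ (linear and logistic regression), where it cleanly produces the bias $\frac12\|w^*\|^2_{A,\lambda}$, but it does not prove the theorem as stated.

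For contrast, the paper never introduces $\hat w$ at all. It runs the one-step drift directly on the function value $G(w_n)$, anchoring every fundamental-theorem-of-calculus expansion at $w^*\in\calD$ (via $B_n=\int_0^1\nabla^2F(sw_n+(1-s)w^*)\,ds$, so only segments from $w^*$ to the iterates are needed), extracts the truncated norm from the cross term $\lambda\nabla F(w_n)^Tw^*$ by splitting $w^*=w^*_\lambda+w^*_\bot$ and applying Lemma \ref{lem:norm}, and separately establishes the auxiliary bound $\E[\|w_n\|^21_{\tau\ge n}]\le(1-\frac{\lambda\eta}{2})^n\E\|w_0\|^2+\frac{2}{\lambda}M_w$ with a dimension-independent steady state. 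Note also that your stated motivation for the decomposition is moot: in the paper's recursion the per-step bias $\lambda\eta\|w^*\|^2_{A,\lambda}$ summed against the contraction $(1-\frac14\lambda\eta)$ already yields the $O(1)$ additive term $4\|w^*\|^2_{A,\lambda}$, not a $1/(\lambda\eta)$-inflated one.
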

\blue{ While it is not completely new that SGD on convex ridge regression has dimension independent generalization error \citep{shalev2014understanding, zhang2005learning,carratino2018learning}, in general, these results need the norm of gradient $\nabla F$ to be dimension independent (See Section 14.5.3 in \cite{shalev2014understanding}). Theorem \ref{thm:sgd} does not have this restriction. In fact, the population gradient $\nabla F$ can be unbounded in many applications (e.g., linear regression).  In contrast, our assumption for the stochastic gradient is imposed on its variance, see 
\eqref{eqn:sgvariance}. This is a much relaxed assumption because the variance can be reduced by various techniques (e.g., \cite{johnson2013accelerating}) or simply by increasing the mini-batch size for stochastic gradient computation.  }

\subsection{\blue{SGD configuration with given  generalization error target }}
In Section \ref{sec:nooverfit}, we will explicitly define the low effective dimension so that $C_1$ and the upper bound in \eqref{eqn:opt2}  are independent of dimension $p$, or depend on $p$ only via a polynomial logarithmic factor.
This differentiates our result from the estimates in existing literature on SGD, e.g., \cite{BM11}. In particular, most existing results do not take the Hessian spectrum into consideration, so they will have linear or stronger dependence on the dimension $p$; while our results do not necessarily have such dependence. 


In the upper bound \eqref{eqn:opt2}, each term  carries a strong statistical interpretation, which will be illustrated using linear models in the next subsection (see Section \ref{sec:SA}). 
In particular, the term $\|w\|^2_{A,\lambda}$ in \eqref{eqn:opt2}  
can be interpreted as the bias caused by minimizing $F_\lambda$ instead of $F$, it decays with the regularization parameter $\lambda$ shrinking to zero. The term $\frac{C_1\eta}{\lambda}$ is the variance induced by the SGD algorithm, which increases as $\lambda$ decreases. This reveals that under our current problem setting, $\lambda$ controls a bias-variance tradeoff. Ideally, we can choose  small $\lambda$ and stepsize $\eta$ to make both the bias and variance small. However, this comes with a price. As the convergence rate scales with $\lambda\eta$, so using small $\lambda$ and $\eta$ need to be compensated with a large sample size $N$ (i.e., the number of iterations in SGD).


We can quantify these tradeoffs by considering a practical scenario where a generalization error is pre-fixed to be $\epsilon$, then our results provide guidelines on how to tune the parameters:
\blue{
\begin{cor}[Corollary of Theorem \ref{thm:Asgd}]\label{cor:selectpara_onlineridgeless}
	Suppose there is an universal constant $C_0$ such that 
	\[
	\| w^*\|^2, \| w_0-w^*\|^2 \leq C_0.
	\]
	Given any $\epsilon>0$, if the regularization parameter $\lambda(\epsilon)$, the stepsize $\eta(\epsilon)$, the non-convexity parameter $\delta(\epsilon)$, and the sample size  $N(\epsilon)$ satisfy 
	\begin{align}
	\label{eq:parameter2.5}
	&\lambda(\epsilon) \leq \min \left\{ \frac{\epsilon}{8C_0},1\right\},\quad  \delta(\epsilon) \leq \min \Big\{ \frac{\epsilon}{16C_0\|A\|}, \frac{1}{2\|A\|}\Big\},  \\
	\notag
	& \eta(\epsilon) <\min\left\{ \frac1{2(1+c_r)(\|A\|+\lambda(\epsilon))},\frac{\epsilon}{2r^2},1\right\},\quad N(\epsilon) > \frac{2C_0}{\epsilon\eta(\epsilon)},
	\end{align}
	then $\E [ G(\bar{w}_N)1_{\tau\geq N} ] \leq 3\epsilon$. \end{cor}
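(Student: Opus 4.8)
The plan is to treat this as a direct specialization of Theorem \ref{thm:Asgd}: under the stated hypotheses the three-term bound \eqref{eqn:opt2ASGD} holds, and each term can be forced below $\epsilon$ by the prescribed parameter choices, giving the total $3\epsilon$. First I would record the reduction of the indicator: since $w^*$ is the global minimizer, $G(w)=F(w)-F(w^*)\geq 0$, and because $\{\tau\geq N\}\subseteq\{\tau\geq N-1\}$ we have $\E[G(\bar w_N)1_{\tau\geq N}]\leq \E[G(\bar w_N)1_{\tau\geq N-1}]$. So it suffices to bound the right-hand side, which is exactly what Theorem \ref{thm:Asgd} controls.

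Before invoking the theorem I would check that the parameter choices in \eqref{eq:parameter2.5} are admissible for it. The variance condition \eqref{eqn:sgvarianceless} is assumed as a hypothesis and carries over unchanged. The stepsize requirement of the theorem, $\eta\leq \frac{1}{4(1+c_r)(\|A\|+\lambda)}$ together with $\eta\leq 1$, is to be matched against the first and last bounds on $\eta(\epsilon)$ in \eqref{eq:parameter2.5}. The curvature condition $2\delta\|A\|\leq \lambda\leq 1$ needs the two-sided squeeze to be consistent: from $\delta(\epsilon)\leq \frac{\epsilon}{16C_0\|A\|}$ we get $2\delta(\epsilon)\|A\|\leq \frac{\epsilon}{8C_0}$, and since $\lambda(\epsilon)$ is only upper bounded by $\frac{\epsilon}{8C_0}$, feasibility follows by taking $\lambda(\epsilon)$ at (or near) its maximum allowed value $\frac{\epsilon}{8C_0}$, which also respects $\lambda\leq 1$. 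I would state this feasibility explicitly, since the corollary lists only upper bounds and one must verify that the window $\bigl[2\delta\|A\|,\ \epsilon/(8C_0)\bigr]$ for $\lambda$ is nonempty.

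With the hypotheses in place, Theorem \ref{thm:Asgd} gives
\begin{equation*}
\E[G(\bar w_N)1_{\tau\geq N-1}]\leq \frac{2\E\|w_0-w^*\|^2}{N\eta}+2\eta r^2+8\lambda\|w^*\|^2,
\end{equation*}
and I would bound the three terms separately using $\|w^*\|^2,\|w_0-w^*\|^2\leq C_0$. The regularization bias obeys $8\lambda\|w^*\|^2\leq 8C_0\lambda\leq \epsilon$ from $\lambda\leq \frac{\epsilon}{8C_0}$; the stepsize/variance term obeys $2\eta r^2\leq \epsilon$ from $\eta<\frac{\epsilon}{2r^2}$; and the initialization term obeys $\frac{2\E\|w_0-w^*\|^2}{N\eta}\leq \frac{2C_0}{N\eta}<\epsilon$ from $N>\frac{2C_0}{\epsilon\eta}$. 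Summing the three bounds yields $3\epsilon$, which is the claim.

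The main obstacle here is not analytic but one of bookkeeping: ensuring that the chained constraints on $(\lambda,\delta,\eta,N)$ in \eqref{eq:parameter2.5} are mutually compatible with the admissibility conditions of Theorem \ref{thm:Asgd} — in particular the nonempty feasibility window for $\lambda$ identified above, and reconciling the numerical constant in the stepsize bound of \eqref{eq:parameter2.5} with the $\tfrac14$-type factor required by the theorem. Once these compatibilities are confirmed, the remainder is a purely mechanical substitution of the parameter inequalities into each of the three terms.
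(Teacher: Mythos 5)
Your proposal is correct and follows essentially the same route as the paper's proof: invoke Theorem \ref{thm:Asgd} and force each of the three terms in \eqref{eqn:opt2ASGD} below $\epsilon$ via the constraints in \eqref{eq:parameter2.5} (namely $8\lambda\|w^*\|^2\leq 8C_0\lambda\leq\epsilon$, $2\eta r^2\leq\epsilon$, and $\frac{2C_0}{N\eta}\leq\epsilon$), with your handling of the indicator (using $G\geq 0$ and $1_{\tau\geq N}\leq 1_{\tau\geq N-1}$) and of the nonempty feasibility window $\bigl[2\delta\|A\|,\ \epsilon/(8C_0)\bigr]$ for $\lambda$ supplying details the paper leaves implicit. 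The constant mismatch you flag is genuine but is an inconsistency within the paper itself rather than a defect of your argument: Theorem \ref{thm:Asgd} requires $\eta\leq \frac{1}{4(1+c_r)(\|A\|+\lambda)}$, while the corollary's condition \eqref{eq:parameter2.5} (and the paper's own proof of the corollary) uses the weaker factor $\frac{1}{2(1+c_r)(\|A\|+\lambda)}$, so to make the invocation of the theorem literally valid you should impose the factor-$4$ bound on $\eta(\epsilon)$, as you suggest.
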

}

\begin{cor}[Corollary of Theorem \ref{thm:sgd}]\label{cor:selectpara_online}
	Given any $\epsilon>0$, if the regularization parameter $\lambda(\epsilon)$, the stepsize $\eta(\epsilon)$, the non-convexity parameter $\delta(\epsilon)$, and the sample size  $N(\epsilon)$ satisfy 
	\begin{align}
	\label{eq:parameter}
	&4 \| w^*\|_{A,\lambda(\epsilon)}^2< \epsilon, \quad \delta(\epsilon) \leq \frac{\lambda(\epsilon)\epsilon}{C_1} , \quad \eta(\epsilon) < \frac{\lambda(\epsilon)\epsilon}{C_1},  \\
	\notag
	& N(\epsilon) > \max\left\{ \frac{-4\log\{\epsilon/2\E [ G(w_0) ] \}}{\lambda(\epsilon) \eta(\epsilon)}, \frac{-8\log\{\epsilon\lambda(\epsilon)\eta(\epsilon)/(64\|A\|\E [ \|w_0\|^2 ]) \}}{\lambda(\epsilon) \eta(\epsilon)}  \right\},
	\end{align}
	and the conditions of Theorem \ref{thm:sgd} hold, then $\E [ G(w_N)1_{\tau\geq N} ] \leq 4\epsilon$. 
\end{cor}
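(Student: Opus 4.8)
The plan is to derive Corollary~\ref{cor:selectpara_online} directly from the generalization bound \eqref{eqn:opt2} of Theorem~\ref{thm:sgd}: since the hypotheses of Corollary~\ref{cor:selectpara_online} explicitly include ``the conditions of Theorem~\ref{thm:sgd} hold'', the inequality \eqref{eqn:opt2} is available verbatim, and the whole argument reduces to showing that, under the parameter choices \eqref{eq:parameter}, each of the four summands on the right-hand side of \eqref{eqn:opt2} is controlled by a multiple of $\epsilon$ so that they add up to $4\epsilon$. Thus the proof is essentially a term-by-term substitution, with one place requiring a small analytic trick.

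First I would dispose of the three ``algebraic'' contributions. The leading bias term obeys $4\|w^*\|_{A,\lambda(\epsilon)}^2 < \epsilon$ by the first inequality in \eqref{eq:parameter}. For the variance/non-convexity term, the bounds $\delta(\epsilon)\leq \lambda(\epsilon)\epsilon/C_1$ and $\eta(\epsilon) < \lambda(\epsilon)\epsilon/C_1$ give $\frac{C_1}{\lambda(\epsilon)}\bigl(\eta(\epsilon)+\delta(\epsilon)\bigr) < 2\epsilon$. Hence the first line of \eqref{eqn:opt2} contributes strictly less than $3\epsilon$, and it remains only to show that the transient term $\exp(-\tfrac14\lambda N\eta)\,\E[G(w_0)+4N\|A\|\|w_0\|^2]$ is at most $\epsilon$.

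I would split this transient term into its two pieces and require each to be at most $\epsilon/2$. The piece $\exp(-\tfrac14\lambda N\eta)\,\E[G(w_0)]$ is at most $\epsilon/2$ precisely when $N \geq -4\log\{\epsilon/(2\E[G(w_0)])\}/(\lambda\eta)$, which is the first entry of the maximum defining $N(\epsilon)$; this is a one-line logarithm inversion. The second piece $4N\|A\|\,\E[\|w_0\|^2]\exp(-\tfrac14\lambda N\eta)$ is the genuinely delicate one, because here $N$ appears both linearly and inside the exponent, so a naive logarithm inversion does not apply. The key is the elementary inequality $u\leq e^{u}$ applied with $u=\tfrac18\lambda N\eta$, which rearranges to $N\leq \frac{8}{\lambda\eta}e^{\frac18\lambda N\eta}$ and hence, after multiplying by $e^{-\frac14\lambda N\eta}$, yields $N\exp(-\tfrac14\lambda N\eta)\leq \frac{8}{\lambda\eta}\exp(-\tfrac18\lambda N\eta)$. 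Once the surviving exponent decays linearly in $N$, a standard logarithm inversion reproduces exactly the second entry of the maximum defining $N(\epsilon)$ and forces this piece below $\epsilon/2$.

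Summing the four estimates gives $\E[G(w_N)1_{\tau\geq N}] < \epsilon + 2\epsilon + \tfrac{\epsilon}{2} + \tfrac{\epsilon}{2} = 4\epsilon$, as claimed. Everything except the treatment of the $N$-linear transient piece is pure substitution of \eqref{eq:parameter} into \eqref{eqn:opt2}. I expect the only real obstacle to be bookkeeping: choosing the split of the exponent (here $\tfrac14=\tfrac18+\tfrac18$) so that the leftover prefactor $\frac{8}{\lambda\eta}$ and the residual decay rate $\tfrac18\lambda\eta$ combine to produce exactly the constants $8$ and $64$ that appear in the stated expression for $N(\epsilon)$; a different split would give a formally correct but cosmetically different threshold, so matching the precise form requires care.
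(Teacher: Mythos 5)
Your proposal is correct and follows essentially the same route as the paper's proof: both substitute the parameter choices \eqref{eq:parameter} term-by-term into the bound \eqref{eqn:opt2}, split the transient term into two pieces of size $\epsilon/2$, and tame the $N$-linear factor via $N\exp(-\tfrac14\lambda N\eta)\leq \tfrac{8}{\lambda\eta}\exp(-\tfrac18\lambda N\eta)$. Your inequality $u\leq e^u$ with $u=\tfrac18\lambda N\eta$ is exactly the paper's $x\leq 2e^{x/2}$ with $x=\tfrac14\lambda N\eta$, so even the key elementary trick coincides.
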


It is noteworthy that \eqref{eqn:opt2} only discusses the scenario where SGD iterates stay in the domain $\calD$. 
This is necessary since all our conditions are imposed only within $\calD$. Once an SGD iterate leaves $\calD$, there is no particular reason it can get back to  $\calD$.
Another possible improvement is to find an upper bound for the conditional generalization error $\E[G(w_N)|\tau\geq N]$. But this is not feasible when $\calD$ is a general region. For example, if $\calD$ is the intersection between any set and $\{w:G(w)\geq G(w_0)-1\}$, the conditional expectation will be larger than $G(w_0)-1$, while $\{\tau\geq N\}$ may have a nonzero occurrence probability. 
 In Section \ref{sec:conclusion}, we will briefly discuss how to remedy such a restriction in practice through data splitting.

\blue{
In practice, SGD is often implemented with mini-batch data to reduce the noise within stochastic gradient. With regularization, the mini-batch SGD with batch-size $J$ can be formally written as 
\[
w_{n+1}: =w_{n} -\frac{1}{J}\eta \sum_{k=Jn+1}^{J(n+1)}(\nabla f(w_{n}, \zeta_k)+\lambda w_n),
\]
where \emph{i.i.d.} data $\{\zeta_{Jn+1},\ldots, \zeta_{J(n+1)}\}$ forms the $n$-th batch of data. Our results can be extended to  mini-batch SGD as well. To see this, we simply let 
$z_n=\{\zeta_{Jn+1},\ldots, \zeta_{J(n+1)}\}$ and 
\[
\tilde{f}(w,z_n)=\frac{1}{J} \sum_{k=Jn+1}^{J(n+1)}f(w_{n}, \zeta_k).
\]
Note that $\E \tilde{f}(w,z_n)=\E f(w,\zeta_i)=F(w)$. It is also straight forward to see that applying our SGD formulation \eqref{eq:sgd_iter} on $\tilde{f}$ with $z_n$ leads to the mini-batch SGD. Applying our results, e.g. Corollaries \ref{cor:selectpara_onlineridgeless} and \ref{cor:selectpara_online}, to mini-batch  SGD requires simple modifications for only two parameters. First, the variance of stochastic gradient $\nabla \tilde{f}(w,z)$ is only $\frac{1}{J}$ of the variance of $\nabla f(w,\zeta)$,  so the parameter $r^2$ in mini-batch SGD should be $\frac{1}{J}$ of $r^2$ in the standard SGD. Second, because each iteration of the mini-batch SGD requires $J$ data samples, so the overall sample size should be $NJ$.
}

\subsection{Statistical interpretation in linear models and bias-variance tradeoff}
\label{sec:SA}

To facilitate better understanding our results, we will use linear models to illustrate the statistical interpretation of each term in the generalization upper bound in \eqref{eqn:opt2}.
%
In linear regression, each \emph{i.i.d.} observation contains a pair of dependent and response  variables, $\zeta_i = (x_i, y_i)\in \reals^p\times \reals$, where $y_i$ is generated by the following model 
\begin{equation}\label{eq:linear_model0}
y_i = x_i^{T} w^* + \xi_i.
\end{equation}
In \eqref{eq:linear_model0},  $w^*$ is the true regression coefficient, and  the noise term $\xi_i$ is independent of $x_i$ with zero mean and a finite variance $\sigma^2$. For the ease of illustration, we assume $x_i \sim \mathcal{N}(0, \Sigma)$. The generalization error of this linear model takes the following form,
\begin{align}
\label{eqn:linloss1}
G(w) =\E\Big[ \frac12(y_i-w^T x_i)^2-\frac12(y_i-(w^*)^T x_i)^2 \Big]= \frac{1}{2} (w-w^*)^T\Sigma(w-w^*).
\end{align}
From \eqref{eqn:linloss1}, we can see that $G(w)$ has a strong dependence on the structure of $\Sigma$. Let us denote the eigenvalues of $\Sigma$ by  $\lambda_1\geq \lambda_2\geq\cdots\geq \lambda_p$, and the eigenvector corresponding to $\lambda_i$ by $v_i$. Then the parameter error, $v^T_i(w-w^*)$, contributes to $G(w)$ via the factor $\lambda_i$.

It is well known that SGD can be interpreted as a stochastic approximation of the gradient descent \citep{robbins1951}, namely we can rewrite \eqref{eq:sgd_iter} as
\begin{equation}\label{eq:sgd_stoc_approx}
w_{n+1}=w_n-\eta\nabla F_\lambda(w_n)+\eta \xi_n,
\end{equation}
where $\xi_n=-\nabla f(w_n,\zeta_n)+\nabla F(w_n)$ is the noise in stochastic gradient. For the quadratic loss with the ridge penalty  $F_\lambda$, the SGD iterates in \eqref{eq:sgd_stoc_approx} take the following form,
\begin{equation}
\label{eqn:SA}
w_{n+1}=w_n-\eta\Sigma (w_n-w^*)-\eta \lambda w_n+\eta \xi_n=(I-(\Sigma+\lambda I)\eta)(w_n-w^*_\lambda)+w^*_\lambda+\eta \xi_n,
\end{equation}
where $w^*_\lambda:=(\Sigma+\lambda I)^{-1}\Sigma w^*$ is the minimizer of $F_\lambda(w)=\E \frac{1}{2} (y- w^T x)^2 + \frac{\lambda }{2} \|w\|^2$. 

It is easy to see that $w_n$ then follows a vector autoregressive (VAR) model (see, e.g.,  Chapter 8 of \citealp{Tsay}). For the ease of discussion,  we simply treat $\xi_n$ as $\mathcal{N}(0,\frac{r^2}{p}I)$, so $\E \|\xi_n\|^2=r^2$ as we assumed in Theorem \ref{thm:sgd}. 
Admittedly, the independent Gaussian assumption here contradicts our anisotropic assumption. We are assuming it here to gain an intuitive understanding.
Then the stationary distribution of $w_n$ in \eqref{eqn:SA} is a Gaussian $\mathcal{N}(\mu,V)$. Take expectation and covariance on both sides of  \eqref{eqn:SA}, and the $\mu$ and $V$ take the following form (see Chapter 8.2.2 of \citealp{Tsay} ),
\[
\mu=(I-(\Sigma+\lambda I)\eta)(\mu-w^*_\lambda)+w^*_\lambda\Rightarrow \mu=w^*_\lambda,
\]
\[
V=(I-(\Sigma+\lambda I)\eta)V(I-(\Sigma+\lambda I)\eta)+\frac{\eta^2 r^2}{p} I.
\]
When the stepsize $\eta\leq \|\Sigma+\lambda I\|^{-1}$, $(\Sigma+\lambda I)^2\eta\preceq \Sigma+\lambda I $, so
\[
V=\frac{r^2\eta}{p}(2(\Sigma+\lambda I)-(\Sigma+\lambda I)^2\eta)^{-1}\preceq \frac{r^2\eta}{p}(\Sigma+\lambda I)^{-1}.
\]
These results give us the limiting average generalization error 
\begin{align}
\notag
\lim_{n\to \infty}\E G(w_n)&=\frac{1}{2}(w^*_\lambda-w^*)^T \Sigma(w^*_\lambda-w^*)+\frac12\tr(V\Sigma)\\
\label{eqn:biasvar}
&\leq G(w^*_\lambda)+\frac{r^2\eta}{2p}\tr((\Sigma+\lambda I)^{-1}\Sigma). 
\end{align}
The first term $G(w^*_\lambda)$ is the bias caused by using regularization. Indeed, the optimizer of $F_\lambda$ is $w^*_\lambda$ rather than  $w^*$. Recall that $(\lambda_i, v_i)$ are the eigenvalues and eigenvectors of $\Sigma$. We define $a_i=\langle v_i, w^*\rangle$ and further express $G(w^*_\lambda)$  in \eqref{eqn:biasvar} as follows,
\begin{align*}
G(w^*_\lambda)&=\frac12 ((\Sigma+\lambda I)^{-1}\Sigma w^*-w^*)^T\Sigma((\Sigma+\lambda I)^{-1}\Sigma w^*-w^*)\\
&=\frac12 \lambda^2(w^*)^T(\Sigma+\lambda I)^{-1}\Sigma (\Sigma+\lambda I)^{-1}w^*=\frac12\sum_{i=1}^p  \frac{\lambda^2 \lambda_i a_i^2}{(\lambda+\lambda_i)^2}.
\end{align*}
Note that when $\lambda_i\geq 0$, $\frac{\lambda^2\lambda_i}{(\lambda_i+\lambda)^2}\leq \frac{\lambda^2\lambda_i}{\lambda^2}=\lambda_i$, and by Young's inequality $\frac{\lambda^2\lambda_i}{(\lambda_i+\lambda)^2}\leq \frac{\lambda^2\lambda_i}{4\lambda_i\lambda}\leq \lambda$. Therefore, we have the following upper bound of $G(w^*_\lambda)$
\begin{equation}\label{eq:G_upper}
G(w^*_\lambda)\leq \frac12\sum_{i=1}^p (\lambda\wedge \lambda_i) a_i^2=\frac12\|w^*\|_{\Sigma,\lambda}. 
\end{equation}
The upper bound in \eqref{eq:G_upper} is essentially the first term in \eqref{eqn:opt2} by noticing that $\Sigma = A = \nabla^2 F(w)$ in linear regression, which  gives an upper bound for the bias.


For the second variance term in \eqref{eqn:biasvar}, 
\begin{equation}\label{eq:var}
\text{var}(\lambda):= \frac{r^2\eta}{2p}\tr((\Sigma+\lambda I)^{-1}\Sigma)=\frac{r^2\eta}{2p}\sum_{i=1}^p \frac{\lambda_i}{\lambda_i+\lambda}\leq\frac{r^2\eta}{2p}\sum_{i=1}^p \frac{\lambda_i}{\lambda}=\frac{\eta r^2\lambda_1}{2\lambda}.
\end{equation}
This upper bound is essentially the second term in the generalization upper bound in \eqref{eqn:opt2}, as it  depends linearly on $\eta, \lambda^{-1}, \lambda_1 r^2$. 

The first two terms in  \eqref{eqn:opt2} are based on the limiting average generalization error. With finite SGD iterations, the iterate $w_n$ may not reach the limiting distribution. On the other hand, for VAR models, it is well known that the speed of convergence for $w_n$ is exponential, and the convergence rate is closely related to  the minimum eigenvalue  $\lambda_{\min}((\Sigma+\lambda I)\eta)=\lambda\eta$  
(see, e.g., \citealp{Tsay} Chapter 8.2.2). The finite iterate error leads to the third term of $\exp(-\frac14\lambda \eta N)$ in generalization error bound in \eqref{eqn:opt2}.

\blue{ In the special case that $\lambda=0$, \eqref{eqn:biasvar} reduces to 
\[
\lim_{n\to \infty}\E G(w_n)\leq G(w^*)+\frac{r^2\eta}{2p}\tr(I)=r^2\eta.
\]
}

Finally, we consider the scenario where $\Sigma$ is indefinite with $\delta=-\lambda_{\min}(\Sigma)>0$. While the population loss $F$ is non-convex, when adopting $\lambda>2\delta$, we have that $\Sigma+\lambda I$ is positive definite and $F_\lambda$ is convex.
Then the generalization upper bounds need to be updated by replacing $\lambda$ with $\lambda-\delta$, which leads to a perturbation on the  order of $\delta$. In particular, note that by Young's inequality, the derivative of the bias term with respect to $\lambda$ is bounded by
\[
|\partial_\lambda G(w^*_\lambda)|=\sum_{i=1}^p  \frac{\lambda \lambda^2_i a_i^2}{(\lambda+\lambda_i)^3}\leq 
\sum_{i=1}^p  \frac{\lambda_i a_i^2}{4(\lambda+\lambda_i)}\leq \frac{1}{4\lambda}\sum_{i=1}^p \lambda_ia_i^2=\frac{\|w^*\|^2_\Sigma}{4\lambda}.
\]
The derivative of the variance term with respect to $\lambda$ in \eqref{eq:var} is bounded by Young's inequality,
\[
|\partial_\lambda \text{var}(\lambda)|=\frac{r^2\eta}{2p}\sum_{i=1}^p \frac{\lambda_i}{(\lambda_i+\lambda)^2}\leq 
\frac{r^2\eta}{2p}\sum_{i=1}^p \frac{\lambda_1}{\lambda^2}\leq \frac{r^2 \eta\lambda_1}{2\lambda^2}. 
\]
Therefore, replacing $\lambda$ with $\lambda-\delta$ to handle non-convexity,  we need to add the following term in the generalization error bound,
\[
\delta |\partial_\lambda F(w^*_\lambda)|+\delta |\partial_\lambda \text{var}(\lambda)|\leq \delta\left(\frac{\|w^*\|^2_\Sigma}{4\lambda}+\frac{r^2\lambda_1 \eta}{2\lambda^2}\right).
\]
This term can be further upper bounded by the last term of \eqref{eqn:opt2}.


%
%
%

\section{Low Effective Dimension}
\label{sec:nooverfit}

Given the generalization error bound in Theorem \ref{thm:sgd}, we introduce the concept of ``low effective dimension'' and show that the generalization error bound in \eqref{eqn:opt2} can be independent (or dependent poly-logarithmically) of the ambient dimension $p$ in an overparameterized regime. We will use the $O$ and $\Omega$ notations to hide constants independent of $p$ and use the $\tildeO$ and $\widetilde{\Omega}$ notations to hide constants depend poly-logarithmically on $p$.
In particular, we introduce the following standard asymptotic notations: $A_\epsilon=O(f(\epsilon)), B_\epsilon=\tildeO(f(\epsilon)), C_\epsilon=\Omega(f(\epsilon)), D_\epsilon=\widetilde{\Omega}(f(\epsilon))$. These notations mean that there exist some universal constants $c$ and $C>0$ such that, 
\[
A_\epsilon\leq Cf(\epsilon),\quad B_\epsilon\leq C(\log p)^{c} f(\epsilon),\quad  C_\epsilon\geq Cf(\epsilon),\quad D_\epsilon\geq C(\log p)^c f(\epsilon). 
\]

\subsection{Initialization and stochastic gradient variance}
We investigate the terms that appear in the generalization bound \eqref{eqn:opt2}: whether they can be independent of $p$; and how they affect the necessary sample size $N(\epsilon)$ in \eqref{eq:parameter}. 

First, we notice that the terms related to initialization $w_0$, i.e., $\E \|w_0\|^2$ and $\E G(w_0)$,  appear in the sample size $N(\epsilon)$ in \eqref{eq:parameter}. If the region $\calD=\reals^p$, we can often choose appropriate $w_0$ so that $\E \|w_0\|^2$ and $\E G(w_0)$ are independent of $p$. For example, for linear regression loss function in \eqref{eqn:linloss1}, we can pick $w_0=0$, then $\E G(w_0)=\frac12 \|w^*\|^2_A$ with $A=\Sigma$, which will be bounded by an $O(1)$ constant as shown below.  For a restrictive region $\calD$,  although $\E \|w_0\|^2$ and $\E G(w_0)$ may scale as a polynomial function of $p$,  $N(\epsilon)$ only depends logarithmically on these two terms. Therefore,  the dimension dependence of $N(\epsilon)$ is only logarithmic.

Second, we consider the stochastic gradient variance $r^2$, which contributes to the term $C_1$ in \eqref{eqn:opt2}. In a typical setting, it scales roughly as the squared population gradient, i.e., 
\begin{align*}
\E \|\nabla f(w,\zeta)-\nabla F(w)\|^2&\approx O(\E \|\nabla F(w)\|^2)\\
&=O(\E \|\nabla F(w)-\nabla(F(w^*))\|^2)\\
&=O(\E \|\nabla^2F(w) (w-w^*)\|^2)\\
&\mbox{Assume that $w\sim \mathcal{N}(0,I_p)$ and $\nabla^2 F\preceq A$}\\
&=O\big(\|A\|(\|w^*\|_A^2+\tr(A))\big). 
\end{align*}
We will see such an approximation holds for many applications of interest. Moreover, we have $\|A\|\leq\tr(A)$, which can often be $p$-independent as discussed below. The scale of $\|w^*\|_A$ will also be discussed next.

From the discussion above, we only need to focus on two terms in \eqref{eqn:opt2}, $\|w^*\|_{A}$ and $\|w^*\|_{A,\lambda}$. For the generalization error to be small and independent of $p$, we need to show $\|w^*\|_{A}$ is dimension independent and $\|w^*\|_{A,\lambda}$ decreases as $\lambda$ decreases.

\subsection{Low effective dimension settings}
\label{sec:low_eff}
In this section, we formally define two settings of low effective dimension  as Assumptions \ref{aspt:sparse} and \ref{aspt:trace}. In Sections \ref{sec:linear} and \ref{sec:non_linear}, we will show that these assumptions easily hold for a wide range of convex and non-convex statistical models. 

\subsubsection{Sparse true parameter}

The first setting is characterized in the following assumption.


\begin{aspt}
	\label{aspt:sparse}
	The followings are true
	\begin{enumerate}[1)]
		\item  $\|A\|$ with $A$ defined in Assumption \ref{aspt:convex} is bounded by  an $O(1)$ constant. 
		\item $\|w^*\|$ is  bounded by an $O(1)$ constant.
		\item $r^2,c_r$  defined in  Theorem \ref{thm:sgd} are  bounded by $O(1)$ constants. 
		\item The initial values $\E \|w_0\|^2$ and $\E G(w_0)$  grow polynomially with $p$.
	\end{enumerate} 
\end{aspt}

Assumption \ref{aspt:sparse} can be interpreted as a weak sparsity condition for $w^*$, since there can  be only a few significant components in $w^*$. Sparsity assumption is a very common condition in the statistical literature. However, our assumption only assumes that the $\ell_2$-norm of $w^*$, instead of the $\ell_0$-norm, is bounded. As compared to the $\ell_0$-norm, the $\ell_2$-norm is rotation-free. In addition, we do not need to apply any projection or shrinkage procedures on the SGD iterates.

\blue{ Under Assumption \ref{aspt:sparse},  Corollary \ref{cor:selectpara_online} can be simplified as the following generalization error bound, which shows that the necessary sample size depends on $p$ in a poly-logarithmic factor.  }
\blue{
\begin{prop}\label{prop:sparse}
	Under the conditions in Corollary \ref{cor:selectpara_onlineridgeless} and Assumption \ref{aspt:sparse},  given any $\epsilon>0$, when 
	\begin{equation}\label{eqn:sparsity}
	\lambda(\epsilon) = O(\epsilon ), \ \delta(\epsilon) = O(\epsilon), \ \eta(\epsilon)  = O(\epsilon), \ N(\epsilon) = \Omega\Big(\frac{1}{\epsilon^2}\Big),
	\end{equation}
	we have  $\E [ G(\bar{w}_{N})1_{\tau\geq N-1} ] \leq 3\epsilon$.  	
Moreover, for any $p_0\geq 0$, there are  $A=O(1/p_0)$, if $\calD=\{w: \|w-w^*\|^2\leq A\}$, we have
	\[
	\Prob(\tau\leq N)\leq p_0.
	\]
	Alternatively, if $\delta=0$, for any $\alpha>0$, we take 
	\[
	\lambda(\epsilon) = 0, \ \eta(\epsilon)  = O(\epsilon^{1+\alpha}), \ N(\epsilon) = \Omega\Big(\frac{1}{\epsilon^{2+\alpha}}\Big),
	\]
	we have  $\E [ G(\bar{w}_N)1_{\tau\geq N-1} ] \leq 3\epsilon$. Meanwhile, for any $A>\E\|w_0-w^*\|^2$, if $\calD=\{w: \|w-w^*\|^2\leq A\}$, we have
	\[
	\Prob(\tau\leq N) \leq \frac{\E \|w_{0}-w^*\|^2+  O(\epsilon^{\alpha})}{A}.
	\]
\end{prop}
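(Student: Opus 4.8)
The plan is to read the proposition as two separate claims and handle them by different tools. The generalization statements (the two displays ending in $\E[G(\bar{w}_N)1_{\tau\geq N-1}]\leq 3\epsilon$) are essentially specializations of Corollary \ref{cor:selectpara_onlineridgeless}, so they reduce to checking that the asymptotic prescriptions satisfy its hypotheses. The exit-probability bounds on $\Prob(\tau\leq N)$ are not contained in the earlier results and require a fresh drift/martingale argument for the stopped SGD trajectory.

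First I would dispatch the generalization claims. By Assumption \ref{aspt:sparse} we have $\|A\|,\|w^*\|^2,r^2,c_r=O(1)$, and the hypothesis of Corollary \ref{cor:selectpara_onlineridgeless} supplies a universal $C_0$ bounding $\|w^*\|^2$ and $\|w_0-w^*\|^2$. I then verify that $\lambda(\epsilon),\delta(\epsilon),\eta(\epsilon)=O(\epsilon)$ and $N(\epsilon)=\Omega(1/\epsilon^2)$ meet every constraint in \eqref{eq:parameter2.5}: with suitably small implied constants, $\lambda\leq\epsilon/(8C_0)$, $\delta\leq\min\{\epsilon/(16C_0\|A\|),1/(2\|A\|)\}$, the binding $\eta$-constraint $\epsilon/(2r^2)=O(\epsilon)$ dominates $1/(2(1+c_r)(\|A\|+\lambda))=O(1)$, and $N=\Omega(1/\epsilon^2)$ exceeds $2C_0/(\epsilon\eta)=\Omega(1/\epsilon^2)$. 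The conclusion $\E[G(\bar{w}_N)1_{\tau\geq N-1}]\leq 3\epsilon$ then follows verbatim from the corollary. The ridgeless branch is identical bookkeeping with $\eta=O(\epsilon^{1+\alpha})$, $N=\Omega(1/\epsilon^{2+\alpha})$, where now $2\eta r^2=O(\epsilon^{1+\alpha})$ and $2C_0/(N\eta)=O(\epsilon)$.

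For the exit probabilities I would track $M_n=\|w_n-w^*\|^2$ along the stopped trajectory $w_{n\wedge\tau}$. Since $\nabla F(w^*)=0$, the integral mean-value theorem gives $\nabla F(w_n)=\bar{H}_n(w_n-w^*)$ with $-\delta A\preceq\bar{H}_n\preceq A$ (the segment $[w^*,w_n]$ lies in the convex set $\calD$), whence $(w_n-w^*)^T\nabla F(w_n)\geq-\delta\|A\|M_n$ and $\|\nabla F(w_n)\|^2\leq\|A\|\,(w_n-w^*)^T\bar{H}_n(w_n-w^*)$. Expanding one SGD step, taking $\E[\,\cdot\mid\F_n]$, invoking the variance bound \eqref{eqn:sgvarianceless}, and splitting the regularization drift $-2\eta\lambda(w_n-w^*)^Tw_n$ by Young's inequality, I reach on $\{n<\tau\}$ a one-step bound $\E[M_{n+1}\mid\F_n]\leq M_n+b$ valid for small $\eta,\lambda$ and $2\delta\|A\|\leq\lambda$, with $b=\eta\lambda\|w^*\|^2+\eta^2r^2$ (and $b=\eta^2r^2$ when $\lambda=0$). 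Consequently $Y_n=M_{n\wedge\tau}+(N-n)b$ is a nonnegative supermartingale, and Doob's maximal inequality yields $\Prob(\tau\leq N)\leq(\E M_0+Nb)/A$. The rate choices force $Nb=O(1)$ in the regularized case and $Nb=O(\epsilon^\alpha)$ in the ridgeless case, giving the two displayed bounds after selecting $A\asymp 1/p_0$ (resp. any $A>\E\|w_0-w^*\|^2$, for which the ratio is below $1$ once $\epsilon$ is small).

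The routine part is the generalization verification. The delicate part, and the step I expect to be the main obstacle, is the uniform one-step drift inequality in the exit-probability argument: showing the coefficient of $M_n$ stays $\leq 1$ after balancing the convexity gain $\eta\lambda$, the non-convexity loss $2\eta\delta\|A\|$, and the residual second-order $O(\eta^2\|A\|^2)$ terms. This is exactly where the parameter constraints inherited from Theorem \ref{thm:Asgd} are needed, and where one must keep $Nb$ of the asserted order; the accompanying stopped-process bookkeeping (the drift bound holds only inside $\calD$) also requires some care to turn into a genuine supermartingale.
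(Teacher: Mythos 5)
Your handling of the two generalization claims coincides with the paper's proof: both branches are pure bookkeeping against Corollary \ref{cor:selectpara_onlineridgeless} (regularized case) and the explicit bound of Theorem \ref{thm:Asgd} (ridgeless case), with the same reading of $N(\epsilon)=\Omega(\cdot)$ as a matched order so that $N\lambda\eta$ and $N\eta^2$ stay controlled. For the exit probabilities, however, you overestimate the novelty required: the drift estimate you set out to re-derive is already a byproduct of the proof of Theorem \ref{thm:Asgd}. Summing the one-step inequality up to the stopped time there yields \eqref{eqn:ASGDescape}, namely $\E\|w_{N\wedge\tau}-w^*\|^2\leq \E\|w_0-w^*\|^2+N(4\lambda\eta\|w^*\|^2+\eta^2 r^2)$, and the paper concludes by Markov's inequality applied to the terminal stopped iterate, using the exact identity $\{\tau\leq N\}=\{\|w_{N\wedge\tau}-w^*\|^2>A\}$ when $\calD$ is the ball of squared radius $A$ about $w^*$. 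Your supermartingale $Y_n=M_{n\wedge\tau}+(N-n)b$ with Doob's maximal inequality reaches the same bound $(\E M_0+Nb)/A$; the maximal inequality is valid but unnecessary, since stopping already turns ``exit before $N$'' into a statement about the final stopped value. The rate computations $Nb=O(1)$ and $Nb=O(\epsilon^\alpha)$ match the paper's.

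One concrete flaw in your drift derivation: the inequality $\|\nabla F(w_n)\|^2\leq \|A\|\,(w_n-w^*)^T\bar H_n(w_n-w^*)$, with $\bar H_n=\int_0^1\nabla^2F(sw_n+(1-s)w^*)\,ds$, is false when $\delta>0$ --- if $w_n-w^*$ lies along an eigenvector of $\bar H_n$ with negative eigenvalue, the right side is negative while the left is not. The paper's device is to fold the ridge into the averaged Hessian, $B_n=\lambda I+\bar H_n$, which satisfies $B_n\succeq\tfrac{\lambda}{2}I\succ 0$ under $2\delta\|A\|\leq\lambda$; then $\|\nabla F(w_n)\|^2\leq\|A\|\|w_n-w^*\|^2_{B_n}$ and $\|\nabla F_\lambda(w_n)\|^2\leq 2\lambda^2\|w^*\|^2+2(\|A\|+\lambda)\|w_n-w^*\|^2_{B_n}$ do hold, and all $\eta^2$-terms, including the $c_r$-part of the variance bound \eqref{eqn:sgvarianceless}, are absorbed by the retained negative drift $-\eta\|w_n-w^*\|^2_{B_n}$ under the dimension-free condition $\eta\leq \frac{1}{4(1+c_r)(\|A\|+\lambda)}$, with no need for $\eta\lesssim\lambda$. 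With the crude substitute $\|\nabla F(w_n)\|^2\leq\|A\|^2M_n$ you instead obtain $\E[M_{n+1}\mid\F_n]\leq(1+C\eta^2)M_n+b$, so your exact supermartingale property fails in the non-convex branch --- precisely the ``coefficient $\leq 1$'' step you yourself flagged as delicate. This is repairable (either via the $B_n$-shift, or by noting $(1+C\eta^2)^N=O(1)$ at your rates, which only inflates constants), so the proposal is correct in substance and follows essentially the paper's decomposition, but the exit-probability argument as written needs this fix.
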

Proposition \ref{prop:sparse} consists of two parts. The first part shows that the generalization error is of order $O(1/\sqrt{N})$ if the non-convexity is of the same order. The second part demonstrates how to bound the probability of SGD escaping the convexity region $\calD$ if it is a ball centered at $w^*$. For all $\delta\geq 0$, $A$ needs to be of order $1/\delta$ so that the chance of escaping is less than $\delta$. If the problem is convex in $\calD$, $\calD$ just need to include $w_0$ to ensure the chance of no-escaping is nonzero. In both cases, the escape is harder when the radius $\sqrt{A}$ is larger. This also explains why machine learning literature is in favor of local-minima in large valleys. 
}

%
%
\subsubsection{Non-sparse true parameter}
The second setting is technically more interesting, which assumes the data has a low effective dimension in the following sense.  When we say 
a component or a linear combination of components of $w$ is effective,  it means that the loss function $F$ has a significant dependence on it. This can be analyzed through the eigen-decomposition of $\nabla^2 F(w)$ or its upper bound $A$ in \eqref{eq:cond}.   Let $(\lambda_i, v_i)$ be the eigenvalue-eigenvectors of $A$, where $\lambda_i$ are arranged in decreasing order. Then a small $\lambda_i$ indicates that $F$ has a weak dependence along the direction of $v_i$.  For the model to have a low effective dimension, there will be only constantly many $\lambda_i$ being significant, while the remaining eigenvalues in sum have a negligible contribution to the overall loss function.  We formally formulate this setting into the following assumption.
\begin{aspt}
	\label{aspt:trace}
	The followings are true
	\begin{enumerate}[1)]
		\item $\tr(A)$ with $A$ defined in Assumption \ref{aspt:convex} is bounded by an $\tildeO(1)$ constant. 
		\item  the true parameter $w^*$ is bounded in each of $A$'s eigen-direction, in the sense that 
		\begin{equation}\label{eq:W_A_S}
		\|w^*\|_{A,S}:=\max_i\{|\langle v_i,w^*\rangle|, i=1,\ldots,p \}=\tildeO(1).
		\end{equation}
		\item $r^2,c_r$  defined in  Theorem \ref{thm:sgd} are bounded by $\tildeO(1)$ constants.
		\item The initial values $\E \|w_0\|^2$ and $\E G(w_0)$  grow polynomially with $p$. 
	\end{enumerate} 
\end{aspt}

By Cauchy Schwartz inequality,  we have $\|w^*\|_{A,S}\leq \|w^*\|.$ So Assumption \ref{aspt:trace}  condition 2) is weaker than Assumption  \ref{aspt:sparse} condition 2).  In particular, it can include important cases where we only have upper and lower bounds on each of $w^*$'s components, and $A$ is known to be a diagonal matrix. These cases are not covered by Assumption \ref{aspt:sparse}. On the other hand, the spectrum profile of $A$ will be required to choose the regularization parameter as shown in the following proposition.  
\blue{
\begin{prop}
	\label{prop:spetrum}
	By the following inequalities,
	\[
	\|w^*\|^2_A\leq \text{tr}(A)\|w^*\|^2_{A,S},\quad \|w^*\|^2_{A,\lambda}\leq \|w^*\|^2_{A,S}\sum_{i=1}^p \lambda\wedge \lambda_i.
	\]
	Assumption \ref{aspt:trace} implies that  $\|w^*\|_A =\tildeO(1)$ and $\|w^*\|^2_{A,\lambda} =\tildeO \left(\sum_{i=1}^p \lambda\wedge \lambda_i\right).$
	Moreover, under the conditions in Corollary \ref{cor:selectpara_online} and Assumption \ref{aspt:trace},  given any $\epsilon>0$, if the eigenvalues of $A$ follows,
	\begin{enumerate}[1)]
		\item Exponential decay: $\lambda_i=e^{-ci}$ for some constant $c>0$, and setting
	\[
		\lambda=\tildeO(\frac{\epsilon}{|\log \epsilon|}), \
		\delta(\epsilon)=\tildeO\left(\frac{\epsilon^3}{|\log \epsilon|^2}\right),\ \eta(\epsilon)=\tildeO\left(\frac{\epsilon^2}{|\log \epsilon|}\right),\ N(\epsilon)=\widetilde{\Omega}\left(\frac{|\log \epsilon|^3}{\epsilon^3}\right), 
		\] 
		we have  $\E [ G(w_N)1_{\tau\geq N} ] \leq 4\epsilon$.		
		\item Polynomial decay: $\lambda_i=i^{-c}$ for some constant $c>0$,  and setting
		\[
		\lambda(\epsilon) = \tildeO\Big( \epsilon^{
			\frac{c+1}{c}} \Big), \ \delta(\epsilon) = \tildeO\Big( \epsilon^{\frac{3c+2}{c}} \Big), \ \eta(\epsilon)  = \tildeO\Big( \epsilon^{\frac{2c+1}{c}} \Big), \ N(\epsilon) = \widetilde{\Omega}\Big(\frac{|\log(\epsilon)|}{\epsilon^{\frac{3c+2}{c}}}\Big),
		\]
		we have  $\E [ G(w_N)1_{\tau\geq N} ] \leq 4\epsilon$.  		
	\end{enumerate}
     In both cases,  we have
	\begin{align*}
	\E [ G(w_{N\wedge \tau}) ] \leq &\E [ G(w_0) ]+ \tildeO(\epsilon |\log \epsilon|). 
	\end{align*}
	So if $\calD=\{w: G(w)\leq (1+a)\E [G(w_0)] \}$, then 
	\[
	\Prob(\tau<N)\leq  \frac{1}{1+a} + \tildeO(\epsilon |\log \epsilon| ). 
	\] 
\end{prop}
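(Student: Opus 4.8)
The plan is to prove the statement in three stages: the two deterministic norm inequalities, the per-spectrum parameter calibration through Corollary~\ref{cor:selectpara_online}, and finally the bound on the stopped generalization error together with the escape probability. For the inequalities I would expand $w^*$ in the eigenbasis $\{v_i\}$ of $A$, setting $a_i=\langle v_i,w^*\rangle$, so that $\|w^*\|_A^2=\sum_i \lambda_i a_i^2\leq(\max_j a_j^2)\sum_i\lambda_i=\|w^*\|_{A,S}^2\,\tr(A)$ by the definition~\eqref{eq:W_A_S} of $\|w^*\|_{A,S}=\max_i|a_i|$. For the truncated norm the splitting $\reals^p=S_\lambda\oplus S_\bot$ of Definition~\ref{defn} gives $\|w^*\|_{A,\lambda}^2=\sum_{\lambda_i>\lambda}\lambda a_i^2+\sum_{\lambda_i\leq\lambda}\lambda_i a_i^2=\sum_i(\lambda\wedge\lambda_i)a_i^2\leq\|w^*\|_{A,S}^2\sum_i(\lambda\wedge\lambda_i)$. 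Assumption~\ref{aspt:trace} then makes $\tr(A)$ and $\|w^*\|_{A,S}$ both $\tildeO(1)$, which yields $\|w^*\|_A=\tildeO(1)$ and $\|w^*\|_{A,\lambda}^2=\tildeO(\sum_i\lambda\wedge\lambda_i)$; the same assumption makes $\|A\|\leq\tr(A)$, $r^2$ and $c_r$ all $\tildeO(1)$, so the constant $C_1=60\|A\|(r^2+\|w^*\|_A^2)+10\|w^*\|_A^2$ in~\eqref{eqn:opt2} is itself $\tildeO(1)$.

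The core of the two rate claims is the partial sum $\sum_i(\lambda\wedge\lambda_i)$, which I would evaluate by splitting at the index $i_0$ where $\lambda_{i_0}\approx\lambda$. For exponential decay $\lambda_i=e^{-ci}$ this gives $i_0\approx|\log\lambda|/c$, so the head contributes $\lambda i_0=\tildeO(\lambda|\log\lambda|)$ and the geometric tail only $O(\lambda)$, whence $\|w^*\|_{A,\lambda}^2=\tildeO(\lambda|\log\lambda|)$; forcing the bias $4\|w^*\|_{A,\lambda}^2$ below $\epsilon$ fixes $\lambda=\tildeO(\epsilon/|\log\epsilon|)$. For polynomial decay $\lambda_i=i^{-c}$ the same split produces a power of $\lambda$, and solving the bias constraint for $\lambda$ gives the stated polynomial rate. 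In each case I would then read $\eta$ and $\delta$ off the constraints $\eta,\delta\lesssim\lambda\epsilon/C_1$ of~\eqref{eq:parameter} (taking $\delta=\tildeO(\lambda\eta)$, the stronger admissible choice, which also controls the drift below) and $N$ off the logarithmic lower bound, $N=\widetilde{\Omega}\big((\lambda\eta)^{-1}|\log\epsilon|\big)$, after absorbing the polynomial-in-$p$ initial values $\E G(w_0),\E\|w_0\|^2$ into the $\widetilde{\Omega}$ notation. Verifying that these satisfy the remaining hypotheses of Theorem~\ref{thm:sgd} is routine, since its other upper bounds on $\eta$ are all at least of order $\lambda$ and hence weaker than the binding constraint $\eta<\lambda\epsilon/C_1$; Corollary~\ref{cor:selectpara_online} then delivers $\E[G(w_N)1_{\tau\geq N}]\leq4\epsilon$.

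For the last two claims I would work with the stopped iterate $w_{n\wedge\tau}$. The bound on $\E[G(w_{N\wedge\tau})]$ follows from the one-step inequality behind Theorem~\ref{thm:sgd}: on $\{\tau>n\}$ the local convexity inside $\calD$ renders the drift of $G$ nonpositive up to an $O(\delta)$ correction, while the stochastic-gradient noise inflates $G$ by at most $O(\eta^2 r^2)$ per step, so telescoping over the $N$ steps gives $\E[G(w_{N\wedge\tau})]\leq\E[G(w_0)]+O(N\eta^2 r^2)+O(N\eta\delta\|A\|)$. The chosen rates make $N\eta^2 r^2=\tildeO(\epsilon|\log\epsilon|)$ and the $\delta$-term $\tildeO(\epsilon^2)$, proving the stated bound. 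With the explicit valley $\calD=\{w:G(w)\leq(1+a)\E[G(w_0)]\}$, the event $\{\tau<N\}$ forces $G(w_{N\wedge\tau})=G(w_\tau)>(1+a)\E[G(w_0)]$, so by $G\geq0$ and Markov's inequality $(1+a)\E[G(w_0)]\,\Prob(\tau<N)\leq\E[G(w_{N\wedge\tau})]\leq\E[G(w_0)]+\tildeO(\epsilon|\log\epsilon|)$, which rearranges to $\Prob(\tau<N)\leq\frac{1}{1+a}+\tildeO(\epsilon|\log\epsilon|)$.

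I expect the main obstacle to be the bound on $\E[G(w_{N\wedge\tau})]$: since $G$ is not a supermartingale (noise can increase it), one must telescope the per-step drift inequality over a horizon $N$ that itself grows like $\epsilon^{-3}$ while keeping the accumulated noise $N\eta^2 r^2$ at the $\tildeO(\epsilon|\log\epsilon|)$ level. This is exactly why $\eta$ is taken proportional to $\lambda\epsilon$ rather than $\epsilon$, and why the non-convexity budget $\delta$ must be pushed down to $\tildeO(\lambda\eta)$; getting these three scales ($\lambda,\eta,\delta$) mutually consistent with both the bias constraint and the noise accumulation is the delicate bookkeeping the proof hinges on.
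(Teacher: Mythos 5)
Your proposal follows essentially the same route as the paper's proof: the two norm inequalities via expansion in $A$'s eigenbasis, the truncation of $\sum_i \lambda\wedge\lambda_i$ at the index where $\lambda_{i_0}\approx\lambda$ (head $k\lambda$, summable tail) to force $4\|w^*\|^2_{A,\lambda}\leq\epsilon$, calibration of $\eta,\delta,N$ through Corollary \ref{cor:selectpara_online} --- including the correct observation that $\delta$ must be pushed to $\tildeO(\lambda\eta)$ rather than merely $\lambda\epsilon/C_1$, which is exactly what the paper's displayed rates encode --- and finally the stopped-process bound on $\E[G(w_{N\wedge\tau})]$ combined with Markov/Chebyshev on the sub-level set $\calD$. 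One bookkeeping caveat: your one-step accounting (``drift nonpositive up to an $O(\delta)$ correction plus $O(\eta^2 r^2)$ noise'') is lossier than the paper's estimate \eqref{eqn:SGDescape}, which also carries the per-step ridge bias $\lambda\eta\bigl(\|w^*\|^2_{A,\lambda}+\tfrac52\delta\|w^*\|^2_A\bigr)$ --- present even when $\delta=0$, since the iterates descend $F_\lambda$ rather than $F$ --- and the term $\tfrac{3\eta^2\lambda^2}{2}\|A\|\|w_n\|^2$, whose accumulation needs the linear growth bound $\E\|w_{n\wedge\tau}\|^2\leq\E\|w_0\|^2+\eta n M_w$ and hence produces the $n^2\eta M_w$ contribution. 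Both omitted pieces turn out to be $\tildeO(\epsilon|\log\epsilon|)$ (indeed $\lambda N\eta\|w^*\|^2_{A,\lambda}$ is of exactly the claimed order, on par with your $N\eta^2 r^2$ term), so your conclusion and the escape-probability bound survive, but a careful execution must include them. A minor heads-up on the polynomial case: the paper's own proof computes with tail exponent $1+c$ (i.e.\ $\sum_{i>k} i^{-(1+c)}\leq \tfrac{1}{ck^c}$), which is what yields $\lambda=\tildeO(\epsilon^{(c+1)/c})$; a literal reading of $\lambda_i=i^{-c}$ would instead give $\lambda=\tildeO(\epsilon^{c/(c-1)})$, so your unexamined claim that ``solving the bias constraint gives the stated rate'' silently inherits the paper's notational mismatch.
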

}

We remark that the parameter of the spectrum decay (e.g., the constant $c$ in polynomial decay spectrum) is often assumed to be known for many functional data analysis problems \citep{HH07,CH08}. From Proposition \ref{prop:spetrum}, for both exponential decay and polynomial decay of the Hessian spectrum,  the sample size $N$ only depends on $p$ in a poly-logarithmic factor. 

\blue{ Similar to Proposition \ref{prop:sparse}, 
the second part of this result demonstrates how to bound the probability of SGD escaping the convexity region $\calD$ if it is the sub-level set with $w^*$ inside. 
The parameter $a$ controls the size of $\calD$. A larger $a$ produces a larger $\calD$ and hence a smaller escape probability.  }

\begin{rem}
	\label{rem:comparewithBLLT}
	It is interesting to compare our low effective dimension settings with the conditions used in \cite{BLLT:19}. For their main result, Theorem 4 in \cite{BLLT:19}, to yield dimension-independent generalization error bound, three conditions (formulated in our notation) need to hold: 1) $\|w^*\|^2$  is bounded by a constant; 2) $\tr(A)$ is bounded by a constant; 3) the spectrum of $A$ decays not so fast so that, for some $k$, $\sum_{i\geq k} \lambda_i\geq bN \lambda_k$ with some constant $b$. In comparison, our Assumption \ref{aspt:sparse} only requires conditions 1) and 2), but not the technical condition 3). Moreover, our result can also work under Assumption \ref{aspt:trace} where only $\|w^*\|^2_{A,S}$, instead of $\|w^*\|^2$, needs to be bounded. 
\end{rem}

\section{Overparameterization in Linear Regression}
\label{sec:linear}
In general, overparameterization may lead to overfitting, but this sometimes can be avoided. Our main result, Theorem \ref{thm:sgd}, provides a general tool to understand why overfitting sometimes happens and sometimes  does not. In this section, we will demonstrate how to apply our results on linear regression models in various high dimensional settings. This section is technically straightforward and is mainly used for pedagogical purpose.  The discussions on more technically challenging  cases for nonlinear and  non-convex models are provided in the next section.

\subsection{Linear regression}
First of all, we will find out the problem related parameters in Theorem \ref{thm:sgd} when applying to linear regression models. As in Section \ref{sec:SA}, we consider \emph{i.i.d.} data points form $\zeta_i = (x_i, y_i)\in \reals^p\times \reals$, where the response is generated by 
\begin{equation}\label{eq:linear_model}
y_i = x_i^{T} w^* + \xi_i.
\end{equation}
In \eqref{eq:linear_model}, $w^* \in \mathbb{R}^p$ is the true model-parameter to be estimated. $\xi_i\in \reals$ are observation noise terms in the observation process, 
and we assume they are \emph{i.i.d.} with zero mean and variance $\sigma^2$.  For simplicity, we assume that the data $x_i$ are \emph{i.i.d.}  Gaussian distributed, i.e., $x_i \sim \mathcal{N}(0,\Sigma)$. As a remark, our proof also  allows the non-Gaussian distribution with finite fourth moments. 

The regression loss of parameter $w$ on data $\zeta_i$ is 
\begin{equation}\label{eq:lossfun}
f(w, \zeta_i) = \frac{1}{2} (x_i^T w -y_i)^2.
\end{equation}
Plugging (\ref{eq:linear_model}) into (\ref{eq:lossfun}) and taking expectation, and we  find the population loss function
\begin{align}\label{eq:polossfun}
F(w) = \frac{1}{2} (w-w^*)^T\Sigma(w-w^*)+\dfrac{1}{2}\sigma^2.
\end{align}
Now we show the problem related parameters in Theorem \ref{thm:sgd} can be set as below:
\begin{prop}
	\label{prop:linregression}
	For linear regression, Assumption \ref{aspt:convex} holds with $A=\Sigma, \delta=0, \calD=\reals^p$. 
	When $w_0=0, \E G(w_0)=\frac12 \|w^*\|^2_\Sigma$, the stochastic gradient variance bounds in \eqref{eqn:sgvarianceless} and \eqref{eqn:sgvariance} hold with 
	\[
	r^2=2\sigma^2 \tr(\Sigma)+12\tr(\Sigma)\|w^*\|^2_\Sigma,\quad c_r=\frac{6}{\sigma^2}\max\{\|\Sigma\|,1\}. 
	\]
\end{prop}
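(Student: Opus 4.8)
The plan is to reduce the three claims to direct moment computations for the quadratic loss $f(w,\zeta)=\tfrac12(x^Tw-y)^2$. First I would record the derivatives. Writing $y=x^Tw^*+\xi$ and $u:=w-w^*$, differentiation gives $\nabla f(w,\zeta)=\big(x^Tu-\xi\big)x$, and taking expectations (using $\E[xx^T]=\Sigma$, $\E[\xi]=0$, and $\xi\perp x$) yields $\nabla F(w)=\Sigma u$ and $\nabla^2F(w)=\Sigma$. The first two claims are then immediate: since $\nabla^2F\equiv\Sigma$ everywhere, the sandwich $-\delta A\preceq\nabla^2F\preceq A$ of Assumption \ref{aspt:convex} holds on all of $\reals^p$ with $A=\Sigma$ and $\delta=0$, so we may take $\calD=\reals^p$; and substituting $w_0=\mathbf 0$ into \eqref{eq:polossfun} gives the deterministic value $G(w_0)=F(\mathbf 0)-F(w^*)=\tfrac12(w^*)^T\Sigma w^*=\tfrac12\|w^*\|_\Sigma^2$.

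The heart of the proof is the variance bound. I would write the centered stochastic gradient as $\nabla f(w,\zeta)-\nabla F(w)=(xx^T-\Sigma)u-\xi x$, expand the squared norm, and use $\E[\xi]=0$ together with $\xi\perp x$ to kill the cross term, leaving
\[
\E\|\nabla f(w,\zeta)-\nabla F(w)\|^2=u^T\,\E[(xx^T-\Sigma)^2]\,u+\sigma^2\,\E\|x\|^2 .
\]
The key step is evaluating the fourth-moment matrix $\E[(xx^T-\Sigma)^2]$. For Gaussian $x$ I would apply Isserlis'/Wick's theorem to $\E[xx^Txx^T]=\E[\|x\|^2xx^T]$, obtaining $\tr(\Sigma)\Sigma+2\Sigma^2$, so that $\E[(xx^T-\Sigma)^2]=\tr(\Sigma)\Sigma+\Sigma^2$. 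Combined with $\E\|x\|^2=\tr(\Sigma)$, this produces the exact identity
\[
\E\|\nabla f(w,\zeta)-\nabla F(w)\|^2=\tr(\Sigma)\|u\|_\Sigma^2+u^T\Sigma^2u+\sigma^2\tr(\Sigma).
\]

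It then remains to massage this identity into the two target forms. The central observation is that $\|u\|_\Sigma^2=(w-w^*)^T\Sigma(w-w^*)=(w-w^*)^T\nabla F(w)=2G(w)\ge 0$, so the $\Sigma$-norm of $u$ is simultaneously the quantity appearing in \eqref{eqn:sgvarianceless} and twice the generalization error in \eqref{eqn:sgvariance}. Bounding $u^T\Sigma^2u\le\|\Sigma\|\,\|u\|_\Sigma^2$ and $\|\Sigma\|\le\tr(\Sigma)$ makes the right-hand side an affine function of $\|u\|_\Sigma^2$ with constant term $\sigma^2\tr(\Sigma)$ and leading coefficient of order $\tr(\Sigma)+\|\Sigma\|$; this delivers the $G(w)$-branch of \eqref{eqn:sgvariance} as well as \eqref{eqn:sgvarianceless}. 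For the $\|w\|^2$-branch of the minimum in \eqref{eqn:sgvariance}, I would instead use $\|u\|_\Sigma^2\le 2\|w\|_\Sigma^2+2\|w^*\|_\Sigma^2\le 2\|\Sigma\|\,\|w\|^2+2\|w^*\|_\Sigma^2$, which pushes the $\|w^*\|_\Sigma^2$ contribution into the constant $r^2$ and leaves a $\|w\|^2$-coefficient of order $\tr(\Sigma)\|\Sigma\|$. Since both the $G(w)$- and the $\|w\|^2$-bound hold with the \emph{same} $r^2$ and $c_r$, the bound against $\min\{G(w),\|w\|^2\}$ in \eqref{eqn:sgvariance} follows automatically. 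Because the coefficient in \eqref{eqn:sgvariance} is $c_r r^2$, the factor $r^2\ge 2\sigma^2\tr(\Sigma)$ absorbs the $\tr(\Sigma)$-sized leading constant, and one then checks that the stated $r^2=2\sigma^2\tr(\Sigma)+12\tr(\Sigma)\|w^*\|_\Sigma^2$ and $c_r=6\sigma^{-2}\max\{\|\Sigma\|,1\}$ dominate all constants in both sub-bounds; this final verification is routine bookkeeping.

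The main obstacle is the fourth-moment evaluation: the cleanest route is Wick's theorem, but it is restricted to Gaussian $x$. To obtain the claimed extension to non-Gaussian regressors with finite fourth moments, Wick's formula no longer applies, so I would instead bound $\E[xx^Txx^T]$ directly in operator norm via the fourth-moment assumption. This replaces the exact coefficient $\tr(\Sigma)\Sigma+\Sigma^2$ by a term of the same structural order $\tr(\Sigma)\Sigma+O(\Sigma^2)$, changing only the universal constants, so that the remaining reductions and the $\min$-argument go through unchanged.
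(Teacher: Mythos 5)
Your proposal is correct and takes essentially the same route as the paper: both proofs reduce everything to the Gaussian fourth moment of $x$ --- your Wick identity $\E[xx^Txx^T]=\tr(\Sigma)\Sigma+2\Sigma^2$ is precisely the paper's eigenbasis computation, which the paper then relaxes to $3\tr(\Sigma)\Sigma$ --- and then interpolate using $\|w-w^*\|_\Sigma^2=2G(w)=(w-w^*)^T\nabla F(w)$ and $\|w-w^*\|_\Sigma^2\le 2\|\Sigma\|\|w\|^2+2\|w^*\|_\Sigma^2$. Your only deviation, computing the centered variance exactly via the decomposition $(xx^T-\Sigma)(w-w^*)-\xi x$ with the cross term killed by $\xi\perp x$, rather than the paper's cruder variance-bounded-by-second-moment step followed by Cauchy--Schwarz, merely sharpens constants; and note that your closing ``routine bookkeeping'' for \eqref{eqn:sgvarianceless} is exactly as loose as the paper's own treatment, since the stated $c_r=6\sigma^{-2}\max\{\|\Sigma\|,1\}$ contains no $\tr(\Sigma)$ factor while both your bound and the paper's place a coefficient of order $\tr(\Sigma)$ on $|(w-w^*)^T\nabla F(w)|$ --- a slip in the proposition's constants (the $c_r$ versus $c_r r^2$ convention mismatch between \eqref{eqn:sgvarianceless} and \eqref{eqn:sgvariance}), not a gap in your argument relative to the paper.
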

As a consequence, by Proposition \ref{prop:sparse}, Assumption \ref{aspt:sparse} holds if $\|w^*\|^2$ and $\tr(\Sigma)$ are $O(1)$ constants, and the necessary sample size $N(\epsilon)$ in Corollary \ref{cor:selectpara_online} is independent of $p$.
Similarly, by Proposition \ref{prop:spetrum}, Assumption \ref{aspt:trace} holds if $\|w^*\|_{A,S}^2$ and $\tr(\Sigma)$ are $\tildeO(1)$ constants, and the sample size $N(\epsilon)$ depends on $p$ only via a polynomial logarithmic factor.

%

\subsection{High dimensional data with principle components}
\label{sec:motivate}

The low effective dimension settings in Section \ref{sec:nooverfit} naturally rise in many high dimensional problems.
For example, in image processing or functional data analysis (see e.g. \citealp{RS05, HH07, CH08, FJR15, XY18}), the data are in general assumed to take place in a  Hilbert space $(H, \langle \,\cdot\,,\,\cdot\,\rangle)$ with potentially infinitely many orthonormal basis functions $\{e^j,j=1,2,\ldots\}$. Each data can be written as 
\begin{equation}\label{eq:data}
x_i=\sum_{j=1}^\infty a^j_i e^j. 
\end{equation}
Suppose $a^j_i$ are independent Gaussian random variables with mean zero and variance $\sigma_j^2$. Note that
$
\E \langle x_i, x_i\rangle=\sum_{j=1}^\infty \sigma_j^2.
$ 
Therefore, for each data $x_i\in H$, we assume that $\sum_{j=1}^\infty \sigma_j^2<\infty$ so that the norm of the data is bounded, which implicitly requires $\sigma_j$ decaying to zero \citep{HH07}. Given the form of $x_i$ in \eqref{eq:data}, the linear regression model takes the following form,
\begin{equation}
\label{eqn:inf}
y_i=\langle x_i,w^*\rangle+\xi_i,
\end{equation}
where $w^*=\sum_{j=1}^\infty w^{*,j} e^j$. If we assume $w^*\in H$, then $\langle w^*, w^*\rangle=\sum_{j=1}^\infty (w^{*,j})^2<\infty$. 

When training this ``infinite dimensional'' linear regression model in \eqref{eqn:inf}, we would need  a finite projection $\calP_p: H\mapsto \reals^p$.   When the basis functions are available, one natural choice of the projection is
\[
\calP_p x_i=\calP_p\left(\sum_{j=1}^\infty a^j_i e^j\right):=[a^1_i, \ldots a^p_i]^T.
\]
Then the $p$-dimensional linear regression model is formulated as
\begin{equation}
\label{eqn:projp}
y_i=(\calP_p x_i)^T w^*_p+\xi_i^p.
\end{equation}
It is worthwhile noticing that the true infinite dimensional model in \eqref{eqn:inf} is compatible with the finite dimensional model in \eqref{eqn:projp}, in the sense that 
\[
w^*_p=\calP_p w^*=[w^{*,1},\ldots, w^{*,p}]^T,\quad \xi_i^p=\xi_i+\sum_{j=p+1}^\infty w^{*,j}a^j_i.
\]
Since $a^j_i$ are independent Gaussian random variables, we have $\xi_i^p\sim \mathcal{N}(0,\sigma_{\xi,p}^2)$ with 
\[
\sigma^2_{\xi,p}:=\sigma^2+\sum_{j=p+1}^\infty \sigma_j^2(w^{*,j})^2\leq \sigma^2+\|w^*\|_{\Sigma}^2, \quad \|w^*\|_{\Sigma}^2:=\sum_{j=1}^\infty \sigma_j^2(w^{*,j})^2.
\]
In the finite dimensional model \eqref{eqn:projp}, the data  $\calP_p x_i$ has the population covariance matrix $\Sigma_p=\text{diag}(\sigma_1^2,\ldots, \sigma_p^2),$ whose trace is bounded by 
\[
\tr(\Sigma_p)=\sum_{j=1}^p \sigma_j^2 \leq \sum_{j=1}^\infty \sigma_j^2.
\] 
Therefore, by Proposition \ref{prop:linregression}, the problem related parameters in Theorem \ref{thm:sgd} are
\begin{align}
\notag
&A_p=\Sigma_p,\quad\text{with}\quad \tr(A_p)\leq \sum_{j=1}^\infty \sigma_j^2,\quad \|A_p\|=\sigma_1^2,\\
\label{eq:trace_A_p}
&r_p^2=2\sigma^2_{\xi,p}\tr(\Sigma_p)+12\tr(\Sigma_p)\|w_p^*\|^2_{\Sigma_p}\leq 2(\sigma^2+7\|w^*\|^2_\Sigma)\sum_{j=1}^\infty \sigma_j^2,\\
\notag
&  c_{r,p}^2=\frac{6}{\sigma^2_{\xi,p}}\max\{1,\sigma_1^2\}\leq \frac{6}{\sigma^2}\max\{1,\sigma_1^2\}. 
\end{align}
Moreover, if we use $w_0=0$, then $\E \|w_0\|^2=0$ and 
\[
\E G(w_0)=\frac12\|w^*_p\|^2_{\Sigma_p}\leq \|w^*\|^2_\Sigma.
\]
Note that 
\[
\|w^*\|^2_\Sigma=\sum_{j=1}^\infty \sigma_j^2(w^{*,j})^2\leq \|w^*\|_\infty^2\sum_{j=1}^\infty \sigma_j^2,\quad \|w^*\|_\infty:=\max_{1\leq j}|w^{*,j}|.
\] 
So as long as $\|w^*\|_\infty$ is finite, the upper bounds above are independent of dimension $p$.

When the true loading parameter $w^*$ is an element of $H$, $\|w^*_p\|^2\leq \langle w^*, w^*\rangle<\infty$. Then we can check that all items of Assumption \ref{aspt:sparse} hold. So by Proposition \ref{prop:sparse}, we know that the generalization error is dimension-independent. Moreover, this does not require any information of the spectrum decay profile.  

More generally, we only need that each component of the true loading parameter $w^*$  is bounded, and $w^*$ does not need to be an element of $H$ itself. In particular, we note that
\[
\|w^*_p\|_{\Sigma_p,S}=\max_{1\leq j\leq p} |w^{*,j}| \leq \|w^*\|_\infty.
\]
Therefore, if $\|w^*\|_\infty$ is finite, Assumption \ref{aspt:trace} holds  (but in general Assumption \ref{aspt:sparse} does not). Then by Proposition \ref{prop:spetrum},  the generalization error can be dimension independent when we know the spectrum decay profile. 

%
%
\begin{figure}[!t]
	\centering
	\begin{subfigure}{.45\textwidth}
		\includegraphics[width=1.1\textwidth]{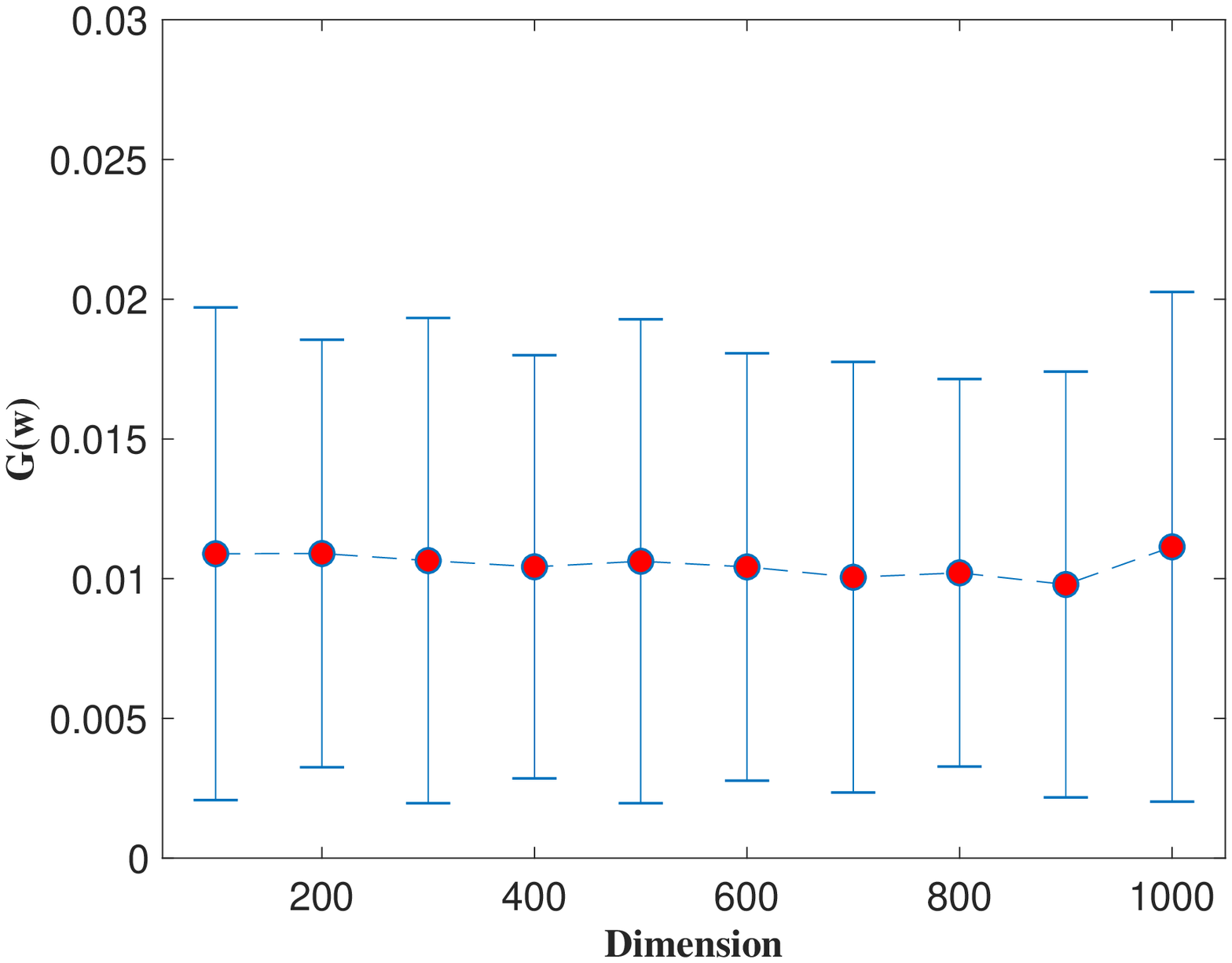}
		\caption{$w^{*,j}=j^{-1}$, $\sigma^2_j=j^{-2}$}
	\end{subfigure}
	\begin{subfigure}{.45\textwidth}
		\includegraphics[width=1.1\textwidth]{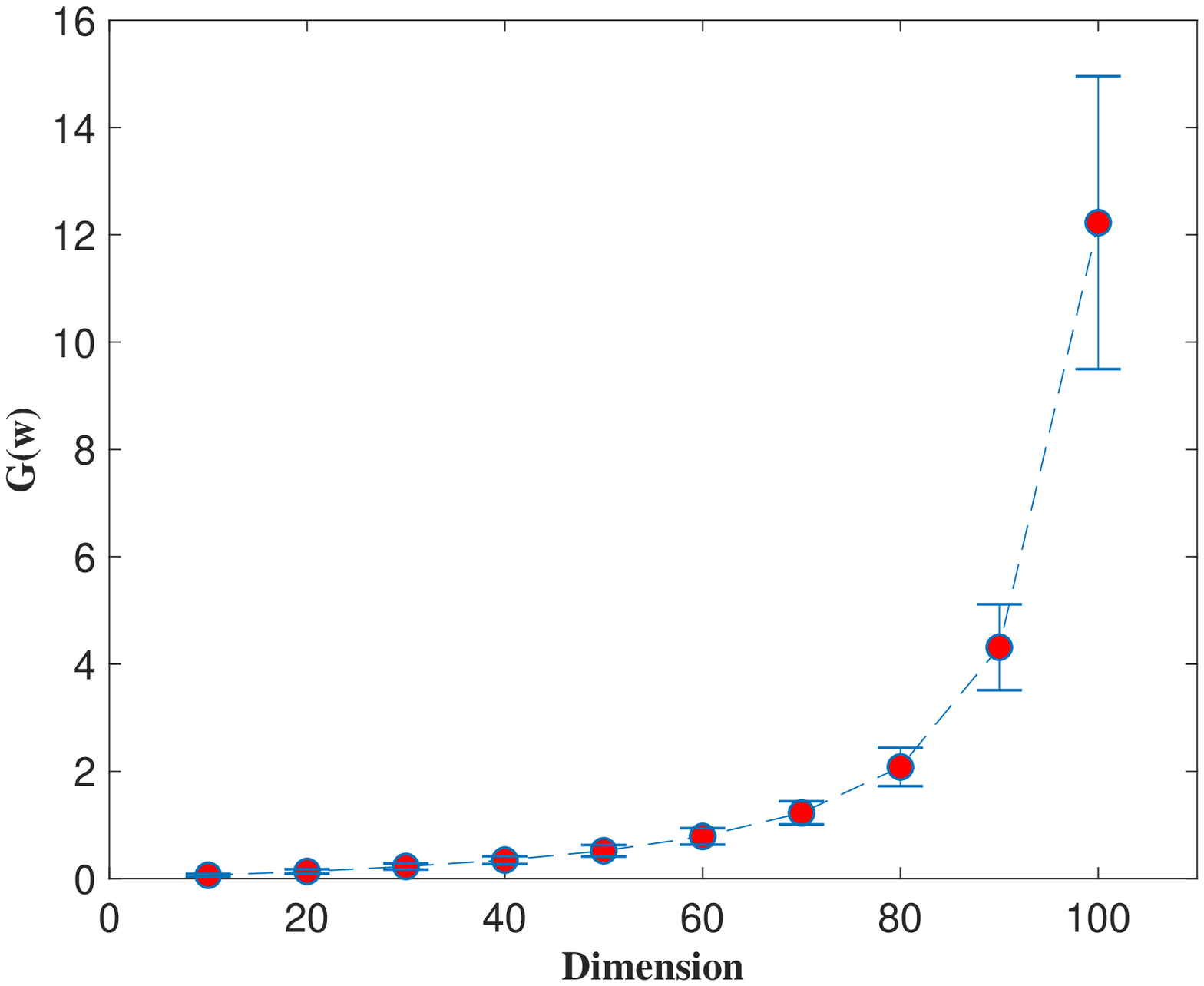}
		\caption{$w^{*,j}=j^{-1}$, $\sigma^2_j\equiv 1$}
	\end{subfigure}
	\begin{subfigure}{.45\textwidth}
		\includegraphics[width=1.1\textwidth]{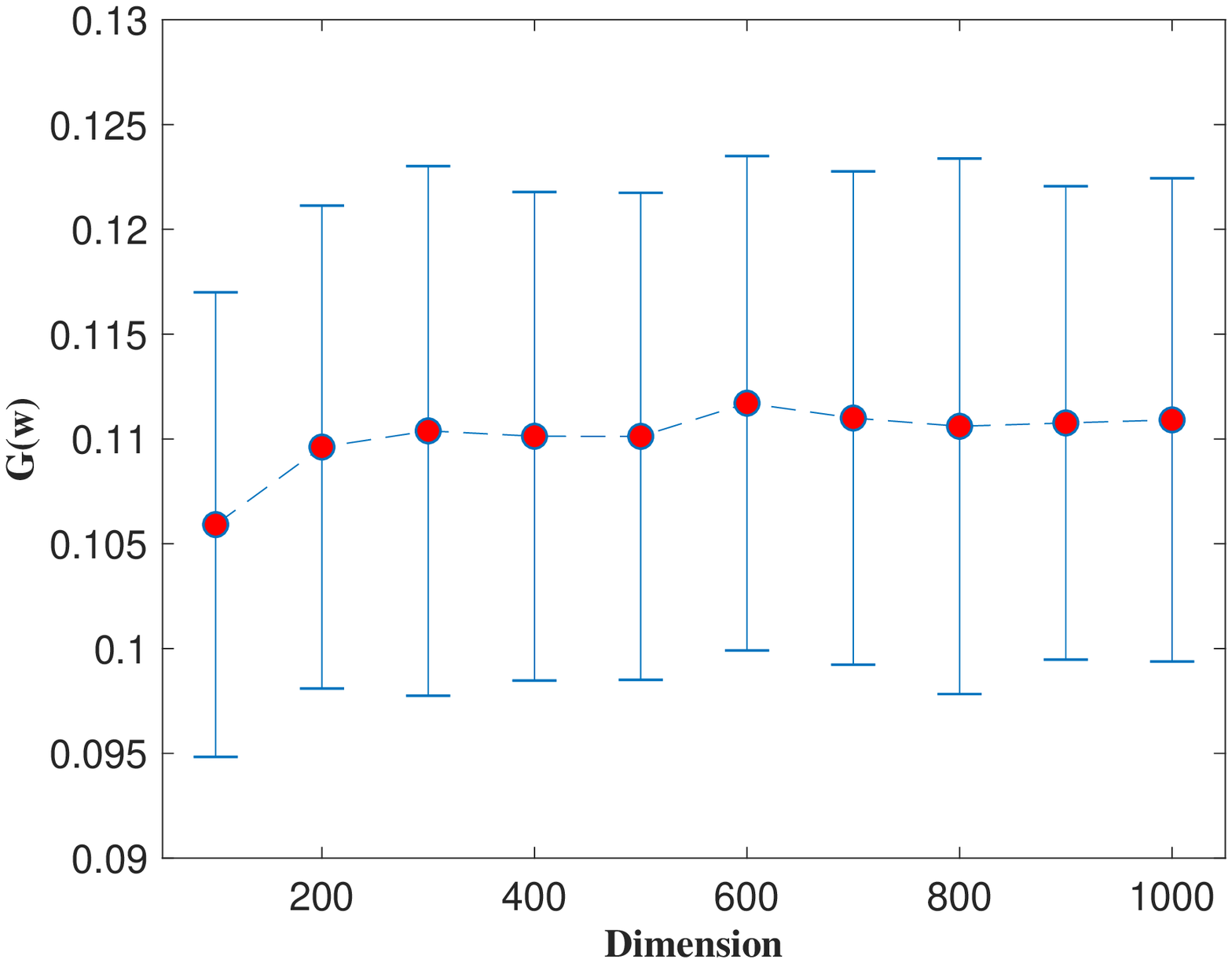}
		\caption{$w^{*,j}\equiv 1$, $\sigma^2_j=j^{-2}$}
	\end{subfigure}
	\begin{subfigure}{.45\textwidth}
		\includegraphics[width=1.1\textwidth]{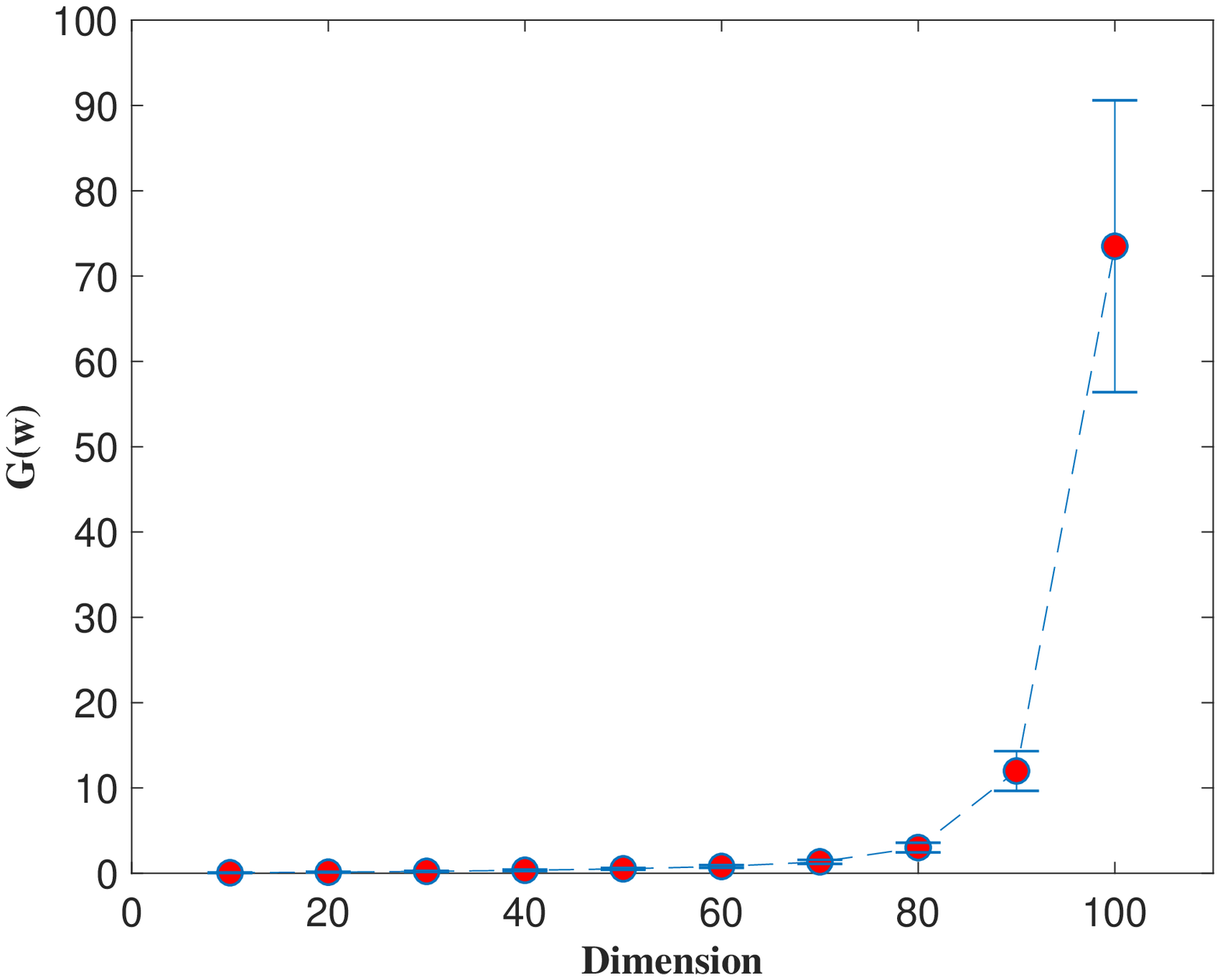}
		\caption{$w^{*,j}\equiv 1$, $\sigma^2_j \equiv 1$}
	\end{subfigure}
	\caption{Generalization error bar plot with high dimensional linear regression for different settings of  $w^*_p$ and $\Sigma_p$.   The $x$-axis is the dimension and $y$-axis is the generalization error. } 
	\label{fig:intro}
\end{figure}

As a simple demonstration, we run some simulations of SGD on linear regression model \eqref{eqn:projp} and present them in Figure \ref{fig:intro}. We run SGD on \eqref{eqn:projp} with the sample size $N=500$ and the regularization parameter $\lambda=0.01$. The covariance spectrum of predictors is set to be $\sigma^2_j=j^{-2}$ so that $\tr(A_p)$ in \eqref{eq:trace_A_p} is a constant, and the true parameter is set to be $w^{*,j}=j^{-1}$ for $1 \leq j \leq p$ so that $\|w^*\|_{\Sigma}$ is bounded. The problem dimension ranges from $p=100$ to $p=1000$, which can be larger than the sample size.  We use the final SGD output $w_{500}$ as the estimator and compute the generalization error as in \eqref{eqn:linloss1}. We repeat this experiment $1000$ times and compute the mean and standard deviation. We plot the error bar plot in  the upper left panel of Figure \ref{fig:intro}. As one can see, the generalization error does not increase as the dimension increases, even when $p \gg N$. As a comparison experiment, we run simulations with the same settings except for $\sigma^2_j\equiv 1$. We plot the generalization error in the upper right panel of Figure \ref{fig:intro}, which clearly shows the overfitting phenomenon, even when the dimension is in a lower range. 

The similar story repeats when the true parameter $w^{*,j}\equiv 1$ (i.e., the case when $\|w^*\|_{\infty}$ is bounded), where the plots are given by the lower panels in Figure \ref{fig:intro}. As one can see, the generalization error with the decaying spectrum still remains stable against the increase of dimension, and it does not change much from the previous setting where the components of $w^*$ are decaying. Meanwhile, overfitting with constant spectrum (i.e., $\sigma^2_j\equiv 1$) becomes stronger. This simple illustrative example justifies that the low effective dimension helps to address the overfitting issue, even when the dimension $p$ is much larger than the sample size $N$.

\subsection{Overfitting with redundant features}
Another interesting setting of overparameterization is to consider adding redundant predictors to an existing model. In this scenario, the  true model is low dimensional with the true parameter $w^*\in \reals^{d}$. Suppose  we do not know the true model and collect additional features $z \in \reals^{p-d}$, so that the overparameterized linear model is written as
\begin{equation}
\label{eqn:redund}
y_i=x_i^Tw^*+z_i^T u^*+\xi_i,
\end{equation}
where $w^* \in \reals^d$, $u^*=0$, and  $[x_i;z_i]$ is jointly Gaussian with mean zero and covariance 
\[
\Sigma_p=\begin{bmatrix} \Sigma_x & B\\ B^T  &\Sigma_z \end{bmatrix}.
\]
We assume that $\|\Sigma_z\|\leq \|\Sigma_x\|$ for the ease of discussion. 
Then, $\Sigma_p$ being PSD implies that $\|B\|\leq \|\Sigma_x\|$. Since we do not impose any restriction on $B$ other than $\Sigma_p$ being PSD,   our setting allows the possibility that some components of $z_i$ to be highly correlated or even identical with the ones of $x_i$. This, in general, leads to highly singular design matrices and unstable offline learning results. 


We apply Proposition \ref{prop:linregression} and find $A_p=\Sigma_p$. By triangular inequality, for any vectors $x$ and $z$,
\[
\|\Sigma_p[x;z]\|=\|[\Sigma_x x+Bz; B^Tx+\Sigma_z z]\|\leq 2\|\Sigma_x\|\|[x;z]\|\Rightarrow \|\Sigma_p\|\leq 2\|\Sigma_x\|. 
\]
For simplicity, we initialize with $[w_0;u_0]=0$, so 
\[
\E G(w_0,u_0)=\frac12\|w_0-w^*\|^2_{\Sigma_x}+\frac12\|u_0\|_{\Sigma_z}^2=\frac12\|w^*\|^2_{\Sigma_x}.
\]
Moreover, we have
\[
c_{r,p}=\frac{6}{\sigma^2}\max\{\|\Sigma_p\|,1\}\leq \frac{6}{\sigma^2}\max\{2\|\Sigma_x\|,1\},
\]
and
\[
\|[w^*,u^*]\|_{\Sigma_p,S}=\|w^*\|_\infty,\quad \|[w^*,u^*]\|^2=\|w^*\|^2.
\]
These upper bounds are all independent of $p$, or the choice of $\Sigma_z$ and $B$.

Meanwhile,
\begin{equation}\label{eq:r_p}
r^2_p=2(\sigma^2+6\|w^*\|^2_{\Sigma_x}) (\tr(\Sigma_x)+ \tr(\Sigma_z)),\quad \tr(A_p)=\tr(\Sigma_x)+\tr(\Sigma_z).
\end{equation}
Given these simple calculation, we find that the only problem related parameters that depend on $z$ are $r^2_p$ and $\tr(A_p)$ in \eqref{eq:r_p} through $\tr(\Sigma_z)$. 
Therefore, our theory indicates that there is a simple dichotomy on whether  the model \eqref{eqn:redund} will overfit. 

If $\tr(\Sigma_z)$ is bounded by a constant independent of $p$,  Proposition \ref{prop:sparse} applies, which indicates that the generalization error is also independent of the ambient dimension $p$. This can happen if we select data features in $z$ as PCA components. For example, suppose that the redundant data is in the form of $\sum_{j=1}^\infty a_{i}^j e^j$ as in the setting of  \eqref{eq:data}, and we collect the $p-d$ dimensional principle components as  $z_i=[a_i^1,\ldots, a_i^{p-d}]^T$. Then $\tr(\Sigma_z)=\sum_{j=1}^{p-d}\sigma_j^2<\sum_{j=1}^\infty \sigma_j^2$, which is independent of $p$.


If $\tr(\Sigma_z)$ grows with $p$,  model \eqref{eqn:redund} may overfit. For simplicity, we consider a special case where $\Sigma_z=I_{p-d}, B=0$. In other words, the redundant features are independent with each other and the features of $x$. Then our derivation shows that $r^2_p=O(p)$. This indicates that the learning results may overfit.

To demonstrate this dichotomy, we simulate the SGD learning results and present their generalization error in Figure \ref{fig:redun}. In particular, we set $d=5$ with true parameter $w^*=[1,1,1,1,1], \Sigma_x=I_5, \sigma^2=1$. We let $B=0$ and choose first that $\Sigma_z$ to be diagonal with decaying entries $\frac{1}{j^2}$. We run SGD with 500 iterations and compute the generalization error of the final iterate. We repeat this 1000 times and plot the  error bar plot in the left panel of Figure \ref{fig:redun}. As we can see, the generalization error is stable against the increase of the dimension $p$. In comparison, if we use $\Sigma_z=I_{p-d}$, the learning results overfit, as we can see from the right panel of Figure \ref{fig:redun}. 

\begin{figure}[!t]
	\centering
	\begin{subfigure}[t]{.46\textwidth}
		\includegraphics[width=1.1\textwidth]{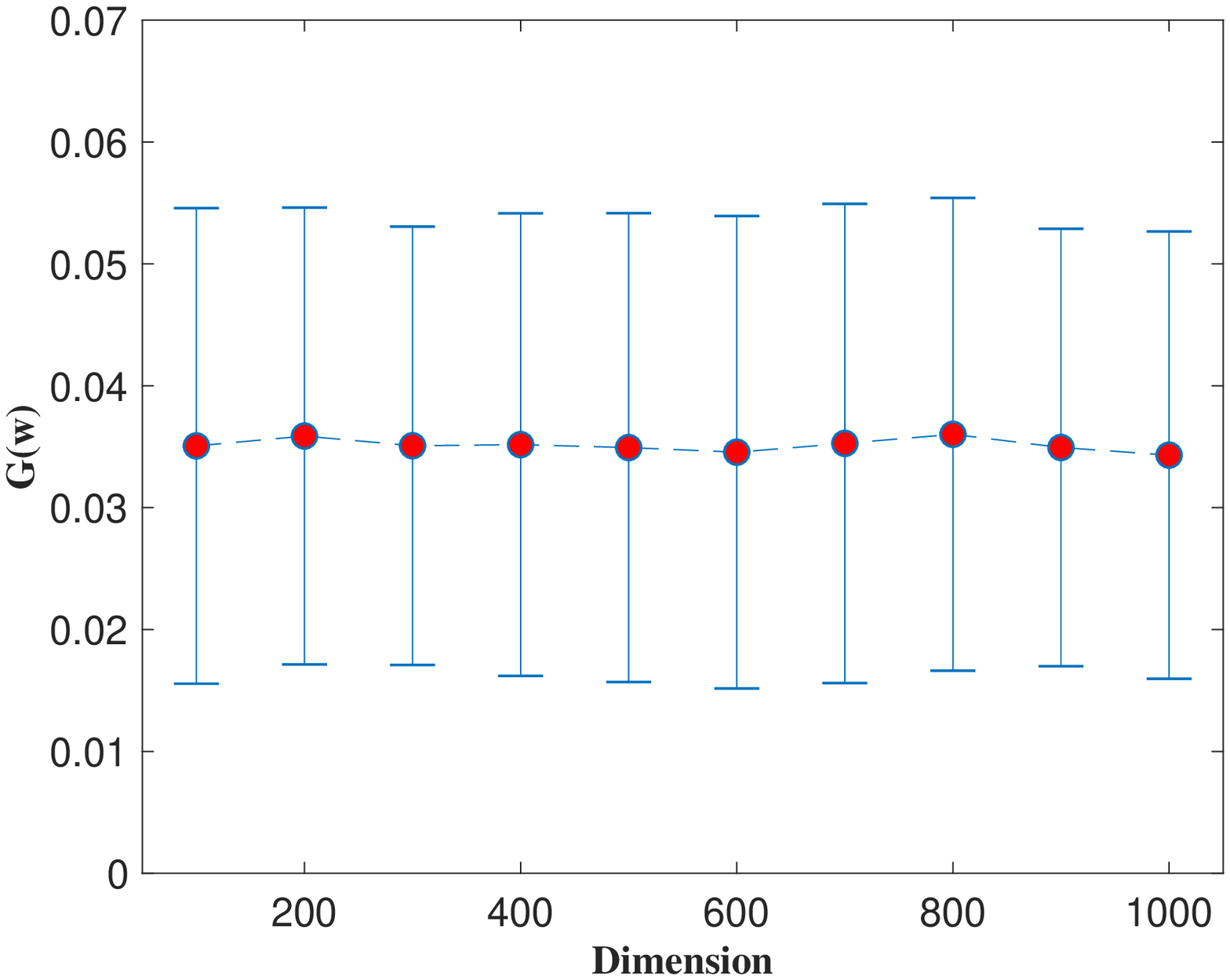}
		\caption{$\Sigma_z$ is a diagonal matrix with decaying entry $(\Sigma_{z})_{jj}=\frac{1}{j^2}$.}
	\end{subfigure}
	\begin{subfigure}[t]{.46\textwidth}
		\includegraphics[width=1.1\textwidth]{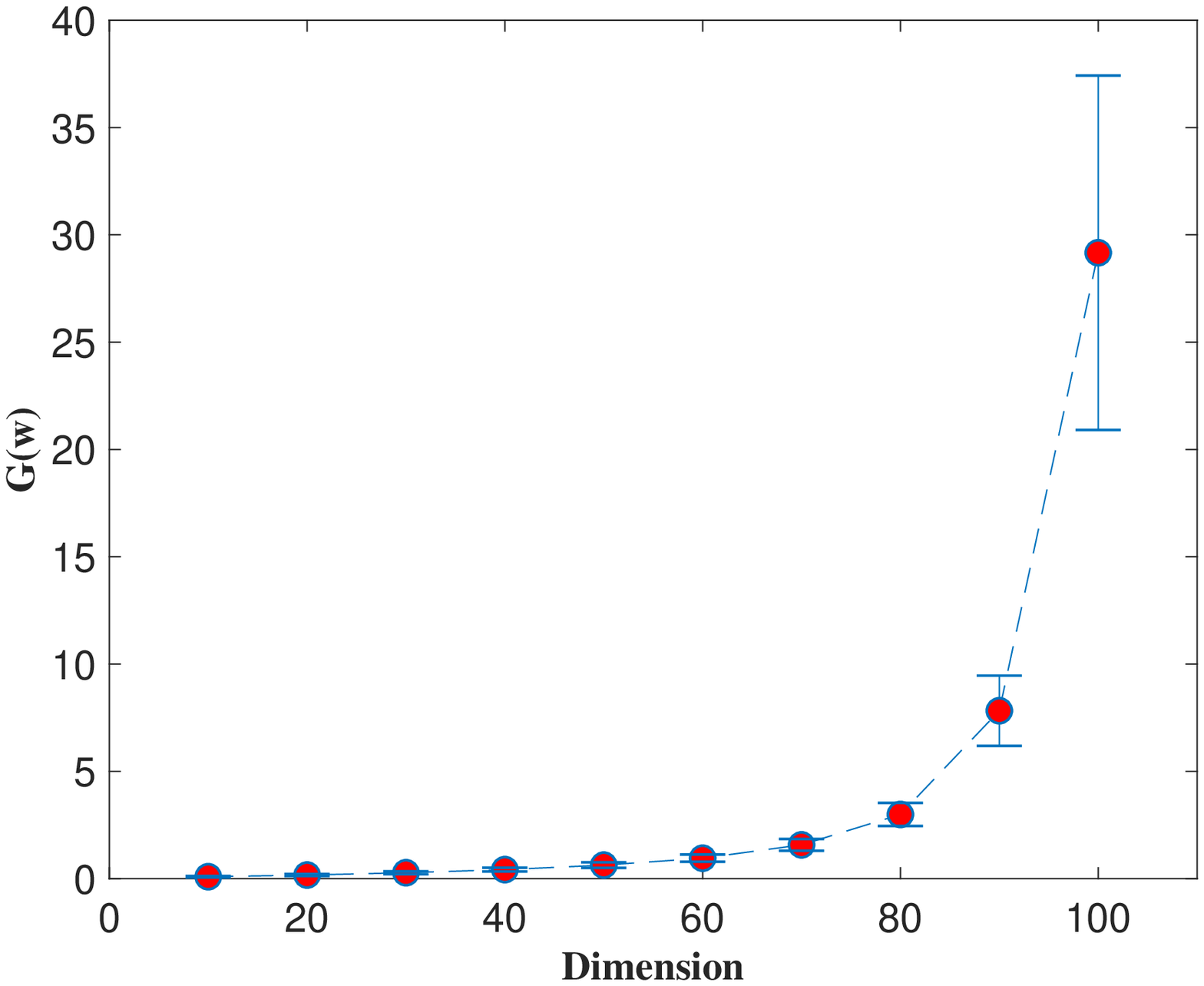}
		\caption{$\Sigma_z=I_{p-d}$.}
	\end{subfigure}
	\caption{Generalization error bar plot with high dimensional redundant features for two different cases of $\Sigma_z$. The $x$-axis is the dimension $p$ and $y$-axis is the generalization error. }
	\label{fig:redun}
\end{figure}

\section{Overparameterization for Nonlinear and Non-convex Models}
\label{sec:non_linear}

In this section, we apply our main theorem and corollaries in Section \ref{sec:nooverfit} to several important nonlinear and non-convex statistical problems, such as logistic regression, M-estimator with Tukey's biweight loss function, and two-layer neural networks.  
%
%
%

\subsection{Logistic regression}
We consider the logistic regression for binary classification with $N$ \emph{i.i.d.} data $\zeta_i=(x_i,y_i)$. The binary response $y_i$ takes values within $\{ -1, 1\}$ with probability 
\begin{align*}
\mathbf{P}(y_i=y|x_i) = \dfrac{1}{1+\exp(-yx_i^{T}w^*)},\quad y=\pm 1, 
\end{align*}
where  $w^{*} \in \reals^p$ is the true parameter to be estimated. We assume the predictors $x_i$ are \emph{i.i.d.} with  $\mathbb{E}x_ix_i^T = \Sigma$. 
For each data, we adopt the negative log-likelihood as the loss function
\begin{align*}
f(w, \zeta_i):=\log(1+\exp(-y_ix_i^{T}w)),
\end{align*}
and the corresponding population loss is given by 
\begin{align*}
F(w) = \mathbb{E}f(w, \zeta) &= \mathbb{E}\log(1+\exp(-yx^{T}w)).
\end{align*}

The problem related parameters in Theorem \ref{thm:sgd} can be set by the following proposition. 
\begin{prop}
	\label{prop:logit}
	For logistic regression, Assumption \ref{aspt:convex} holds with $A=\Sigma, \delta=0, \calD=\reals^p$. 
	When $w_0=0, \E G(w_0)=\log 2=O(1)$, the stochastic gradient variance bound in \eqref{eqn:sgvariance} holds with 
	\[
	r^2= \tr(\Sigma),\quad c_r=0.  
	\]
\end{prop}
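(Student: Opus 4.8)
The plan is to verify the three claims by direct differentiation of the logistic loss, exploiting the structural fact that the logistic link produces a per-sample gradient whose stochastic part is uniformly bounded by $\|x\|$ and a Hessian that is automatically positive semidefinite. Writing $q(w,\zeta):=e^{-yx^Tw}/(1+e^{-yx^Tw})\in(0,1)$, a direct calculation gives $\nabla f(w,\zeta)=-y\,x\,q(w,\zeta)$ and, using $y^2=1$,
\[
\nabla^2 f(w,\zeta)=x x^T\,q(w,\zeta)\bigl(1-q(w,\zeta)\bigr).
\]
Since $q(1-q)\in(0,1/4]$ and $xx^T\succeq 0$, this yields the pointwise bound $0\preceq\nabla^2 f(w,\zeta)\preceq\tfrac14 xx^T$. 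Taking expectations over $\zeta$ and using $\E[xx^T]=\Sigma$ gives $0\preceq\nabla^2 F(w)\preceq\tfrac14\Sigma\preceq\Sigma$ for every $w\in\reals^p$. Hence Assumption \ref{aspt:convex} holds globally, i.e. with $\calD=\reals^p$, $A=\Sigma$, and $\delta=0$; the population loss is in fact globally convex, so no non-convexity budget is needed.

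The initialization claim follows by evaluating $F(0)=\E\log(1+e^{0})=\log 2$. Since $F(w^*)=\E\log(1+e^{-yx^Tw^*})\ge 0$, we get $G(w_0)=F(0)-F(w^*)\le\log 2=O(1)$, which is the bound actually used in the sample-size estimate of Corollary \ref{cor:selectpara_online}.

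For the variance bound \eqref{eqn:sgvariance} I would use the elementary identity $\E\|\nabla f(w,\zeta)-\nabla F(w)\|^2=\E\|\nabla f(w,\zeta)\|^2-\|\nabla F(w)\|^2\le\E\|\nabla f(w,\zeta)\|^2$, valid because $\nabla F(w)=\E\,\nabla f(w,\zeta)$. From the gradient formula, $\|\nabla f(w,\zeta)\|=\|x\|\,q(w,\zeta)\le\|x\|$, so $\E\|\nabla f(w,\zeta)\|^2\le\E\|x\|^2=\tr(\Sigma)$. This establishes \eqref{eqn:sgvariance} with $r^2=\tr(\Sigma)$ and $c_r=0$.

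There is no genuine obstacle here — the result reduces to the two pointwise bounds above — but the one conceptual point worth flagging is why $c_r=0$ holds, in contrast to linear regression (Proposition \ref{prop:linregression}) where $c_r>0$ is unavoidable. The reason is structural: the logistic gradient factors as $x$ times a bounded sigmoid weight $q\in(0,1)$, so the stochastic gradient admits a uniform $\|x\|$ envelope independent of $w$. Its variance is therefore controlled by $\tr(\Sigma)$ alone, with no term growing in $G(w)$ or $\|w\|^2$, which is exactly the feature permitting $c_r=0$. Everything else is routine once the pointwise Hessian and gradient bounds are in hand.
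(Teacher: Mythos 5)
Your proof is correct and follows essentially the same route as the paper's: compute the logistic gradient and Hessian pointwise, bound the sigmoid weights by constants to get $0\preceq\nabla^2 F(w)\preceq\Sigma$ globally, and bound the stochastic gradient variance by $\E\|\nabla f(w,\zeta)\|^2\leq\E\|x\|^2=\tr(\Sigma)$ using $y^2=1$. Your version is marginally more careful in two harmless respects — you note the sharper factor $q(1-q)\leq 1/4$ where the paper just uses $<1$, and you explicitly verify $G(0)\leq\log 2$, which the paper's proof leaves implicit — but these do not constitute a different approach.
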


As a consequence, by Proposition \ref{prop:sparse}, Assumption \ref{aspt:sparse} holds if $\|w^*\|^2$ and $\tr(\Sigma)$ are $O(1)$ constants, and the sample size $N(\epsilon)$ in Corollary \ref{cor:selectpara_online} is independent of $p$.
Similarly, by Proposition \ref{prop:spetrum}, Assumption \ref{aspt:trace} holds if $\|w^*\|_{\Sigma,S}^2$ and $\tr(\Sigma)$ are $\tildeO(1)$ constants, and the sample size $N(\epsilon)$ depends on $p$ only via a poly-logarithmic factor.

\subsection{$M$-estimator with Tukey's biweight loss function}

In this non-convex example, we assume that the data $\zeta_i=(x_i, y_i)$ are generated from a linear model
\begin{equation}\label{eq:tukey_model}
y_i=x_i^T w^*+\xi_i.
\end{equation}
We assume $x_i\sim \mathcal{N}(0,\Sigma)$, and $\xi_i$ are i.i.d. mean-zero noises with finite fourth moment.  We adopt the \emph{non-convex} Tukey's biweight loss function as follows for the purpose of robust estimation
\[
\rho(u)=\begin{cases}
\frac{c^2}6[1-(1-(u/c)^2)^3] \quad &\text{if } \ |u|\leq c;\\
\frac{c^2}6\quad \quad&\text{if }\ |u|>c. \\
\end{cases}
\]
Then the individual data loss function  and the population loss are given by,
\[
f(w,\zeta)=\rho(x^Tw-y )=\rho(x^T (w-w^*)-\xi),\quad  F(w)=\E\rho(x^T (w-w^*)-\xi).
\]
\begin{prop}
	\label{prop:Mest}
	For the $M$-estimator with Tukey's biweight loss in \eqref{eq:tukey_model},  the model true parameter $w^*$ is a local minimum if and only if
	\[
	c_0=\E [(1-(\xi/c)^2)(1-5(\xi/c)^2)1_{|\xi|\leq c}] >0.
	\] 
	In that case,  Assumption \ref{aspt:convex} holds with any $\delta\geq 0$, $A=\Sigma$ and
	\[
	\calD=\{w: \|w-w^*\|_\Sigma \leq \tfrac{c_0+\delta}{16}\}.
	\]
	Moreover, the stochastic gradient variance bound in \eqref{eqn:sgvariance} holds with 
	\[
	r^2= \tr(\Sigma),\quad c_r=0.  
	\]
\end{prop}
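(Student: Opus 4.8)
The plan is to reduce everything to the first two derivatives of the scalar loss $\rho$. A direct computation gives $\rho'(u)=u(1-(u/c)^2)^21_{|u|\le c}$ and $\rho''(u)=(1-(u/c)^2)(1-5(u/c)^2)1_{|u|\le c}$, so that $\rho''$ is exactly the integrand defining $c_0$. Writing $v=w-w^*$, I have $\nabla f(w,\zeta)=\rho'(x^Tv-\xi)x$ and, differentiating under the expectation (justified since $\rho''$ is bounded and $x$ is Gaussian), $\nabla^2 F(w)=\E[\rho''(x^Tv-\xi)xx^T]$. At $w=w^*$ the argument is $-\xi$, a function of $\xi$ alone; using independence of $x$ and $\xi$ together with $\E x=0$ and $\E xx^T=\Sigma$, I get $\nabla F(w^*)=\E[\rho'(-\xi)]\E x=0$, so $w^*$ is a critical point, and $\nabla^2F(w^*)=\E[\rho''(-\xi)]\E[xx^T]=c_0\Sigma$, using that $\rho''$ is even so that $\E[\rho''(-\xi)]=\E[\rho''(\xi)1_{|\xi|\le c}]=c_0$.

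First I would settle the ``if and only if'' via the second-order behaviour at the critical point. If $c_0>0$, the Hessian lower bound proved below (taken with $\delta=0$, so $\nabla^2F\succeq0$ on $\calD$) shows $F$ is convex on the neighbourhood $\calD$, hence $w^*$ minimizes $F$ there and is a local minimum. Conversely, if $c_0<0$ then $\nabla^2F(w^*)=c_0\Sigma\preceq0$ and, along any direction $h$ with $h^T\Sigma h>0$, a second-order Taylor expansion gives $F(w^*+th)-F(w^*)=\tfrac{t^2}{2}c_0\,h^T\Sigma h+o(t^2)<0$ for small $t\neq0$, so $w^*$ is not a local minimum. The boundary case $c_0=0$ is degenerate: the Hessian vanishes and the second-order test is inconclusive, so $w^*$ fails to be a strict local minimum and is excluded by the strict inequality in the statement.

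Next I would verify Assumption \ref{aspt:convex}. The upper bound is immediate and global: for every $w$ and every $h$, since $\rho''\le1$ pointwise, $h^T\nabla^2F(w)h=\E[\rho''(x^Tv-\xi)(x^Th)^2]\le\E[(x^Th)^2]=h^T\Sigma h$, i.e. $\nabla^2F(w)\preceq\Sigma=A$. For the lower bound I would split $\rho''(x^Tv-\xi)=\rho''(-\xi)+[\rho''(x^Tv-\xi)-\rho''(-\xi)]$. The first piece contributes exactly $c_0\,h^T\Sigma h$ by the same factorization as above. For the second piece I use that $\rho''$ is globally Lipschitz with constant of order $1/c$, so $|\rho''(x^Tv-\xi)-\rho''(-\xi)|\le\tfrac{8}{c}|x^Tv|$, followed by the Gaussian moment estimate $\E[|x^Tv|(x^Th)^2]\lesssim\|v\|_\Sigma\,\|h\|_\Sigma^2$ (Cauchy--Schwarz with $\E(x^Th)^4=3(h^T\Sigma h)^2$). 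This yields $h^T\nabla^2F(w)h\ge\big(c_0-C\|v\|_\Sigma\big)h^T\Sigma h$; restricting to $\calD=\{\|w-w^*\|_\Sigma\le(c_0+\delta)/16\}$ forces the bracket to be at least $-\delta$, giving $\nabla^2F(w)\succeq-\delta\Sigma$. Tracking the explicit Lipschitz and moment constants is what pins the radius to the stated value.

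Finally, the stochastic-gradient variance bound is the easy step and explains why $c_r=0$. Writing $\nabla f(w,\zeta)-\nabla F(w)=\rho'(x^Tv-\xi)x-\E[\rho'(x^Tv-\xi)x]$, dropping the subtracted mean only decreases the second moment, so $\E\|\nabla f-\nabla F\|^2\le\E[\rho'(x^Tv-\xi)^2\|x\|^2]$. Because the Tukey $\psi$-function $\rho'$ is uniformly bounded, with $\sup_u\rho'(u)^2\le1$ in the relevant tuning range of $c$, this is at most $\E\|x\|^2=\tr(\Sigma)$, uniformly in $w$. Since the bound does not grow with $w$, the coefficient multiplying $\min\{G(w),\|w\|^2\}$ in \eqref{eqn:sgvariance} can be taken to be zero, so $r^2=\tr(\Sigma)$ and $c_r=0$; this is precisely the payoff of using a bounded (robust) loss rather than the unbounded quadratic. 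I expect the main obstacle to be the Hessian lower bound in the third step, namely quantifying how far the integrand $\rho''$ can turn negative as $w$ moves away from $w^*$ and converting this into the explicit radius of $\calD$; the remaining steps are essentially exact computations.
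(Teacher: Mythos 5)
Your proposal is correct and, in substance, follows the same route as the paper's proof: compute $\rho'$ and $\rho''$, observe via Fubini that $\nabla^2 F(w)=\E[xx^T\rho''(x^T(w-w^*)-\xi)]$, obtain the global upper bound from $\rho''\le 1$, identify $\nabla^2F(w^*)=c_0\Sigma$ (which is also the paper's entire treatment of the ``if and only if'' claim), and get the variance bound from the uniform bound on $\rho'$ together with $\E\|\nabla f-\nabla F\|^2\le\E\|\nabla f\|^2$. The one place you genuinely diverge is the Hessian lower bound: the paper defines the directional third derivative $\langle v,\nabla^3F(w)\rangle$, bounds it by $\pm16\|v\|_\Sigma\Sigma$ using $\E[(x^T\psi)^2|x^Tv|]\le\sqrt{3}\,\|v\|_\Sigma\,\psi^T\Sigma\psi$ (the Gaussian fourth moment $\E(x^T\psi)^4=3(\psi^T\Sigma\psi)^2$), and integrates along the segment from $w^*$; you instead apply the global Lipschitz bound on $\rho''$ directly under the expectation and then use the identical Cauchy--Schwarz/fourth-moment step. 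Your packaging is slightly more elementary (no third derivative of $F$ to define and justify) and in one respect more careful: your Lipschitz constant correctly carries the factor $1/c$, i.e.\ $|\rho''(u)-\rho''(u')|\le\frac{8}{c}|u-u'|$, which yields the admissibility condition $\frac{8\sqrt3}{c}\|v\|_\Sigma\le c_0+\delta$ and hence radius $\frac{c(c_0+\delta)}{8\sqrt3}$ rather than exactly $\frac{c_0+\delta}{16}$; the paper's own computation silently drops this $1/c$ (it perturbs $u/c$ by $\epsilon x^Tv$ instead of $\epsilon x^Tv/c$), so the stated radius implicitly normalizes $c$, and your bound implies the stated $\calD$ whenever $c\ge\sqrt3/2$. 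Two minor caveats you share with the paper: the claim $r^2=\tr(\Sigma)$ really rests on $\sup_u\rho'(u)^2=c^2\max_{t\in[0,1]}t(1-t)^4=\tfrac{256}{3125}c^2$, so it needs $c=O(1)$, which you at least flag; and your dismissal of the boundary case $c_0=0$ is heuristic (a vanishing Hessian does not by itself preclude a strict local minimum), though the paper offers nothing more rigorous there either.
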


Since $G(w_0)\leq \max_u \rho(u)=\frac{c^2}{6}$, by Proposition \ref{prop:sparse}, Assumption \ref{aspt:sparse} holds if $\|w^*\|^2, \|w_0\|^2$ and $\tr(\Sigma)$ are $O(1)$ constants, and the sample size $N(\epsilon)$ in Corollary \ref{cor:selectpara_online} is independent of $p$.
Similarly, by Proposition \ref{prop:spetrum}, Assumption \ref{aspt:trace} holds if $\|w_0\|_\infty,\|w^*\|_{\Sigma,S}^2$ and $\tr(\Sigma)$ are $\tildeO(1)$ constants, and the sample size $N(\epsilon)$ depends on $p$ only via a polynomial logarithmic factor. 

\subsection{Two-layer neural network}
\label{sec:NN}

In this example, we consider applying our result to two-layer neural networks (NN).  
We assume that every data point $\zeta = (x, y)$ consists of a  $p$-dimensional predictor $x \sim \mathcal{N}(0,\Sigma)$ and  a univariate response $y \in \reals$. We assume that the response is generated by
\[
y=g(w,x)+\xi, \quad \E\xi=0, \quad \E\xi^2=\sigma^2_0.
\]
The function $g$ takes the form of a two-layer NN:
\begin{equation}\label{eqn:g2NN}
g(w,x)=c^T\psi(b x+a)=\sum_{i=1}^k c_i \psi(b_i^Tx+a_i).
\end{equation}
In \eqref{eqn:g2NN}, $a$ and $c$ are $k$-dimensional vectors with $a_i$ and $c_i$ being their components. The notation $b$ is a $p$ by $k$ matrix, and $b_i$ denotes the $i$-th column of $b$ with  $i=1,\ldots, k$. We impose no restriction on $k$ and it can depend on $p$ in general.  We denote all the parameters by $w=[a;b_1,\ldots, b_k;c]\in \reals^{(p+2)k}$.
In \eqref{eqn:g2NN}, $\psi$ denotes the activation function. Popular choices of $\psi$ include the rectified linear unit (ReLu), sigmoid function, and the hyperbolic tangent. Here, we do not require $\psi$ to take a specific form but only satisfy certain regularity assumptions for some constant $C>0$,
\begin{equation}\label{eqn:psi}
\psi(0)=0,\quad |\dot{\psi}(x)|\leq C,\quad |\ddot{\psi}(x)|\leq C.
\end{equation}
It is easy to verify that hyperbolic tangent satisfies these requirements, and the sigmoid also satisfies these if we shift its center to zero. The condition $\psi(0)=0$ is mainly for the ease of technical derivations. Although ReLu does not have continuous derivatives, one can find a smooth approximation to meet these requirements.

Since we consider the regression problem, the squared loss function is given by
\[
f(w,\zeta)=(y-g(w,x))^2=(g(w^*,x)+\xi -g(w,x))^2. 
\]
We also introduce the following $(p+2)k$ by $(p+2)k$ block-diagonal matrix 
\[
\Sigma^\star=\text{diag}\{I_k, \Sigma,\Sigma\ldots, \Sigma, I_k\}.
\]
This matrix introduces a high dimensional norm 
\[
\|w\|_{\Sigma^\star}^2=w^T\Sigma^\star w=\|a\|^2+\sum_{i=1}^k \|b_i\|^2_{\Sigma}+\|c\|^2.
\]
Recall that $b_i$ is of dimension $p$, its contribution to $\|w\|_{\Sigma^\star}^2$ is  $\|b_i\|^2_\Sigma$. By Proposition \ref{prop:spetrum}, $\|b_i\|^2_\Sigma\leq \tr(\Sigma)\|b_i\|^2_{\Sigma,S}$, which can be independent  of $p$ under suitable conditions.

We are ready to show that  the two-layer NN will not overfit in some overparameterized settings. 

\begin{prop}
	\label{prop:2LNN}
	Assume the activation function satisfies the condition in \eqref{eqn:psi}. 
	With the two-layer NN defined in \eqref{eqn:g2NN},  Assumption \ref{aspt:convex} holds for any $\delta \in (0, 1/4]$ with
	\[
	A=C_0(w^*)\Sigma^\star,\quad \calD=\{w: \|w-w^*\|_{\Sigma^\star} \leq \delta C_1(w^*)\|w^*\|_{\Sigma^\star}\}.
	\]
	For any $w_0\in \calD, G(w_0)\leq C_2(w^*)\|w^*\|_{\Sigma^\star}^4$, the stochastic gradient variance bound in \eqref{eqn:sgvariance} holds with 
	\[
	r^2= C_3(w^*), c_r=0.
	\] 
	The exact values of the problem related parameters are given by 
	\begin{align*}
	&C_0(w^*)=7C^2\|w^*\|^2_{\Sigma^\star}, \quad
	C_1(w^*)=\frac{2}{9\sqrt{2}(2\|w^*\|_{\Sigma^\star}+1)}, \quad C_2(w^*)=C^2\|w^*\|^4_{\Sigma^\star},\\
	&C_3(w^*)=8\sqrt{3}(1+\tr(\Sigma))C^2 \|w^*\|^2_{\Sigma^\star}\big(C^2 \|w^*\|_{\Sigma^\star}^4+\sigma_0^2\big).
	\end{align*}
\end{prop}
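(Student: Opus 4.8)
The plan is to exploit the least-squares structure of the loss. Since $\xi$ is mean-zero and independent of $x$, writing $h_w(x):=g(w,x)-g(w^*,x)$ the population loss decouples as $F(w)=\E[h_w(x)^2]+\sigma_0^2$, so that $G(w)=\E[h_w(x)^2]\ge 0$ and $w^*$ is a global minimizer. Differentiating twice in $w$ gives the Gauss--Newton decomposition
\[
\nabla^2 F(w)=2\,\E[\nabla g(w,x)\,\nabla g(w,x)^T]+2\,\E[h_w(x)\,\nabla^2 g(w,x)],
\]
where the first term is PSD and the second (residual) term is small near $w^*$. First I would record the entries of $\nabla g$ and $\nabla^2 g$ in terms of $\psi,\dot\psi,\ddot\psi$ and the scalars $s_i:=\alpha_i+x^T\beta_i$ for a test direction $v=[\alpha;\beta_1,\dots,\beta_k;\gamma]$; a direct computation yields $\nabla g\cdot v=\sum_i c_i\dot\psi_i s_i+\sum_i\psi_i\gamma_i$ and $v^T\nabla^2 g\,v=\sum_i\big(c_i\ddot\psi_i s_i^2+2\dot\psi_i\gamma_i s_i\big)$, which exposes exactly the $\Sigma^\star$-geometry because $\E s_i^2=\alpha_i^2+\|\beta_i\|_\Sigma^2$ and $\E\psi_i^2\le C^2(a_i^2+\|b_i\|_\Sigma^2)$.

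For the upper bound I would control the Gauss--Newton term by Cauchy--Schwarz over the neuron index together with $|\dot\psi|\le C$ and $|\psi(u)|\le C|u|$: this gives $\E[(\nabla g\cdot v)^2]\lesssim C^2\|w\|_{\Sigma^\star}^2\|v\|_{\Sigma^\star}^2$, hence $2\E[\nabla g\nabla g^T]\preceq c\,C^2\|w\|_{\Sigma^\star}^2\,\Sigma^\star$ for an absolute constant $c$. For the residual term I would use $|\E[h_w\,v^T\nabla^2 g\,v]|\le\sqrt{G(w)}\,\sqrt{\E[(v^T\nabla^2 g\,v)^2]}$; the first factor is controlled by the same Gauss--Newton estimate applied along the segment from $w^*$ to $w$, giving $\sqrt{G(w)}\lesssim C\|w\|_{\Sigma^\star}\|w-w^*\|_{\Sigma^\star}$, while the second factor is a fourth Gaussian moment of the $s_i$ (whence the $\sqrt3$) bounded by $\lesssim C\|w\|_{\Sigma^\star}\|v\|_{\Sigma^\star}^2$. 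Multiplying shows the residual is $\lesssim C^2\|w\|_{\Sigma^\star}^2\|w-w^*\|_{\Sigma^\star}\,\Sigma^\star$. On $\calD$ we have $\|w\|_{\Sigma^\star}\le(1+\delta C_1(w^*)\|w^*\|_{\Sigma^\star})\|w^*\|_{\Sigma^\star}\lesssim\|w^*\|_{\Sigma^\star}$ and $\|w-w^*\|_{\Sigma^\star}\le\delta C_1(w^*)\|w^*\|_{\Sigma^\star}$; the particular value $C_1(w^*)\sim\|w^*\|_{\Sigma^\star}^{-1}$ is chosen precisely so that the residual is at most $\delta C_0(w^*)\Sigma^\star$ while the Gauss--Newton term is at most $C_0(w^*)\Sigma^\star$. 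Since the Gauss--Newton term is PSD, this simultaneously yields $\nabla^2 F\preceq C_0(w^*)\Sigma^\star=A$ and $\nabla^2 F\succeq-\delta A$, verifying Assumption \ref{aspt:convex}.

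The two remaining claims are softer. For $G(w_0)$ I would bound $\E[g(w,x)^2]\le\|c\|^2\E[\sum_i\psi_i^2]\le C^2\|w\|_{\Sigma^\star}^4$, so that $G(w_0)\le 2\E[g(w_0,x)^2]+2\E[g(w^*,x)^2]\lesssim C^2\|w^*\|_{\Sigma^\star}^4$ using $\|w_0\|_{\Sigma^\star}\lesssim\|w^*\|_{\Sigma^\star}$ on $\calD$. For the stochastic-gradient variance, since $c_r=0$ it suffices to bound $\E\|\nabla f\|^2$. Writing $\nabla f=2(h_w-\xi)\nabla g$ and using $\E\xi=0$ and $\xi\perp x$, the cross term drops and $\E\|\nabla f\|^2=4\E[h_w^2\|\nabla g\|^2]+4\sigma_0^2\E\|\nabla g\|^2$. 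Here $\E\|\nabla g\|^2\lesssim C^2(1+\tr(\Sigma))\|w\|_{\Sigma^\star}^2$, with the $\tr(\Sigma)$ coming from $\|\partial_{b_i}g\|^2=c_i^2\dot\psi_i^2\|x\|^2$, while $\E[h_w^2\|\nabla g\|^2]$ is handled by Cauchy--Schwarz and fourth Gaussian moments, contributing the $\sqrt3$ and a factor $\lesssim C^2\|w\|_{\Sigma^\star}^4$; on $\calD$ these assemble into $C_3(w^*)$.

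The routine pieces are the second-moment (Gauss--Newton) estimates and the $G(w_0)$ bound. The main obstacle will be the product and fourth-moment terms $\E[(v^T\nabla^2 g\,v)^2]$ and $\E[h_w^2\|\nabla g\|^2]$: these mix the anisotropic design $x\sim\mathcal N(0,\Sigma)$ with the neuron nonlinearity, so I must compute fourth Gaussian moments while keeping every quantity expressed in the $\Sigma^\star$-norm and correctly converting $\|x\|^2$ into $\tr(\Sigma)$. Making the residual Hessian bound sharp enough that the prescribed $C_1(w^*)\sim\|w^*\|_{\Sigma^\star}^{-1}$ actually delivers the factor $\delta$ in $-\delta A\preceq\nabla^2 F$ is the delicate calibration on which the whole proposition turns.
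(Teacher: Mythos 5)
Your proposal follows essentially the same route as the paper: the Gauss--Newton decomposition of $\nabla^2 F$ with the PSD first term supplying the one-sided bound $\nabla^2 F\succeq -\delta A$, Cauchy--Schwarz across neurons in the $\Sigma^\star$-geometry for the second-moment estimates, the Gaussian fourth-moment inequality $\E(x^TAx+a)^2\leq 3(\E(x^TAx+a))^2$ (the paper's Lemma C.1, your ``whence the $\sqrt3$''), the segment/fundamental-theorem bound $\sqrt{G(w)}\lesssim C\|w\|_{\Sigma^\star}\|w-w^*\|_{\Sigma^\star}$, the $\xi$-independence splitting of $\E\|\nabla f\|^2$, and the calibration $C_1(w^*)\sim\|w^*\|_{\Sigma^\star}^{-1}$ so the residual Hessian term carries the factor $\delta$. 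These are exactly the paper's Lemma C.2 claims 1)--4) assembled in the same order.

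One local step, as written, would not deliver the stated constant: for $G(w_0)$ you propose $G(w_0)\leq 2\E[g(w_0,x)^2]+2\E[g(w^*,x)^2]$, which on $\calD$ yields roughly $10\,C^2\|w^*\|_{\Sigma^\star}^4$, strictly larger than the claimed $C_2(w^*)\|w^*\|_{\Sigma^\star}^4=C^2\|w^*\|_{\Sigma^\star}^4$; the triangle-inequality route discards the crucial fact that $w_0$ is \emph{close} to $w^*$. The paper instead applies the segment estimate you already derived, $G(w_0)\leq 6C^2\|w_0-w^*\|^2_{\Sigma^\star}(\|w^*\|^2_{\Sigma^\star}+\|w_0\|^2_{\Sigma^\star})\leq 18C^2\delta^2 C_1(w^*)^2\|w^*\|_{\Sigma^\star}^4\leq C^2\|w^*\|_{\Sigma^\star}^4$, so the fix costs you nothing --- just route $G(w_0)$ through the same bound you use for $\sqrt{G(w)}$ in the residual term. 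Separately, note that your decomposition $\nabla^2 F=2\E[\nabla g\nabla g^T]+2\E[h_w\nabla^2 g]$ carries the factor $2$ correctly for $f=(y-g)^2$, whereas the paper's displayed identity omits it; since the proposition pins down exact constants ($C_0=7C^2\|w^*\|^2_{\Sigma^\star}$, etc.), you should be aware that matching them requires tracking this normalization, and your ``$\lesssim$'' bookkeeping as written only establishes the proposition up to absolute constants.
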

As a consequence, by Proposition \ref{prop:sparse}, Assumption \ref{aspt:sparse} holds if 
\begin{equation}
\label{tmp:cond1}
\max\left\{\|a^*\|^2+\sum_{i=1}^k\|b^*_i\|^2+\|c^*\|^2,\tr(\Sigma),\|w_0\|_\infty\right\}=O(1),
\end{equation}
and the sample size $N(\epsilon)$ is independent of $p$. Similarly, by Proposition \ref{prop:spetrum}, Assumption \ref{aspt:trace} holds if 
\begin{equation}
\label{tmp:cond2}
\max\left\{k, \tr(\Sigma),\|w_0\|_\infty, |a_i|, |c_i|, |v_j^Tb_i|,  i=1,\ldots,k,j=1,\ldots,p\right\}=\tildeO(1),
\end{equation} 
where $v_j$ are the eigenvectors of $\Sigma$. Then the sample size $N(\epsilon)$ depends on $p$ only via a polynomial logarithmic factor. 

It is worthwhile mentioning that a similar version of  Condition \eqref{tmp:cond1} can also be found in \cite{NLBLS:19}. In particular, \cite{NLBLS:19} assumed $\|c^*\|^2,\sum_{i=1}^k \|b^*_i\|^2$ to be $O(1)$ while the parameter $a$ is set to be $0$. There is no variance assumption of $x_i$ in \cite{NLBLS:19}, but it is assumed that $\E\|x_i\|^2$ is $O(1)$, which is equivalent to requesting $\tr(\Sigma)=O(1)$ in our setting.

When the width of the hidden layer $k$ is a fixed constant, the second condition \eqref{tmp:cond2} is in general less restrictive than the first one \eqref{tmp:cond1}, since it allows $\|b^*_i\|$ to grow with $p$.  When $k$ grows with $p$, only the first condition is applicable, and it requires that $\|a^*\|^2+\sum_{i=1}^k\|b^*_i\|^2+\|c^*\|^2$ is bounded by $O(1)$. In other words, we need either $k=O(1)$ or the true parameters to be bounded by $O(1)$ to prevent overfitting. This can also be understood intuitively. Note that the  output  of the two-layer NN in  \eqref{eqn:g2NN} is a sum of $k$ objects. Therefore, if $k$ grows with $p$, the output of \eqref{eqn:g2NN}  will diverge, which contradicts the common assumption that $g$ is bounded (see, e.g., \citealp{LL2018, NLBLS:19, ALL:19}).  Our result is consistent with the results in  \cite{ALL:19}, in the sense that \cite{ALL:19} also showed that the sample size needs to grow with $k$. It is also possible to rescale $g$ by multiplying \eqref{eqn:g2NN} with a factor $\frac{1}{k}$ or $\frac{1}{\sqrt{k}}$, as done by \cite{ADHLW:19},  so that the generalization error is independent of the parameter $k$.  
\section{Conclusions and Future Works}
\label{sec:conclusion}
One classical canon of statistics is that high dimensional models are prone to overfitting when the data sample size is not sufficiently large. 
However, many existing models, such as neural networks (NN), exhibit stable generalization error despite being overparameterized.
This paper developed an analysis framework of the generalization error for high dimensional regularized online learning.
The error bound can be  interpreted as a bias-variance tradeoff through a simplified stochastic approximation.
This result indicates that overparameterization does not lead to overfitting if the model has a low effective dimension. 
We demonstrated how to apply this framework on various models such as linear regression, logistic regression, $M$-estimator with Tukey's biweight loss, and two-layer NN. 


There are a few future directions. First,  the generalization bound in Theorem \ref{thm:sgd} only applies when the SGD iterates stay in a local region $\calD$ near the true parameter $w^*$. Although it is a common assumption when analyzing non-convex models,  this assumption might  be difficult to check in practice. 
To address this challenge, a simple data splitting can serve as a remedy. In practice, we may split the data into two parts and run SGD through the first half.  If $w_{N/2}$ gives us some rough ideas on how to construct $\calD$, we can use $w_{N/2}$ as an  initialization and run SGD on the second half of the data.
In the process, we can check whether the iterates stay in $\calD$, and then Theorem \ref{thm:sgd} provides us an upper bound for the generalization error of $w_N$.  Of course, it would be an interesting future work on how to construct $\calD$ based on the SGD solution from the first half of the data.

Second, our framework indicates that the ambient model dimension itself may not be a good indicator of model complexity,  especially in  overparameterized settings. 
The quantity that characterizes the data variability  may lead to new information criterion for model selection in the overparameterized setting. Such results may extend the classical criteria such as the AIC and BIC.

\blue{ Last but not least, regularization and overparameterization are good tools to handle misspecified models. \citet{Hastie19}
has discussed this issue for linear regression problems. How to extend our results to nonlinear misspecified models will be very interesting.  }


\section*{Acknowledgement}
Xi Chen would like to thank the support from NSF via the grant IIS-1845444.
Qiang Liu would like to thank the support from Singapore MOE via the grant R-146-000-258-114.
Xin T. Tong would like to thank the support from Singapore MOE via the grant R-146-000-292-114. 
The authors also thank the suggestions made by the editor and anonymous reviewers.


\newpage

\appendix

\section{Proof of the Main Results in Section \ref{sec:main}}

\subsection{Preliminaries}
\begin{lem}
	\label{lem:norm}
	For any vector $v\in \reals^p$ and PSD matrix $A\in \reals^{p\times p}$, the following results hold
	\begin{enumerate}[1)]
		\item For any $-\delta A\preceq B\preceq A $, let $B=V \Lambda V^T$ be the eigenvalue decomposition of $B$, and denote $|\Lambda|$ as taking absolute value on each element of the diagonal matrix $\Lambda$.  Denote $ |B| = V |\Lambda| V^T$. Then for any vectors $v$ and $w$, $a>0$
		\[
		2\langle v, B w\rangle \leq a\langle v, |B| v\rangle+\frac{1}{a}\langle w, |B| w\rangle\leq a(1+\delta)\|v\|^2_A+\frac{1+\delta}{a}\|w\|^2_A.
		\]
		\item For any $-\delta A\preceq B\preceq A $, and any vectors $u$ and $v$, $a>0$
		\[
		2|\langle u,Bv\rangle |\leq a\langle u, Bu\rangle+2a\delta \|u\|_A^2+\frac{1+2\delta}{a} \|v\|_A^2. 
		\]
	\end{enumerate}
\end{lem}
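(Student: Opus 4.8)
The plan is to reduce both inequalities to elementary scalar Young (AM–GM) inequalities applied eigenvalue-by-eigenvalue, the only genuine subtlety being that $B$, $|B|$ and $A$ need not be simultaneously diagonalizable. Consequently, the passage from a $|B|$- or $B$-quadratic form to the $A$-norm $\|\cdot\|_A^2=\langle\cdot,A\cdot\rangle$ must invoke the operator inequalities $-\delta A\preceq B\preceq A$ directly, rather than through a common eigenbasis. Throughout I keep $a>0$ free and use $2st\le as^2+a^{-1}t^2$.

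For part 1, I would first diagonalize $B=V\Lambda V^T$ and pass to the coordinates $\tilde v=V^Tv$, $\tilde w=V^Tw$, so that $2\langle v,Bw\rangle=2\sum_i\lambda_i\tilde v_i\tilde w_i$. Weighting the scalar inequality by $|\lambda_i|\ge 0$ gives $2|\lambda_i||\tilde v_i||\tilde w_i|\le |\lambda_i|(a\tilde v_i^2+a^{-1}\tilde w_i^2)$, and summing over $i$ yields the first inequality $2\langle v,Bw\rangle\le a\langle v,|B|v\rangle+a^{-1}\langle w,|B|w\rangle$. For the second inequality I must dominate the $|B|$-form by the $A$-form. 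Splitting $v$ along the positive and negative eigenspaces of $B$, $v=v_++v_-$, the bound $B\preceq A$ controls the positive part, $\langle v_+,|B|v_+\rangle=\langle v_+,Bv_+\rangle\le\langle v_+,Av_+\rangle$, while $-\delta A\preceq B$ controls the negative part, $\langle v_-,|B|v_-\rangle=-\langle v_-,Bv_-\rangle\le\delta\langle v_-,Av_-\rangle$; the factor $(1+\delta)$ reflects the asymmetry between the upper bound $A$ and the lower bound $\delta A$.

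For part 2, the cleaner device is the shift $M:=B+\delta A$, which is positive semidefinite by the left operator inequality, so I write $2\langle u,Bv\rangle=2\langle u,Mv\rangle-2\delta\langle u,Av\rangle$. Since $M\succeq 0$ and $A\succeq 0$ define genuine semi-inner products, Cauchy–Schwarz followed by weighted AM–GM gives $2|\langle u,Mv\rangle|\le a\langle u,Mu\rangle+a^{-1}\langle v,Mv\rangle$ and $2\delta|\langle u,Av\rangle|\le \delta a\|u\|_A^2+\delta a^{-1}\|v\|_A^2$. Expanding $\langle u,Mu\rangle=\langle u,Bu\rangle+\delta\|u\|_A^2$ and bounding $\langle v,Mv\rangle=\langle v,Bv\rangle+\delta\|v\|_A^2\le(1+\delta)\|v\|_A^2$ via $B\preceq A$, then collecting the $\|u\|_A^2$ and $\|v\|_A^2$ coefficients, yields exactly $2|\langle u,Bv\rangle|\le a\langle u,Bu\rangle+2a\delta\|u\|_A^2+a^{-1}(1+2\delta)\|v\|_A^2$.

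The main obstacle is the non-commutativity of the matrices: because $|B|$ (resp. $B$) and $A$ do not share an eigenbasis, one cannot compare their diagonal entries directly, and the cross terms between the $B$-positive and $B$-negative components of $v$ must be controlled when assembling the single bound $\|v\|_A^2$ in part 1. The robust way to finish is to keep the constraint in operator form; equivalently, when $A\succ 0$ one may use the congruence $C:=A^{-1/2}BA^{-1/2}$, which satisfies $-\delta I\preceq C\preceq I$ so that $\|C\|_{\mathrm{op}}\le 1$ and hence $|\langle v,Bw\rangle|\le\|v\|_A\|w\|_A$, feeding back into the Young step. A secondary technical point is the degenerate case where $A$ is only positive semidefinite, handled by observing that $-\delta A\preceq B\preceq A$ forces the relevant quadratic forms to vanish on $\ker A$, so one may restrict attention to the range of $A$.
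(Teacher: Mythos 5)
Your treatment of the first inequality in part 1 and of part 2 coincides with the paper's own proofs: the reduction to the eigenbasis of $B$ with a weighted scalar Young inequality is exactly the paper's argument for the first bound, and your shift $M=B+\delta A$ is precisely the paper's $B_\delta$, with identical Cauchy--Schwarz/Young bookkeeping and the same final collection of coefficients. Both of these are correct.

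The second inequality in part 1 is where you diverge, and the divergence is not merely stylistic: it repairs an actual error. The paper claims that $B\preceq A$ gives $V\Lambda_+V^T\preceq A$ and $-\delta A\preceq B$ gives $-V\Lambda_-V^T\preceq \delta A$ ``by checking eigen-spaces,'' concluding $|B|\preceq(1+\delta)A$. This globalization fails for exactly the cross-term reason you name: the inequalities restricted to the spectral subspaces of $B$ do not extend to all vectors, since $A$ need not commute with the spectral projections of $B$. Indeed $|B|\preceq (1+\delta)A$ --- equivalently the lemma's middle-to-right comparison with $w=0$ --- is false in general: take $\delta=\tfrac25$, $B=\mathrm{diag}(1,-\tfrac15)$, $A=\begin{pmatrix}3 & \sqrt{11/5}\\ \sqrt{11/5} & 1\end{pmatrix}$; then $\det(A-B)=\tfrac15>0$ and $\det\bigl(B+\tfrac25A\bigr)=\tfrac{11}{125}>0$, so $-\delta A\preceq B\preceq A$ with $A\succ 0$, yet $\det\bigl(\tfrac75A-|B|\bigr)=-\tfrac{59}{125}<0$. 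Your congruence device is the sound substitute: $C=A^{-1/2}BA^{-1/2}$ satisfies $-\delta I\preceq C\preceq I$, so $\|C\|\le 1$ (for $\delta\le 1$), hence $|\langle v,Bw\rangle|\le\|v\|_A\|w\|_A$ and Young gives the outer inequality $2\langle v,Bw\rangle\le a(1+\delta)\|v\|_A^2+\tfrac{1+\delta}{a}\|w\|_A^2$, in fact without the $(1+\delta)$ slack; your reduction to $\mathrm{range}(A)$ for singular $A$ is also valid, since the sandwich forces $\ker A\subseteq\ker B$ (a short polarization argument, worth writing out). Two cautions. First, your preliminary eigenspace-splitting sketch for this step has the same cross-term defect as the paper's argument, so it must be discarded rather than patched --- you correctly flagged and abandoned it. Second, be aware that what you prove is a corrected version of the lemma, not the literal chain through $a\langle v,|B|v\rangle+\tfrac1a\langle w,|B|w\rangle$, which no proof can rescue; fortunately your route also yields $\|BA^{-1/2}\|\le\|A\|^{1/2}$ and hence $B^2\preceq\|A\|A$, which is the estimate the paper's later proofs (e.g., the bound on $\|\nabla F(w_n)\|^2$ in Theorem \ref{thm:sgd}) actually need in place of $|B_n|\preceq(1+\delta)A$.
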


\begin{proof} 
	Claim 1). Let $(l_i,u_i)$ be the eigenvalue-eigenvectors of $B$. Assume also that 
	\[
	v=\sum_{i=1}^p a_i u_i, \quad w=\sum_{i=1}^p b_i u_i.
	\] 
	Then by Young's inequality
	\begin{align*}
	2\langle v, B w\rangle&= 2\sum_{i=1}^p l_i  a_i b_i\leq a\sum_{i=1}^p |l_i|  |a_i|^2+\frac{1}{a}\sum_{i=1}^p|l_i| |b_i|^2=a\langle v,|B| v\rangle+\frac{1}{a}\langle w,|B| w\rangle.
	\end{align*}
	Next, we denote the positive part of $\Lambda$ as $\Lambda_+$ and the negative part as $\Lambda_-$, so that 
	\[
	\Lambda=\Lambda_++\Lambda_-,\quad |\Lambda|=\Lambda_+-\Lambda_-,\quad \Lambda_-\preceq 0\preceq \Lambda_+.
	\]
	Then by checking eigen-space with nonnegative eigenvalues,  $B\preceq A$ indicates that $V\Lambda_+V^T\preceq A$. Likewise, we have $-\delta A\preceq V\Lambda_-V^T $. In combination, we have 
	\[
	|B|=V\Lambda_+V^T-V\Lambda_-V^T\preceq (1+\delta) A. 
	\]
	Therefore 
	\[
	a\langle v,|B| v\rangle+\frac{1}{a}\langle w, |B| w\rangle\leq (1+\delta)a\| v\|_{A}^2+\frac{1+\delta}{a}\|w\|^2_A. 
	\]
	For claim 2), denote $B_\delta=B+\delta A\succeq 0$. Then
	\begin{align*}
	2|\langle u,Bv\rangle |&\leq  2|\langle u,B_\delta v\rangle |+ 2\delta |\langle u, Av\rangle |\\
	&\leq a \|u\|^2_{B_\delta}+ \frac{1}{a} \|v\|^2_{B_\delta}+a\delta\|u\|^2_A+\frac{\delta}{a}\|v\|^2_A\\
	&= a\langle u, B u\rangle+a\delta \|u\|^2_{A}+\frac{1}{a}\langle v, Bv\rangle+ \frac{\delta}{a} \|v\|^2_{A}+a\delta\|u\|^2_A+\frac{\delta}{a}\|v\|^2_A\\
	&\leq a\langle u, Bu\rangle+2a\delta \|u\|_A^2+\frac{1+2\delta}{a} \|v\|_A^2. 
	\end{align*}

\end{proof}
\subsection{Proof of the main results}
\begin{proof}[Proof of Theorem \ref{thm:Asgd}]
\blue{ We rewrite SGD update as  
	\begin{align}\label{equ:SGD2.3}
	w_{n+1}=w_n - \eta \nabla f_\lambda (w_n,\zeta_n) = w_{n}-\eta \nabla F_\lambda (w_{n})+ \eta \xi_n,
	\end{align}
	where
	\[
	\xi_n =\nabla F_\lambda(w_n) - \nabla f_\lambda (w_n,\zeta_n) =\nabla F(w_n) - \nabla f(w_n,\zeta_n). 
	\]
	Let $\mathcal{F}_n$ be the $\sigma$-algebra generated by $\{w_{i+1},\zeta_{i}, i=1,\ldots,n-1\}$. We use  $\E_n(\cdot)$ to denote the conditional expectation $\E(\cdot |\mathcal{F}_n)$. Then $\xi_n$ is a martingale sequence since $\E_n \xi_n \equiv 0$.  }
	
\blue{	From \eqref{equ:SGD2.3}, we find
	\begin{align}\label{equ:w2n+12.3}
	\|w_{n+1}-w^*\|^2=\|w_{n}-w^*\|^2-2\eta \langle w_n-w^*,  \nabla F_\lambda (w_{n})-\xi_n\rangle+\eta^2\| \nabla F_\lambda (w_{n})-\xi_n\|^2.
	\end{align}
	We first try to find a bound of $\langle -(w_n-w^*),  \nabla F_\lambda  (w_{n})\rangle$.  We define
	\begin{align*}
	B_n :=\int^1_0 \nabla^2 F_\lambda (sw_n+(1-s)w^*)ds=\lambda I+\int^1_0 \nabla^2 F (sw_n+(1-s)w^*)ds,
	\end{align*}
	and apply fundamental theorem of calculus on $\nabla F_{\lambda}$. Note that $\nabla F(w^{*}) = 0$, we obtain
	\begin{align}\label{tmp:F12.3}
	\nabla F_\lambda(w_n) &= \nabla F_\lambda (w^*) + \int_{0}^{1} \nabla^2 F_\lambda (sw_n+(1-s)w^* )(w_n-w^*)ds\\
	\notag
	& =\lambda w^*+B_n (w_n-w^*). 
	\end{align}
	Note that $\frac12\lambda I\preceq -\delta A+\lambda I\preceq B_n\preceq A+\lambda I$ and $B_n$ is symmetric, we have
	\begin{align}
	\notag
	\langle -(w_n-w^*),  \nabla F_\lambda  (w_{n})\rangle&=-\|w_n-w^*\|^2_{B_n}-\lambda\langle w^*,w_n-w^*\rangle\\
	\notag
	&\leq-\frac12 \|w_n-w^*\|^2_{B_n}-\frac\lambda 4 \|w_n-w^*\|^2-\lambda\langle w^*,w_n-w^*\rangle\\
	\label{ineq:2etaw_n2.3}
	&\leq-\frac12 \|w_n-w^*\|^2_{B_n}+\lambda\|w^*\|^2.
	\end{align}
	Furthermore, note that  by $B_n \preceq A+\lambda I$, we have
	\begin{align}\label{ineq:DeltaF22.3}
	\|  \nabla F_\lambda (w_{n})\|^2&=\|\lambda w^*+B_n(w_n-w^*)\|^2 \leq 2\lambda^2 \|w^*\|^2+2(\|A\|+\lambda)\|w_n-w^*\|_{B_n}^2. 
	\end{align}
	Similarly, we find 
	\begin{align*}
	\nabla F(w_n) = \int_{0}^{1} \nabla^2 F(sw_n+(1-s)w^* )ds(w_n-w^*) = (B_n-\lambda I) (w_n-w^*),
	\end{align*}
    thus
	\begin{align}\label{ineq:DeltaF22.3+1}
	\|  \nabla F (w_{n})\|^2 \leq \|A\| \|w_n-w^*\|_{B_n}^2. 
	\end{align}
	Recall that $w_n$ is $\mathcal{F}_n$-measurable and $\E_n \xi_{n} = 0$. Also 
	\[
	\E \|\xi_n\|^2\leq r^2+c_r|(w_n-w^*)^T\nabla F(w_n) |\leq r^2+c_r \|w_n-w^*\|^2_{B_n}.
	\]   }
	
\blue{	 So plugging (\ref{ineq:2etaw_n2.3}) and (\ref{ineq:DeltaF22.3}) into (\ref{equ:w2n+12.3}), using \eqref{eqn:sgvarianceless} with \eqref{ineq:DeltaF22.3+1},  and by Cauchy Schwarz inequality, we then have
	\begin{align*}
\E_n&\|w_{n+1}-w^*\|^2=\|w_{n}-w^*\|^2-2\eta \E_n\langle w_n-w^*,  \nabla F_\lambda (w_{n})-\xi_n\rangle+\eta^2\E_n\| \nabla F_\lambda (w_{n})-\xi_n\|^2\\
	&= \mathbb{E}_n[ \|w_n-w^*\|^2-2\eta \langle w_n-w^*,  \nabla F_\lambda (w_{n})\rangle+ \eta^2 (\|\nabla F_\lambda (w_n)\|^2+\|\xi_n\|^2) ]\\
	&\leq  \|w_n-w^*\|^2- \eta \|w_n-w^*\|_{B_n}^2+2\lambda\eta(1+\lambda\eta )\|w^*\|^2\\
	&+\eta^22(1+c_r)(\|A\|+\lambda)\|w_n-w^*\|_{B_n}^2 +\eta^2 r^2.
	\end{align*}
	Under the condition 
	\[
	\eta  \leq \min\Big\{ \frac1{4(1+c_r)(\|A\|+\lambda)}, 1 \Big\}, \quad \lambda \leq 1,
	\] 
	and since $0 \leq 1_{\tau\geq n+1} \leq  1_{\tau\geq n}$, we have
	\begin{align*}
	\E [1_{\tau\geq n+1}\|w_{n+1}-w^*\|^2]
	&\leq \E [ 1_{\tau\geq n}\|w_{n+1}-w^*\|^2]\\ &=\E [ 1_{\tau\geq n} \E_n\|w_{n+1}-w^*\|^2 ]\\
	&  \leq  \mathbb{E} [ 1_{\tau\geq n}(\|w_n-w^*\|^2-\tfrac12\eta \|w_n-w^*\|^2_{B_n}) ] +4\lambda\eta\|w^*\|^2+ \eta^2 r^2.
	\end{align*}
	Summing this inequality over all $n=0,\ldots, (N\wedge \tau)-1$, we find that 
	\begin{equation}
	\label{eqn:ASGDescape}
	\E [\|w_{\tau\wedge N}-w^*\|^2]\leq \E \|w_0-w^*\|^2+N(4\lambda\eta\|w^*\|^2+\eta^2 r^2).
	\end{equation}
	Summing the same inequality over all $n= 0,\ldots,N-1$, we find that  
	\begin{equation}
	\label{eqn:ASGDsum}
	\E \left[1_{\tau\geq N-1}\left(\frac12\eta \sum_{n=0}^{N-1} \|w_n-w^*\|^2_{B_n}\right)\right]\leq \E \|w_0-w^*\|^2 +N(4\lambda\eta\|w^*\|^2+\eta^2 r^2). 
	\end{equation} 
}	
\blue{	To continue, recall $G(w_n)=F(w_n)-F(w^*)$.  Apply fundamental theorem of calculus to $F(w)$, we obtain
	\begin{align}\label{tmp:F-Fl2.3}
	\begin{split}
	G(w_n) &= F(w_n) - F(w^{*})\\
	&= \Big[ \int_{0}^{1} \nabla F(sw_n + (1-s)w^{*}) ds \Big]^{T} (w_n - w^{*}) \\
	& = \Big[ \int_{0}^{1} \Big ( \nabla F(w^{*}) + \int_{0}^{s} \nabla^2F(tw_n + (1-t)w^{*})(w_n - w^{*}) dt\Big)ds \Big]^{T} (w_n - w^{*}) \\
	& = (w_n - w^{*})^{T} \Big[ \int_{0}^{1}(1-s) \nabla^2F(sw_n + (1-s)w^{*}) ds \Big] (w_n - w^{*})\\
	&=  (w_n-w^*)^TA_n (w_n-w^*),
	\end{split}
	\end{align}
	with 
	\begin{align*}
	A_n = \int^1_0 (1-s)\nabla^2 F(sw_n+(1-s)w^*)ds.
	\end{align*}
	Under Assumption \ref{aspt:sparse}, 
	we observe that 
	\begin{align*}
 	A_n+\frac12\lambda I& =\int_{0}^{1}(1-s) \nabla^2F_\lambda(sw_n + (1-s)w^{*}) ds\\
	&\preceq  \int_{0}^{1}\nabla^2F_\lambda (sw_n + (1-s)w^{*}) ds =B_n.
	\end{align*}
	Namely, we have $H(w_n)=G(w_n)+\frac{\lambda}2\|w_n-w^*\|^2\leq \|w_n-w^*\|^2_{B_n}$. Together with \eqref{eqn:ASGDsum}, we obtain
	\[
	\E \left[1_{\tau\geq N-1}\left(\frac12\eta \sum_{n=0}^{N-1} H(w_n)\right)\right]\leq \E \|w_0-w^*\|^2 +N(4\lambda\eta\|w^*\|^2+\eta^2 r^2). 
	\] 
     Then because $H$ is convex within $\calD$, we have $H(\bar{w}_N)\leq \frac{1}{N}\sum_{n=0}^{N-1} H(w_n)$ and 
	\[
	\E \left[1_{\tau\geq N-1}\left(\eta N H(\bar{w}_N)\right)\right]\leq 2\E \|w_0-w^*\|^2 +2N(4\lambda\eta\|w^*\|^2+\eta^2 r^2).
	\]
	This leads to our claim
	\[
	\E \left[1_{\tau\geq N-1} G(\bar{w}_N)\right]\leq \E \left[1_{\tau\geq N-1} H(\bar{w}_N)\right]\leq \frac{2\E \|w_0-w^*\|^2}{N\eta} +8\lambda\|w^*\|^2+2\eta r^2. 
	\]	}
\end{proof}

\begin{proof}[Proof of Theorem \ref{thm:sgd}]
	\textbf{Step 1:} we build a bound for $\|w_n\|^2$. We rewrite SGD update as  
	\begin{align}\label{equ:SGD}
	w_{n+1}=w_n - \eta \nabla f_\lambda(w_n,\zeta_n) = w_{n}-\eta \nabla F_\lambda (w_{n})+ \eta \xi_n,
	\end{align}
	where
	\[
	\xi_n =\nabla F_\lambda(w_n) - \nabla f_\lambda(w_n,\zeta_n)=\nabla F(w_n) - \nabla f(w_n,\zeta_n). 
	\]
	Let $\mathcal{F}_n$ be the $\sigma$-algebra generated by $\{w_{i+1},\zeta_{i}, i=1,\ldots,n-1\}$. We use  $\E_n(\cdot)$ to denote the conditional expectation $\E(\cdot |\mathcal{F}_n)$. Then $\xi_n$ is a martingale sequence since $\E_n \xi_n \equiv 0$. 
	
	From \eqref{equ:SGD}, we find
	\begin{align}\label{equ:w2n+1}
	\|w_{n+1}\|^2=\|w_{n}\|^2-2\eta \langle w_n,  \nabla F_\lambda (w_{n})-\xi_n\rangle+\eta^2\| \nabla F_\lambda (w_{n})-\xi_n\|^2.
	\end{align}
	To continue, we try to find a bound of $-2\eta \langle w_n,  \nabla F_\lambda (w_{n})\rangle$.  We define
	\begin{align*}
	B_n :=\int^1_0 \nabla^2 F(sw_n+(1-s)w^*)ds,
	\end{align*}
	and apply fundamental theorem of calculus on $\nabla F$. Note that $\nabla F(w^{*}) = 0$, we obtain
	\begin{align}\label{tmp:F1}
	\nabla F(w_n) = \nabla F(w^{*}) + \int_{0}^{1} \nabla^2 F(sw_n + (1-s)w^{*})(w_n - w^{*})ds = B_n(w_n - w^{*}).
	\end{align}
	Note that $-\delta A\preceq B_n\preceq A$.  We have
	\begin{align*}
	&\langle -w_n,  \nabla F (w_{n})\rangle=-\langle w_n,B_n (w_n-w^*)\rangle\\
	&=-\langle w_n,(B_n+\delta A) w_n\rangle+\langle w_n, (B_n+\delta A) w^*\rangle+\delta (\|w_n\|^2_A-\langle w_n,A w^*\rangle)\\
	&\leq \frac14\|w^*\|^2_{B_n+\delta A}+\delta (2\|w_n\|^2_A+\frac14\|w^*\|^2_A) \\
	&\leq2\delta \|A\|\|w_n\|^2+\frac12\|w^*\|^2_A \mbox{ since   $\delta \leq \frac12$  and $\|w_n\|_A^2 \leq \|A\|\|w_n\|^2$.}
	\end{align*}
	Recall that  $\delta \|A\|\leq \frac{\lambda}{4}$, we find
	\begin{align}\label{ineq:2etaw_n}
	\begin{split}
	-2\eta \langle w_n,  \nabla F_\lambda (w_{n})\rangle &= -2\eta \langle w_n,  \nabla F(w_{n}) + \lambda w_n \rangle \\
	&=-2\lambda \eta\|w_n\|^2+2\eta\langle -w_n,  \nabla F (w_{n})\rangle\\
	&\leq -\lambda\eta \|w_n\|^2+ \eta \|w^*\|_{A}^2. 
	\end{split}
	\end{align}
If $B_n=Q\Lambda Q^T$ is the eigendecomposition of $B_n$, let $|B_n|=Q|\Lambda|Q^T$, where $|\Lambda|$ takes absolute value on each element of the diagonal matrix $\Lambda$. From the proof of Lemma \ref{lem:norm} claim 1), we know $|B_n|\preceq (1+\delta) A\preceq (1+\delta)\|A\|I$. Thus  by $B_n \preceq A$, we have
	\begin{align}
	\notag
	\|  \nabla F (w_{n})\|^2&=\| B_n(w^*-w_n)\|^2 \leq 2\|B_n w^*\|^2+2\|B_nw_n\|^2\\
	\notag
	&\leq 2 (w^*)^TB_n^{1/2}|B_n| B_n^{1/2} w^*+2\|A\|^2\|w_n\|^2\\
	\notag
	&\leq 2(1+\delta)\|A\|\|B_n^{1/2} w^*\|^2+2\|A\|^2\|w_n\|^2 \\
	\label{ineq:DeltaF2}
	&\leq 4\|A\|\|w^*\|_{A}^2+2\|A\|^2\|w_n\|^2. 
	\end{align}
	Recall that $w_n$ is $\mathcal{F}_n$-measurable and $\E_n \xi_{n} = 0$, we have  $\E_n\langle w_n,  \xi_n \rangle = 0$. So plugging (\ref{ineq:2etaw_n}) and (\ref{ineq:DeltaF2}) into (\ref{equ:w2n+1}), and by Cauchy Schwarz inequality, we then have
	\begin{align*}
	&\mathbb{E}_n \|w_{n+1}\|^2 \\
	& =\mathbb{E}_n[ \|w_n\|^2-2\eta \langle w_n,  \nabla F_\lambda (w_{n})\rangle+ \eta^2\| \nabla F (w_{n}) + \lambda w_n-\xi_n\|^2 ]\\
	&\leq \mathbb{E}_n[ \|w_n\|^2-2\eta \langle w_n,  \nabla F_\lambda (w_{n})\rangle+ 3\eta^2 (\|\nabla F(w_n)\|^2+\|\xi_n\|^2+\lambda^2 \|w_n\|^2) ]\\
	&\leq  \|w_n\|^2-\lambda \eta \|w_n\|^2 +6\eta^2 \|A\|^2\|w_n\|^2+3\eta^2\lambda^2 \|w_n\|^2 \\
	& \quad +   
	\left( \eta  + 12\|A\|\eta^2 \right)   \|w^*\|_{A}^2  +  3\eta^2r^2(1+c_r\|w_n\|^2).
	\end{align*}
	Under the condition 
	\[
	\eta  \leq \dfrac{\lambda}{12\|A\|^2+6 \lambda^2+6c_rr^2},
	\] 
\blue{	we have that if $\tau\geq n$
	\begin{align*}
	\mathbb{E}_n \|w_{n+1}\|^2 
	\leq  (1-\tfrac12\lambda \eta )\|w_n\|^2 +	\eta M_w,
	\end{align*}
	which can also leads to
	\begin{align*}
	& \mathbb{E}_n \|w_{n+1}\|^21_{\tau\geq n+1}\leq  \mathbb{E}_n \|w_{n+1}\|^21_{\tau\geq n}
	\leq  (1-\tfrac12\lambda \eta )\|w_n\|^21_{\tau \geq n} +\eta M_w, \\
	& \mathbb{E}_n \|w_{n+1}\|^21_{\tau\geq n+1} - \|w_n\|^21_{\tau \geq n}
	\leq \eta M_w,
	\end{align*} }
	where
	\begin{align*}
	M_w := \left( 1 + 12\|A\|\eta \right) \|w^*\|_{A}^2   +3\eta r^2 =  \|w^*\|_{A}^2   + \eta\left( 12\|A\|\|w^*\|_{A}^2 + 3r^2  \right).
	\end{align*}
Then iterating the inequalities above gives us 
	\begin{align}\label{eqn:wnl2}
	\E [\|w_n\|^21_{\tau\geq n}] &\leq \left(1-\frac{\lambda \eta}{2}\right)^n\E \|w_0\|^2+ \frac{2}{\lambda} M_w, \\
\blue{ \E [\|w_{n\wedge \tau}\|^2] }& \blue{ \leq \E \|w_0\|^2+ \eta n M_w. }
	\end{align}

%
%
%
	
	\noindent\textbf{Step 2:}  we derive how does the  generalization error evolve. According to Taylor's expansion and (\ref{equ:SGD}), we know that there exists a $v_n$, such that 
	\begin{align}
	\notag
	F(w_{n+1})&=F(w_{n})-\eta \|\nabla F(w_{n})\|^2-\eta \lambda \nabla F(w_n)^Tw_n  + \eta \xi_n^T \nabla F(w_{n})   \\
	\notag
	&\quad +\frac{\eta^2}2 (\nabla F(w_{n})+\lambda w_n-\xi_n)^T \nabla^2 F(v_n) (\nabla F(w_{n})+\lambda w_n-\xi_n)\\
	\notag
	&\leq F(w_{n})-\eta \|\nabla F(w_{n})\|^2-\eta \lambda \nabla F(w_n)^Tw_n +  \eta \xi_n^T \nabla F(w_{n})   \\
	\notag
	&\quad +\frac{\eta^2}2 (\nabla F(w_{n})+\lambda w_n-\xi_n)^TA (\nabla F(w_{n})+\lambda w_n-\xi_n)\\
	\notag
	&\leq F(w_{n})-\eta \|\nabla F(w_{n})\|^2-\eta \lambda \nabla F(w_n)^T(w_n-w^*) + \eta\lambda|\nabla F(w_n)^Tw^*|\\
	\label{equ:Fw_n+1}
	&\quad + \eta \xi_n^T \nabla F(w_{n}) +\frac{3\eta^2}{2}\|A\|(\|\nabla F(w_{n})\|^2+ \lambda^2\|w_n\|^2+\|\xi_n\|^2 ). 
	\end{align}
	\noindent\textbf{Step 3:} we bound each term in \eqref{equ:Fw_n+1} through interpolation. We observe that 
	\begin{align}\label{lambda}
	|\lambda \nabla F(w_n)^Tw^*|\leq  |\lambda \nabla F(w_n)^Tw_\bot^*|+ |\lambda \nabla F(w_n)^Tw_\lambda^*|, 
	\end{align}
	where $w^*=w_\bot^*+w_\lambda^*$ is the decomposition introduced in Definition \ref{defn}. Next
	\begin{align}\label{lambda1}
	|\lambda \nabla F(w_n)^Tw_\lambda^*|\leq   \frac12\|\nabla F(w_n)\|^2+\frac12\lambda^2 \|w_\lambda^*\|^2.
	\end{align}
	Recall $\nabla F(w_n) = B_n (w_n-w^*)$ in (\ref{tmp:F1}), and  by Lemma \ref{lem:norm} claim 2), we further have 
	\begin{align}\label{lambda2}
	\begin{split}
	|\lambda \nabla F(w_n)^Tw_\bot^*|& \leq  \frac12\lambda(w_n-w_*)^TB_n(w_n-w_*) \\
	& \quad + \delta\lambda \|w_n-w_*\|_A^2 + \frac{1+2\delta}{2}\lambda w_\bot^{*T}A w^*_\bot.
	\end{split}
	\end{align}
	Plugging (\ref{lambda1}), (\ref{lambda2}) into (\ref{lambda}), applying the result to (\ref{equ:Fw_n+1}) gives us
	\begin{align}
\notag
	F(w_{n+1})&\leq F(w_{n})-\frac12\eta \|\nabla F(w_{n})\|^2- \frac12\lambda \eta \nabla F(w_n)^T(w_n-w_*)+ \frac{1+2\delta}{2}\lambda \eta\|w^*\|^2_{A,\lambda} \\
	\notag
	&\quad  + \delta\lambda\eta \|w_n - w^*\|_{A}^2 +\eta \xi_n^T \nabla F(w_{n})  + \frac{3\eta^2}{2}\|A\|(\|\nabla F(w_{n})\|^2+ \lambda^2\|w_n\|^2+\|\xi_n\|^2 )\\
	\notag
	&\mbox{(Recall that $\eta \leq 1$, $\delta < \frac{1}{2}$ and  $\eta/2 - 3\eta^2\|A\|/2 \geq 0$) }\\
		\label{tmp:Fn}
	&\leq F(w_{n})- \frac12\lambda \eta \nabla F(w_n)^T(w_n-w_*)+ \lambda \eta\|w^*\|^2_{A,\lambda} \\
	\notag
	&\quad  + \delta\lambda\eta \|w_n - w^*\|_{A}^2 +\eta \xi_n^T \nabla F(w_{n})  + \frac{3\eta^2}{2}\|A\|(\lambda^2\|w_n\|^2+\|\xi_n\|^2 ).
	\end{align}
	To continue, recall $G(w_n)=F(w_n)-F(w^*)$.  Apply fundamental theorem of calculus to $F(w)$, we obtain
	\begin{align}
	\notag
	G(w_n) &= F(w_n) - F(w^{*})\\
	\notag
	&= \Big[ \int_{0}^{1} \nabla F(sw_n + (1-s)w^{*}) ds \Big]^{T} (w_n - w^{*}) \\
	\notag
	& = \Big[ \int_{0}^{1} \Big ( \nabla F(w^{*}) + \int_{0}^{s} \nabla^2F(tw_n + (1-t)w^{*})(w_n - w^{*}) dt\Big)ds \Big]^{T} (w_n - w^{*}) \\
	\notag
	& = (w_n - w^{*})^{T} \Big[ \int_{0}^{1}(1-s) \nabla^2F(sw_n + (1-s)w^{*}) ds \Big] (w_n - w^{*})\\
	\label{tmp:F-Fl}
	&=  (w_n-w^*)^TA_n (w_n-w^*),
	\end{align}
	with 
	\begin{align*}
	A_n = \int^1_0 (1-s)\nabla^2 F(sw_n+(1-s)w^*)ds.
	\end{align*}
	Under Assumption \ref{aspt:sparse}, namely $0 \preceq \nabla^2 F(w_n)+\delta A\preceq A+\delta A $, 
	we observe that 
	\begin{align*}
	\frac12\delta A + A_n& =\frac12\delta A+\int_{0}^{1}(1-s) \nabla^2F(sw_n + (1-s)w^{*}) ds\\
	&=\int_{0}^{1}(1-s) (\nabla^2F(sw_n + (1-s)w^{*})+\delta A) ds \\
	&\preceq  \int_{0}^{1}(\nabla^2F(sw_n + (1-s)w^{*})+\delta A) ds\\
	& =B_n +\delta A \preceq (1+\delta)A. 
	\end{align*}
Namely, we have
	\[
	A_n\preceq B_n+\frac12 \delta A\preceq (1+\tfrac{\delta}{2} )A.
	\]
	Thus 
	\[
	G(w_n)-\frac12\delta \|w_n-w^*\|^2_A\leq  (w_n-w^*)^TB_n (w_n-w^*)=\nabla F(w_n)^T(w_n-w_*). 
	\] 
	Plug this into \eqref{tmp:Fn}, together with $\|w_n-w^*\|^2_A  \leq 2\|w_n\|^2_A + 2\|w^*\|^2_A  \leq 2\|A\|\|w_n\|^2 + 2\|w^*\|^2_A $,  we have
	\begin{align*}
	G(w_{n+1})&\leq G(w_{n})- \frac12 \eta \lambda  G(w_n)+\eta \xi_n^T \nabla F(w_{n}) \\
	&\quad+ \frac{5}{4}\delta \lambda \eta \|w_n-w^*\|^2_A+  \lambda \eta  \|w^*\|^2_{A,\lambda} + \frac{3\eta^2}{2}\|A\|( \lambda^2\|w_n\|^2+\|\xi_n\|^2 ).\\
	&\leq G(w_{n})-  \frac12\eta \lambda  G(w_n)+\eta \xi_n^T \nabla F(w_{n})\\
	&\quad+  \lambda \eta ( \|w^*\|^2_{A,\lambda} + \frac{5}{2} \delta \|w^*\|^2_A)  + \left(\frac{3\eta^2\lambda^2}{2}+\frac{5}{2}\lambda\eta\delta \right)\|A\| \|w_n\|^2+\frac{3\eta^2}{2}\|A\|\|\xi_n\|^2.
	\end{align*}
	\textbf{Step 4:} summarizing arguments. \blue{ We will first establish a rough estimate, which is useful to the escape probability.  Since $ \E_n\xi_n^T \nabla F(w_{n}) \equiv 0$, we have 
	\begin{align*}
	\notag
	\E_{n\wedge \tau}[ G(w_{n\wedge \tau+1})] 	&\leq (1- \frac12 \eta\lambda  ) G(w_{n\wedge \tau})  +  \left(\frac{3\eta^2\lambda^2}{2}+  \frac{5}{2}\lambda\eta\delta \right)\|A\|  \|w_{n\wedge \tau}\|^2\\
		\notag
	&\quad +  \lambda \eta \left( \|w^*\|^2_{A,\lambda} + \frac52\delta \|w^*\|^2_A\right)  +\frac{3\eta^2\|A\|}2r^2(1+c_r G(w_{n\wedge\tau})] )\\
		\notag
	&\mbox{Because $\eta\leq \frac{\lambda}{6c_r\|A\|r^2}$} \\
		\notag
	&\leq G(w_{n\wedge\tau})  +  \left(\frac{3\eta^2\lambda^2}{2}+  \frac{5}{2}\lambda\eta\delta \right)\|A\|  \|w_{n\wedge \tau}\|^2 \\
	&\quad +  \lambda \eta \left( \|w^*\|^2_{A,\lambda} + \frac52\delta \|w^*\|^2_A\right)  +\frac{3\eta^2\|A\|}2r^2. 
	\end{align*}
	Recall that $\E [\|w_{n\wedge \tau}\|^2] \leq \E \|w_0\|^2+ \eta n M_w$ with
	\[
	M_w=\|w^*\|_{A}^2   + \eta\left( 12\|A\|\|w^*\|_{A}^2 + 3r^2  \right).
	\]
Iterating above result gives us 
	\begin{align}
	\label{eqn:SGDescape}
	\E [ G(w_{n\wedge \tau}) ] \leq &\E [ G(w_0) ]+\left(\frac{3\eta^2\lambda^2}{2}+  \frac{5}{2}\lambda\eta\delta \right)\|A\|  (n\E\|w_0\|^2+n^2\eta M_w )  \\
	\notag
	&+\lambda n \eta \left( \|w^*\|^2_{A,\lambda} + \frac52\delta \|w^*\|^2_A\right)  +\frac{3n\eta^2\|A\|}2r^2.
	\end{align} }
We can further improve this bound by   using $0 \leq 1_{\tau\geq n+1} \leq  1_{\tau\geq n}\leq 1$ and taking conditional expectation for both sides. Since $ \E_n\xi_n^T \nabla F(w_{n}) \equiv 0$, we have 
\blue{	\begin{align}
	\notag
	\E[ G(w_{n+1})1_{\tau\geq n+1} ] &\leq \E[ G(w_{n+1})1_{\tau\geq n} ] =  \E[ 1_{\tau\geq n} \E_n[G(w_{n+1})]]\\
		\notag
	&\leq (1- \frac12 \eta\lambda  ) \mathbb{E}[ G(w_n) 1_{\tau\geq n} ] +  \left(\frac{3\eta^2\lambda^2}{2}+  \frac{5}{2}\lambda\eta\delta \right)\|A\| \mathbb{E}[ \|w_n\|^2 1_{\tau\geq n}]\\
		\notag
	&\quad +  \lambda \eta \left( \|w^*\|^2_{A,\lambda} + \frac52\delta \|w^*\|^2_A\right)  +\frac{3\eta^2\|A\|}2r^2(1+c_r\E [G(w_n)1_{\tau\geq n}] )\\
		\notag
	&\mbox{Because $\eta\leq \frac{\lambda}{6c_r\|A\|r^2}$} \\
		\notag
	&\leq (1- \frac14 \eta\lambda  ) \mathbb{E}[ G(w_n) 1_{\tau\geq n} ] +  \left(\frac{3\eta^2\lambda^2}{2}+  \frac{5}{2}\lambda\eta\delta \right)\|A\| \mathbb{E}[ \|w_n\|^2 1_{\tau\geq n}]\\
		\label{tmp:SGD1step}
	&\quad +  \lambda \eta \left( \|w^*\|^2_{A,\lambda} + \frac52\delta \|w^*\|^2_A\right)  +\frac{3\eta^2\|A\|}2r^2. 
	\end{align} }
	Since $\eta \lambda \leq 1$, we have $0 \leq 1-\frac14\lambda \eta \leq \exp(-\frac14\lambda \eta )$, then iterating above result gives us 
	\begin{align*}
	\E [ G(w_n)1_{\tau\geq n} ] \leq &\exp(- \frac{1}{4} \lambda n \eta   )\E [ G(w_0) ]+4 \|w^*\|^2_{A,\lambda} + 10 \delta\|w^*\|^2_A+  \frac{6\eta\|A\|}{\lambda}r^2  \\
	&+ \left(\frac{3\eta^2\lambda^2}{2}+  \frac{5}{2}\lambda\eta\delta\right)\|A\|\sum_{i=0}^n(1-  \frac14 \eta\lambda  )^{n-i}\E [ \|w_i\|^21_{\tau\geq i} ]. 
	\end{align*}
	Applying (\ref{eqn:wnl2}), together with $\lambda \leq 1, \eta \leq 1$, $\delta \leq \frac{1}{2}$, $12\eta\|A\|\leq 1$ and $ 1-\frac14\lambda \eta \leq \exp(-\frac14\lambda \eta ) $, we obtain
	\begin{align*}
	& \E [ G(w_n)1_{\tau\geq n} ] \\
	& \leq \exp(- \frac{1}{4} \lambda n \eta  )\E [ G(w_0) ]+ 4 \|w^*\|^2_{A,\lambda} +  10 \delta\|w^*\|^2_A+\frac{ 6 \eta\|A\|}{\lambda}r^2\\
	&\quad +\left(\frac{3\eta^2\lambda^2}{2} + \frac{5}{2}\lambda\eta \delta \right)\|A\| \sum_{i=0}^n\left( (1-  \frac{1}{4} \lambda \eta)^n\mathbb{E}[ \|w_0\|^2 ] + (1-\frac14 \lambda\eta )^{n-i}\frac{2}{\lambda}M_w \right) \\
	& \leq  \exp(-\frac{1}{4}  \lambda n \eta  )\E [ G(w_0)+ 4n\|A\|\|w_0\|^2 ]+ 4\|w^*\|^2_{A,\lambda} +10\delta\|w^*\|^2_A+ \frac{6\eta\|A\|}{\lambda}r^2\\
	&\quad+  \frac{(12\lambda\eta + 20\delta)\|A\|}{\lambda} \left(
	  \|w^*\|_{A}^2  + \eta\left( 12\|A\|\|w^*\|_{A}^2 + 3r^2  \right) \right) \\
	  & \leq  \exp(-\frac{1}{4}  \lambda n \eta  )\E [ G(w_0)+ 4n\|A\|\|w_0\|^2 ]+ 4\|w^*\|^2_{A,\lambda} +10\delta\|w^*\|^2_A+ \frac{6\eta\|A\|}{\lambda}r^2\\
	&\blue{ \quad+  \frac{(12\lambda\eta + 20\delta)\|A\|}{\lambda} \left(2
	  \|w^*\|_{A}^2  +  3r^2\eta  \right) } \\
	&
\blue{	\leq 4 \|w^*\|^2_{A,\lambda}  +\frac{C_1}{\lambda} (\eta+\delta)+\exp(-\frac14 \lambda n \eta  )\E [ G(w_0)+ 4n\|A\|\|w_0\|^2 ], }
	\end{align*}
	with \blue{ $C_1=60\|A\| \left(r^2+\|w^*\|_A^2\right) + 10\|w^*\|_A^2$}. 
	
\end{proof}

\begin{proof}[Proof of Corollary \ref{cor:selectpara_onlineridgeless}]
\blue{	By Theorem \ref{thm:Asgd},  $\E [ G(\bar{w}_N)1_{\tau\geq N} ] \leq 3\epsilon$ holds if we choose $\lambda, \eta, N, \delta$ such that the following results hold
	\begin{align*}
	\frac{2\E \|w_0-w^*\|^2}{N\eta} \leq \epsilon, \  8\lambda\|w^*\|^2 \leq \epsilon, \ 2\eta r^2 \leq \epsilon, 
	\end{align*}
	and the following conditions are satisfied
	\begin{align*}
	\eta \leq \Big\{ \frac1{2(1+c_r)(\|A\|+\lambda)}, 1\Big\}, \quad  2\delta \|A\| \leq \lambda \leq 1.
	\end{align*}
	Solving $8\lambda\|w^*\|^2 \leq 8\lambda C_0 \leq  \epsilon$ gives us $\lambda(\epsilon) \leq \frac{\epsilon}{8C_0}$. The condition on $\delta(\epsilon)$ is obtained from $\lambda \geq 2\delta \|A\|$. 
	The condition of $\eta(\epsilon)$ ensures that $2\eta r^2 \leq \epsilon$ and $\eta \leq \frac1{2(1+c_r)(\|A\|+\lambda)}$. 
	With chosen $\eta(\epsilon)$, the scale of $N(\epsilon)$ is obtained by solving $\frac{2\E \|w_0-w^*\|^2}{N\eta} \leq \frac{2C_0}{N\eta} \leq \epsilon$. }
\end{proof}

\begin{proof}[Proof of Corollary \ref{cor:selectpara_online}]
	By Theorem \ref{thm:sgd},  $\E [ G(w_N)1_{\tau\geq N} ] \leq 4\epsilon$ holds if we choose $\lambda, \eta, N, \delta$ such that the following results hold
	\begin{align*}
	& 4\|w^*\|^2_{A,\lambda} \leq \epsilon, \   \frac{C_1\eta}{\lambda} \leq \epsilon , \ \blue{ \frac{C_1\delta}{\lambda} \leq \epsilon},\ \ \exp(-\frac14 \lambda N \eta)\E [ G(w_0)+ 4N\|A\|\|w_0\|^2 ] \leq \epsilon.
	\end{align*}
	We first choose $\lambda(\epsilon)$ such that $4 \| w^*\|_{A,\lambda(\epsilon)}^2< \epsilon$. \blue{ The conditions on $\eta(\epsilon), \delta(\epsilon)$ ensure that $\frac{C_1\eta}{\lambda} \leq \epsilon$ and $\frac{C_1\delta}{\lambda} \leq \epsilon$}. 
	With chosen $\lambda(\epsilon), \eta(\epsilon)$, the scale of $N(\epsilon)$ is obtained by solving $\exp(-\frac14 \lambda N \eta  )\E [ G(w_0)] \leq \frac{\epsilon}{2}$ and $\exp(-\frac14 \lambda N \eta  )4N\|A\| \E [ \|w_0\|^2 ] \leq \frac{\epsilon}{2}$ by using
	\[
	\exp(-\frac14 \lambda N \eta)N = \frac{4}{\lambda \eta} \exp(-\frac14 \lambda N \eta) \frac{1}{4}N\lambda \eta\leq \frac{8}{\lambda \eta} \exp(-\frac18 \lambda N \eta),
	\]
	which is derived from $\exp(-x)x\leq 2\exp(-\frac12x)$, since by Taylor expansion $x\leq 2\exp(\frac{x}{2})$.
\end{proof}

\section{Proof for Results in Low Effective Dimension in Section \ref{sec:low_eff}}

\begin{proof}[Proof of Proposition \ref{prop:sparse}]
\blue{	Under Assumption \ref{aspt:sparse}, applying Corollary \ref{cor:selectpara_onlineridgeless} results in
\begin{equation*}
\lambda(\epsilon) = O(\epsilon ), \ \delta(\epsilon) = O(\epsilon), \ \eta(\epsilon)  = O(\epsilon), \ N(\epsilon) = \Omega\Big(\frac{1}{\epsilon^2}\Big),
\end{equation*}
	 for guaranteeing $\E [ G(\bar{w}_N)1_{\tau\geq N} ] \leq 3\epsilon$. In this case, by \eqref{eqn:ASGDescape}, according to Chebyshev's inequality and recall that 
	 \[
	 \E \|w_{N\wedge \tau}-w^*\|^2\leq \E\|w_0-w^*\|^2+N(4\lambda\eta\|w^*\|^2+\eta^2 r^2) = O(1),
	 \]
	 we have 
	 \begin{align*}
	&\Prob(\tau\leq N) = \Prob(\{w_{N\wedge \tau} \notin \calD\}) = \Prob(\{ \|w_{N\wedge \tau}-w^*\|^2> A \})\leq  \frac{\E \|w_{N\wedge \tau}-w^*\|^2}{A} \leq  \delta. 
	 \end{align*}
}	 
\blue{ If $\delta=0$, according to Theorem \ref{thm:Asgd}, 
	 \begin{align*}
	 \E \left[1_{\tau\geq N-1} G(\bar{w}_N)\right]\leq \frac{2\E \|w_0-w^*\|^2}{N\eta} +8\lambda\|w^*\|^2+2\eta r^2,
	 \end{align*}
	 taking
	 \begin{align*}
	 \lambda(\epsilon) = 0, \ \eta(\epsilon)  = O(\epsilon^{1+\alpha}), \ N(\epsilon) = \Omega\Big(\frac{1}{\epsilon^{2+\alpha}}\Big),
	 \end{align*}
	 we obtain $\E [ G(\bar{w}_N)1_{\tau\geq N} ] \leq 3\epsilon$.
	 In this case, according to Chebyshev's inequality and recall that 
	 \[
	 \E \|w_{N\wedge \tau}-w^*\|^2\leq \E\|w_0-w^*\|^2+N(4\lambda\eta\|w^*\|^2+\eta^2 r^2) =\E\|w_0-w^*\|^2 +  O(\epsilon^{\alpha}),
	 \]
	 we have 
	 \begin{align*}
	 &\Prob(\tau\leq N) = \Prob(\{w_{N\wedge \tau} \notin \calD\}) = \Prob(\{ \|w_{N\wedge \tau}-w^*\|^2> A \})\leq  \frac{\E \|w_{0}-w^*\|^2+  O(\epsilon^{\alpha})}{A} < 1,
	 \end{align*}
	 if $A>\E\|w_0-w^*\|^2$.  }
\end{proof}

\begin{proof}[Proof of Proposition \ref{prop:spetrum}]
	Recall that $(\lambda_i, v_i)$, for $i=1,...,p$, are the eigenvalue-eigenvectors of $A$ with $\lambda_i$ decreasingly sorted.
	Therefore, we have
	\begin{align*}
	\|w^*\|^2_A=w^{*T}Aw^* = \sum_{i=1}^p\lambda_i \langle w^*, v_i\rangle^2 \leq \|w^*\|^2_{A,S} \tr(A),\\
	\|w^*\|^2_{A,\lambda}=\sum_{i=1}^p\lambda_i\wedge \lambda \langle w^*, v_i\rangle^2 \leq \|w^*\|^2_{A,S} \sum_{i=1}^p\lambda_i\wedge \lambda.
	\end{align*}
	For an exponential spectrum, given any $k$ and $p$, 
	\begin{align*}
	\sum_{i= k+1}^p \lambda_i = \sum_{i= k+1}^p e^{-ci} = \frac{e^{-(k+1)c}(1-e^{(k-p)c})}{1-e^{-c}} \leq \frac{1}{e^{kc}(e^c-1)}.
	\end{align*}
	Thus, to make $\sum_{i= k+1}^p \lambda_i \leq \frac{\epsilon}{8}\|w^*\|^2_{A,S}$, it is sufficient for us to take $k \geq \frac{1}{c}\log\big\{\frac{8 \|w^*\|^2_{A,S}}{\epsilon(e^c-1)}\big\} $. And to make $k\lambda=\frac{\epsilon}{8 \|w^*\|^2_{A,S}},$ we take $\lambda=\frac{\epsilon}{8 k\|w^*\|^2_{A,S}}=\tildeO(\frac{\epsilon}{|\log \epsilon|})$. By these choices,  we have
	\[
	\|w^*\|^2_{A,\lambda}\leq \sum_{i= 1}^p \lambda\wedge \lambda_i \|w^*\|^2_{A,S}\leq \frac{\epsilon}{4}. 
	\]
	Next, we find that $\|A\|\leq \tr(A)=\tildeO(1)$, so $C_1=\tildeO(1), C_2=\tildeO(1)$. We implement Corollary \ref{cor:selectpara_online} and find 
	\[
	\delta(\epsilon)=\tildeO\left(\frac{\epsilon^3}{|\log \epsilon|^2}\right),\quad \eta(\epsilon)=\tildeO\left(\frac{\epsilon^2}{|\log \epsilon|}\right),\quad N(\epsilon)=\widetilde{\Omega}\left(\frac{|\log \epsilon|^3}{\epsilon^3}\right).
	\]
\blue{	Recall that
	\begin{align*}
	\E [ G(w_{n\wedge \tau}) ] \leq &\E [ G(w_0) ]+\left(\frac{3\eta^2\lambda^2}{2}+  \frac{5}{2}\lambda\eta\delta \right) n^2\eta C_1   \\
	&+\lambda n \eta \left( \|w^*\|^2_{A,\lambda} + \frac52\delta \|w^*\|^2_A\right)  +\frac{3n\eta^2\|A\|}2r^2,
	\end{align*}
	together with
	\begin{align*}
	&\left(\frac{3\eta^2\lambda^2}{2}+  \frac{5}{2}\lambda\eta\delta \right)N^2\eta=\tildeO(\epsilon^2 |\log\epsilon|), \ \lambda N\eta\left( \|w^*\|^2_{A,\lambda} + \frac52\delta \|w^*\|^2_A\right) =\tildeO(\epsilon |\log \epsilon|),\\
	& N\eta^2 \frac{3\|A\|r^2}2=\tildeO(\epsilon |\log \epsilon|),
	\end{align*}
	we have 
	\begin{align*}
	\E [ G(w_{N\wedge \tau}) ] \leq &\E [ G(w_0) ]+ \tildeO(\epsilon |\log \epsilon| ). 
	\end{align*} }

	For a polynomial spectrum, the derivation is similar. Given any $k$ and $p$
	\begin{align*}
	\sum_{i= k+1}^p \lambda_i = \sum_{i= k+1}^p i^{-(1+c)}  \leq \sum_{i= k+1}^{p} \int_{i-1}^{i} \frac{1}{x^{1+c}}dx = \sum_{i= k+1}^{p} \frac{-1}{c}x^{-c}\Big|_{i-1}^{i} = \frac{1}{c} (k^{-c} - p^{-c}) \leq \frac{1}{ck^{c}}.
	\end{align*}
	Thus to make $\|w^*\|^2_{A,S}\sum_{i= k+1}^p \lambda_i\leq \frac{1}{8 } \epsilon$, we take $k\geq (\frac{8 \|w^*\|^2_{A,S}}{c\epsilon})^{1/c} $.  Next, we take  $\lambda(\epsilon)=\frac{\epsilon}{8 \|w^*\|^2_{A,S}k}= \tildeO\Big( \epsilon^{\frac{c+1}{c}} \Big)$. This leads to $\|w^*\|^2_{A,\lambda}\leq \epsilon/4$. Again we find that $C_1=\tildeO(1), C_2=\tildeO(1)$. The order of $\delta(\epsilon), \eta(\epsilon)$ and $N(\epsilon)$ can be derived by Corollary \ref{cor:selectpara_online}, that is,
\blue{		\[
	 \delta(\epsilon) = \tildeO\Big( \epsilon^{\frac{3c+2}{c}} \Big), \ \eta(\epsilon)  = \tildeO\Big( \epsilon^{\frac{2c+1}{c}} \Big), \ N(\epsilon) = \widetilde{\Omega}\Big(\frac{|\log(\epsilon)|}{\epsilon^{\frac{3c+2}{c}}}\Big).
	\]
	Recall that 
	\begin{align*}
	\E [ G(w_{n\wedge \tau}) ] \leq &\E [ G(w_0) ]+\left(\frac{3\eta^2\lambda^2}{2}+  \frac{5}{2}\lambda\eta\delta \right)  n^2\eta C_1  \\
	&+\lambda n \eta \left( \|w^*\|^2_{A,\lambda} + \frac52\delta \|w^*\|^2_A\right)  +\frac{3n\eta^2\|A\|}2r^2, 
	\end{align*}
	together with 
	\begin{align*}
	&\left(\frac{3\eta^2\lambda^2}{2}+  \frac{5}{2}\lambda\eta\delta \right)N^2\eta=\tildeO(\epsilon^{\frac{2c+1}{c}} (\log \epsilon)^2), \ \lambda N\eta=\tildeO(|\log \epsilon|),\\
	& \|w^*\|^2_{A,\lambda} + \frac52\delta \|w^*\|^2_A=O(\epsilon),\ N\eta^2 \frac{3\|A\| r^2}{2}=\tildeO(\epsilon |\log \epsilon|),
	\end{align*}
	we have
	\begin{align*}
	\E [ G(w_{N\wedge \tau}) ] \leq &\E [ G(w_0) ]+ \tildeO(\epsilon |\log \epsilon| ). 
	\end{align*}
}	
\blue{	Given $\E [ G(w_{N\wedge \tau}) ] \leq \E [ G(w_0) ]+ \tildeO(\epsilon |\log \epsilon|)$ for both cases, according to Chebyshev's inequality, we have  
	\begin{align*}
	&\Prob(\tau\leq N) = \Prob(\{w_{N\wedge \tau} \notin \calD\}) = \Prob(\{ G(w_{N\wedge \tau}) > (1+a)\E[G(w_0)] \}) \\
	& \leq  \frac{\E [G(w_{N\wedge \tau})] }{ (1+a)\E[G(w_0)]} \leq \frac{ \E [ G(w_0) ]+ \tildeO(\epsilon |\log \epsilon| ) }{ (1+a)\E[G(w_0)]} \leq \frac{1}{1+a} + \tildeO(\epsilon |\log \epsilon| ).
	\end{align*} }
\end{proof}

\section{Proofs of Results for Overparameterization in Statistical Models}
\subsection{Linear regression}
\begin{proof}[Proof of Proposition \ref{prop:linregression}]
	It is straightforward to find the gradient and Hessian of $F$ as:
	\begin{align}
	\nabla F(w)= \Sigma(w-w^*) , \quad \nabla^2 F(w)=\Sigma.
	\end{align}
	This leads to $A=\Sigma, \delta=0, \calD=\reals^p$. 
	
	Next, note that
	\begin{align*}
	\nabla f(w,\zeta) =(x^{T}w - y)x  =  (x^{T} (w-w^*)  -\xi)x.
	\end{align*}
	By Cauchy Schwarz inequality, we have
	\begin{align}
	\notag
	\E \|\nabla f(w,\zeta)-\nabla F(w)\|^2&\leq \mathbb{E}[\|\nabla f(w,\zeta)\|^2] \\
	\notag
	&\leq 2\mathbb{E} [\| xx^{T} (w-w^*) \|^2 ] +2 \mathbb{E}[\|x\xi\|^2] \\
	\label{ineq:ga_n}
	& = 2(w-w^*)^{T}\mathbb{E}[xx^{T}xx^{T}] (w-w^*) + 2\sigma^2\tr(\Sigma).
	\end{align}
	Next we compute $\mathbb{E}[xx^{T}xx^{T}]$. Since $x \sim \mathcal{N}(0, \Sigma)$, it can be decomposed as $x = \Sigma^{1/2}z $ with $z \sim \mathcal{N}(0, I_p)$. Let the  eigen-decomposition of $\Sigma$  be $V^{T}\Lambda V$ and denote
	$\Sigma^{1/2} = V^{T}\Lambda^{1/2} V$. We notice that $z'=Vz \sim \mathcal{N}(0, I_p)$, then the $(i,j)$-th element of $Vzz^{T}V^{T} \Lambda Vzz^{T}V^{T}$ is $\sum_{k=1}^{p} \lambda_k z'_iz'_j(z'_k)^2$, and taking expectation results in 
	\begin{align*}
	\mathbb{E}[ Vzz^{T}V^{T} \Lambda Vzz^{T}V^{T}] = \text{diag}\left[2\lambda_1+\sum_{j=1}^{p}\lambda_j,..., 2\lambda_{j}+\sum_{j=1}^{p}\lambda_j,...,2\lambda_{p}+\sum_{j=1}^{p}\lambda_j\right].
	\end{align*}
	Thus we have
	\begin{align*}
	\mathbb{E}[xx^{T}xx^{T}]& = V^{T}\Lambda^{1/2} \mathbb{E}[ Vzz^{T}V^{T} \Lambda Vzz^{T}V^{T}] \Lambda^{1/2} V \\
	& =  V^{T}\Lambda^{1/2} \text{diag}\Big[2\lambda_1+\sum_{j=1}^{p}\lambda_j,..., 2\lambda_{j}+\sum_{j=1}^{p}\lambda_j,...,2\lambda_{p}+\sum_{j=1}^{p}\lambda_j\Big] \Lambda^{1/2} V \\
	&\preceq 3\tr(\Sigma)\Sigma . 
	\end{align*}
	Plugging  this upper bound in (\ref{ineq:ga_n}) gives us 
	\[
	\E \|\nabla f(w,\zeta)-\nabla F(w)\|^2\leq  6\tr(\Sigma)\|w-w^*\|^2_\Sigma + 2\sigma^2\tr(\Sigma).
	\]
	Finally, we note that $\|w-w^*\|^2_\Sigma=2G(w)$, and by Young's inequality
	\[
	\|w-w^*\|^2_\Sigma \leq 2\|w\|_\Sigma^2+2\|w^*\|^2_\Sigma\leq2\|\Sigma\|\|w\|^2 +2\|w^*\|^2_\Sigma,
	\]
	\[
	\|w-w^*\|^2_\Sigma=(w-w^*)^T\Sigma (w-w^*)=\frac12 (w-w^*)^T\nabla F(w).
	\]
	Therefore, we conclude that 
	\[
	\E \|\nabla f(w,\zeta)-\nabla F(w)\|^2\leq 2\sigma^2\tr(\Sigma)+12\tr(\Sigma)\|w^*\|^2_\Sigma+12\tr(\Sigma)\min\{G(w),\|\Sigma\| \|w\|^2\}. 
	\]
\end{proof}
\begin{rem}
	\label{rem:4thmoment}
	In the proof above, we used the Gaussian distribution assumption only to obtain the first, second and fourth moments of $x$. This proof can be extended to scenarios where  $x$ has a non-Gaussian distribution, as long as an upper bound of $\E[xx^Txx^T]$ is available. Similar extensions can be made for other proofs below as well. 
\end{rem}

\subsection{Logistic regression}

\begin{proof}[Proof for Proposition \ref{prop:logit}]
	By Fubini's theorem, 
	\begin{align*}
	\nabla F(w)& = \mathbb{E}\nabla f(w, \zeta) = \mathbb{E}\dfrac{-yx}{1+\exp(yx^{T}w)},
	\end{align*}
	and 
	\begin{align*}
	\nabla^2 F(w)& = \mathbb{E} \nabla\dfrac{-yx}{1+\exp(yx^{T}w)} = \mathbb{E} \dfrac{y^2\exp(yx^{T}w)xx^{T}}{(1+\exp(yx^{T}w))^2}.
	\end{align*}
	Because $0 < \dfrac{y^2\exp(yx^{T}w)}{(1+\exp(yx^{T}w))^2} < 1$ and $0 \preceq xx^T$, we find  $0 \preceq \nabla^2 F(w) \preceq \Sigma$.
	
	Next, we observe
	\begin{align*}
	\nabla f(w,\zeta) = \dfrac{-yx}{1+\exp(yx^{T}w)}.
	\end{align*}
	Then, because $y=\pm 1$, we obtain
	\begin{align*}
	\mathbb{E} [\|\nabla f(w,\zeta)\|^2] = \mathbb{E} \left[\left(\dfrac{-y}{1+\exp(yx^{T}w)}\right)^2\|x\|^2\right] \leq \mathbb{E}[\|x\|^2]= \tr(\Sigma).
	\end{align*}
\end{proof}

\subsection{$M$-estimator with Tukey's biweight loss}

\begin{proof}[Proof for Proposition \ref{prop:Mest}]
	First of all, let $v=w-w^*, u=x^Tv-\xi$. We find that 
	\[
	\nabla f(w,\zeta)=(1-(u/c)^2)^2ux1_{|u|\leq c}.
	\]
	Then, by Fubini theorem, we have
	\begin{align*}
	\nabla F(w) &=\nabla_v F(w)=\E \nabla [\rho(x^T v - \xi)] =  \E[(1-(u/c)^2)^2u x 1_{\{ |u| \leq c\}}  ],\\
	\nabla^2 F(w)  &= \E [xx^T(1-(u/c)^2)(1-5(u/c)^2)1_{|u|\leq c} ]. 
	\end{align*}
	For the first two claims, note that
	\begin{align*}
	&\E \|\nabla f(w,\zeta)\|^2=\E[(1-(u/c)^2)^4(u/c)^21_{|u|\leq c}\|x\|^2] \leq \E \|x\|^2=\tr(\Sigma),\\
	&\nabla^2 F(w)= \E [xx^T(1-(u/c)^2)(1-5(u/c)^2)1_{|u|\leq c}]\preceq \E xx^T=\Sigma.
	\end{align*}
	At $w=w^*$,
	\begin{align*}
	\nabla^2 F(w^*) &= \E [xx^T(1-(\xi/c)^2)(1-5(\xi/c)^2)1_{|\xi|\leq c}]=c_0\Sigma. 
	\end{align*}
	We consider the directional derivative along the $v$ direction
	\begin{align*}
	\langle  v, \nabla^3 F(w)\rangle &:=\lim_{\epsilon\to 0}\frac{1}{\epsilon}(\nabla^2 F(w+\epsilon v) -\nabla^2 F(w) )\\
	& = \lim_{\epsilon\to 0}\frac{1}{\epsilon}(  \E [xx^T(1-(u/c + \epsilon x^{T}v)^2)(1-5(u/c+\epsilon x^{T}v)^2)1_{|u|\leq c}] \\
	&\quad \quad \quad  \quad -  \E [xx^T(1-(u/c)^2)(1-5(u/c)^2)1_{|u|\leq c}]) \\
	& = \E [4xx^Tx^Tv(3u/c- 5(u/c)^3) 1_{|u|\leq c}].
	\end{align*}
	We find 
	\begin{align*}
	\pm \langle  v, \nabla^3 F(w)\rangle&= \pm \E [4xx^Tx^T v(3u/c- 5(u/c)^3) 1_{|u|\leq c}]\\
	&\preceq \E [4xx^T|x^T v ||5(u/c)^3-3u/c|1_{|u|\leq c}]\preceq \E [8xx^T|x^T v|].
	\end{align*}
	For any test vector $\psi$, 
	\begin{align*}
	|\psi^T\langle  v, \nabla^3 F(w)\rangle\psi|&\leq 8\E [(x^T \psi)^2 |x^T v |] \leq 8\sqrt{\E [(x^T \psi)^4] \E[ |x^T v|^2] }\\
	&= 8\sqrt{3(\psi^T\Sigma \psi)^2 (v^T\Sigma v)}\leq 16\|v\|_\Sigma \psi^T\Sigma \psi.
	\end{align*}
	Therefore,
	\[
	-16\|v\|_\Sigma \Sigma \preceq \langle  v, \nabla^3 F(w)\rangle\preceq 16\|v\|_\Sigma \Sigma. 
	\]
	Furthermore, since $w=w^*+v$, from
	\[
	\nabla^2 F(v+w^*)=\nabla^2 F(w^*)+\int^1_0\langle  v, \nabla^3 F(w^*+sv)\rangle ds,
	\]
	we find 
	\[
	\nabla^2 F(w)\succeq -\delta \Sigma,
	\]
	if $16\|v\|_\Sigma\leq c_0+\delta.$
\end{proof}

\subsection{Two-layer neural network}
First of all, we provide a simple upper bound when computing the fourth order moments of Gaussian random variables. 
\begin{lem}
	\label{lem:Gs4th}
	If $x\in \reals^p$ is Gaussian with mean being zero, for any PSD $A\in \reals^{p\times p}$ and $a>0$
	\[
	\E (x^TAx+a)^2\leq 3(\E (x^TAx+a))^2.
	\]
\end{lem}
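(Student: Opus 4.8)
The plan is to reduce the quadratic form to a weighted sum of independent squared standard normals and then reduce the stated inequality to an elementary algebraic fact about nonnegative numbers. Set $Q := x^{T}Ax$. Since $A$ is PSD and $x$ is centered Gaussian, $Q\geq 0$ almost surely, so in particular $\E Q\geq 0$. Writing $x=\Sigma^{1/2}z$ with $z\sim\mathcal{N}(0,I_p)$ and diagonalizing the PSD matrix $\Sigma^{1/2}A\Sigma^{1/2}=U\,\mathrm{diag}(\mu_1,\dots,\mu_p)\,U^{T}$, one sees that $Q$ has the same law as $\sum_{i=1}^{p}\mu_i g_i^{2}$, where the $g_i$ are i.i.d.\ $\mathcal{N}(0,1)$ and each $\mu_i\geq 0$ (the $\mu_i$ being the eigenvalues of $A\Sigma$).

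First I would observe that the claim depends only on the first two moments of $Q$. Expanding both sides,
\[
3\big(\E(Q+a)\big)^{2}-\E(Q+a)^{2}=\big(3(\E Q)^{2}-\E Q^{2}\big)+\big(4a\,\E Q+2a^{2}\big).
\]
Because $a>0$ and $\E Q\geq 0$, the second bracket is nonnegative, so it suffices to prove the moment inequality $\E Q^{2}\leq 3(\E Q)^{2}$.

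Next I would compute the two moments from the representation $Q=\sum_i\mu_i g_i^{2}$, using $\E g_i^{2}=1$, $\E g_i^{4}=3$, and independence of the $g_i$. This gives $\E Q=\sum_i\mu_i$ and
\[
\E Q^{2}=\sum_{i}\mu_i^{2}\,\E g_i^{4}+\sum_{i\neq j}\mu_i\mu_j\,\E g_i^{2}\E g_j^{2}=2\sum_i\mu_i^{2}+\Big(\sum_i\mu_i\Big)^{2}.
\]
Therefore
\[
3(\E Q)^{2}-\E Q^{2}=2\Big[\Big(\sum_i\mu_i\Big)^{2}-\sum_i\mu_i^{2}\Big]=2\sum_{i\neq j}\mu_i\mu_j\geq 0,
\]
where nonnegativity is exactly where $A$ (hence each $\mu_i$) being PSD is used.

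This lemma is routine, so there is no genuinely hard step; the only points requiring care are the reduction to $\sum_i\mu_i g_i^{2}$ when $\Sigma$ is possibly singular (the square-root/diagonalization argument still applies, and degenerate directions simply contribute $\mu_i=0$) and tracking the signs so that the $a$-dependent remainder $4a\,\E Q+2a^{2}$ is dropped in the correct direction. The essential content is the eigenvalue inequality $\sum_i\mu_i^{2}\leq(\sum_i\mu_i)^{2}$ for nonnegative $\mu_i$, which is the crux and is immediate.
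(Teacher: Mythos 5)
Your proof is correct and follows essentially the same route as the paper's: both reduce $x^TAx$ via $\Sigma^{1/2}$ and diagonalization to a weighted sum of independent squared standard normals and then exploit the Gaussian fourth moment $\E g^4=3$. The only cosmetic difference is that you compute $\E Q^2$ exactly and isolate the $a$-dependent remainder, whereas the paper bounds termwise via $\E z_i^2 z_j^2\leq 3\,\E z_i^2\,\E z_j^2$ and absorbs the cross terms into $3(\E\|z\|_\Lambda^2+a)^2$; your version is if anything slightly more careful, since the paper's substitution $x\mapsto\Sigma^{-1/2}x$ tacitly assumes $\Sigma$ invertible while your $x=\Sigma^{1/2}z$ representation handles singular $\Sigma$ directly.
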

\begin{proof}
	Let $\Sigma$ be the covariance matrix of $x$. Since replacing $x$ with $\Sigma^{-1/2}x$, the statement of the Lemma remains the same, therefore we can assume $x\sim \mathcal{N}(0, I_p)$.
	Let $A=V^T \Lambda V$ be the eigenvalue decomposition of $A$, and the eigenvalues of $A$ be $\lambda_1,\ldots,\lambda_p$. Let $z=Vx\sim \mathcal{N}(0,I_p)$. Note that 
	\[
	\E (x^TAx+a)^2=\E (\|z\|_\Lambda^4+2a \|z\|_\Lambda^2+a^2),
	\]
	and further,
	\[
	\E \|z\|_\Lambda^4=\sum_{i,j} \lambda_i\lambda_j\E (z^2_iz_j^2)\leq 3\sum_{i,j} \lambda_i\lambda_j\E z^2_i\E z_j^2=3(\E\|z\|_\Lambda^2)^2. 
	\]
	As a consequence, we obtain
	\[
	\E (x^TAx+a)^2\leq 3(\E \|z\|_\Lambda^2+a)^2=3(\E (x^TAx+a))^2.
	\]
\end{proof}
\begin{lem}
	\label{lem:2NNcomp}
	Assume that $\psi(0)=0$ and $|\dot{\psi}|,|\ddot{\psi}|\leq C$. Denote
	\[
	\Sigma^{\star}=\text{diag}\{I_k,\Sigma,\cdots, \Sigma, I_k\}\in\reals^{(p+2)k\times (p+2)k},
	\]
	and $\Delta w=w-w^*$. Then the followings hold
	\begin{enumerate}[1)]
		\item $\E\|\nabla f(w)\|^2 \leq 8\sqrt{3}(1+\tr(\Sigma))(6C^2\|\Delta w\|^2_{\Sigma^\star}( \|w^*\|_{\Sigma^\star}^2+\|w\|_{\Sigma^\star}^2)+\sigma_0^2)C^2\|w\|^2_{\Sigma^\star}$.
		\item  $\E \nabla g(w,x) \nabla g(w,x)^T \preceq 6C^2\|w\|^2_{\Sigma^\star} \Sigma^\star$.
		\item $-M_w\preceq \E (g(w,x)-g(w^*,x)-\xi)\nabla^2 g\preceq M_w$, where
		\[
		M_w:=6\sqrt{2} C^2(\|c\|_\infty+1) \|\Delta w\|_{\Sigma^\star}(\|w^*\|_{\Sigma^\star}+\|w\|_{\Sigma^\star})\Sigma^\star,
		\]
		with $\|c\|_{\infty} := \max \limits_{i}\{|c_i| \}$. 
		\item $G(w)\leq 6C^2\|\Delta w\|^2_{\Sigma^\star}( \|w^*\|_{\Sigma^\star}^2+\|w\|_{\Sigma^\star}^2).$
	\end{enumerate}

\end{lem}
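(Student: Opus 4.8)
The four estimates are interdependent, so the plan is to establish them in the order 2), 4), 1), 3): claim 2) bounds the gradient, claim 4) is built on it, and claims 1) and 3) rely on both claim 4) and Lemma~\ref{lem:Gs4th}. As preliminaries I would write $\phi_i=\psi(b_i^Tx+a_i)$ and $\dot\phi_i=\dot\psi(b_i^Tx+a_i)$, record the block form of $\nabla g$ (blocks $c_i\dot\phi_i$ in the $a_i$-coordinate, $c_i\dot\phi_i x$ in the $b_i$-coordinates, and $\phi_i$ in the $c_i$-coordinate), and note the two pointwise bounds that absorb all the nonlinearity: $|\dot\phi_i|\le C$ and, since $\psi(0)=0$ with $|\dot\psi|\le C$, $|\phi_i|\le C|b_i^Tx+a_i|$.

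For claim 2) I would fix a test vector $v=[v_a;v_{b_1},\dots,v_{b_k};v_c]$, write $v^T\nabla g=\sum_i c_i\dot\phi_i(v_{a,i}+v_{b_i}^Tx)+\sum_i v_{c,i}\phi_i$, apply Cauchy--Schwarz across the $k$ neurons to each sum, and take expectation using $\E x=0$ and $\E(v_{b_i}^Tx)^2=\|v_{b_i}\|_\Sigma^2$; the cross terms vanish and the sums collapse to $\|v_a\|^2+\sum_i\|v_{b_i}\|_\Sigma^2\le v^T\Sigma^\star v$ and $\|a\|^2+\sum_i\|b_i\|_\Sigma^2\le\|w\|_{\Sigma^\star}^2$, giving $\E(v^T\nabla g)^2\le 4C^2\|w\|_{\Sigma^\star}^2\,v^T\Sigma^\star v$, which yields claim 2). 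Claim 4) then follows from the fundamental theorem of calculus: with $w_s=w^*+s\,\Delta w$ one has $g(w,x)-g(w^*,x)=\int_0^1\Delta w^T\nabla g(w_s,x)\,ds$, so Jensen plus claim 2) applied at $w_s$ and the convexity bound $\|w_s\|_{\Sigma^\star}^2\le\|w^*\|_{\Sigma^\star}^2+\|w\|_{\Sigma^\star}^2$ give the stated estimate, after first observing $G(w)=F(w)-F(w^*)=\E(g(w,x)-g(w^*,x))^2$ because $\xi$ is mean-zero and independent of $x$.

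For claims 1) and 3) I start from $\nabla f=2(D-\xi)\nabla g$ with $D:=g(w,x)-g(w^*,x)$, so that independence of $\xi$ gives $\E\|\nabla f\|^2=4\bigl(\E[D^2\|\nabla g\|^2]+\sigma_0^2\,\E\|\nabla g\|^2\bigr)$ and $\E[(D-\xi)\nabla^2 g]=\E[D\nabla^2 g]$. The term $\E\|\nabla g\|^2$ is bounded directly from the pointwise bounds by $C^2(1+\tr(\Sigma))\|w\|_{\Sigma^\star}^2$. For claim 3) I compute $v^T\nabla^2 g\,v=\sum_i\bigl(c_i\ddot\phi_i t_i^2+2\dot\phi_i v_{c,i}t_i\bigr)$ with $t_i=v_{a,i}+v_{b_i}^Tx$, then bound $|v^T\E[D\nabla^2 g]v|$ by Cauchy--Schwarz in expectation, using $\sqrt{\E D^2}=\sqrt{G(w)}$ from claim 4), the scalar Gaussian moment $\E t_i^4\le 3(\E t_i^2)^2$, and $\sum_i(v_{a,i}^2+\|v_{b_i}\|_\Sigma^2)\le v^T\Sigma^\star v$ to collect the neurons and read off $M_w$.

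The main obstacle is the fourth-moment control needed in claims 1) and 3): $D^2$, $\|\nabla g\|^2$, and $t_i^2$ are all quadratic in the Gaussian $x$, so the product $D^2\|\nabla g\|^2$ is genuinely quartic. The enabling device is Lemma~\ref{lem:Gs4th}, but to invoke it I must first dominate $D^2$ and $\|\nabla g\|^2$ pointwise by \emph{pure} quadratic forms $x^TAx+a$ with $A\succeq0$ and $a\ge0$ --- discarding the linear cross terms via $(b_i^Tx+a_i)^2\le 2(b_i^Tx)^2+2a_i^2$ and the analogous split for $D^2$ coming from its integral representation --- and then check that the pointwise domination transfers through $\E[D^2\|\nabla g\|^2]\le\sqrt{\E Q_D^2\,\E Q_g^2}\le 3\,\E Q_D\,\E Q_g$. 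The remaining effort is constant bookkeeping to confirm that the factors stay within the stated $8\sqrt3$ and $6\sqrt2$, which is tedious but routine once the reduction to quadratic forms is in place.
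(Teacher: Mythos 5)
Your proposal is correct and runs on the same engine as the paper's proof --- the path representation $g(w,x)-g(w^*,x)=\int_0^1\Delta w^T\nabla g(w^s,x)\,ds$, Cauchy--Schwarz, pointwise domination of all quartic quantities by Gaussian quadratic forms, and Lemma \ref{lem:Gs4th} --- but you reorganize the dependency graph in a way the paper does not, and the reorganization is genuinely leaner. The paper proves claim 1) first and inside it establishes the fourth-moment bound \eqref{eqn:g-g4}, from which claims 3) and 4) are read off (claim 4) by taking a square root); you instead get claim 4) from claim 2) by Jensen along the path, which needs only \emph{second} moments, and then feed $\sqrt{\E D^2}=\sqrt{G(w)}$ into claim 3). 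Your claim 2) via the two-block split and Cauchy--Schwarz across neurons (yielding constant $4$ rather than $6$) avoids the paper's full expansion of all cross terms in $W^T\E\nabla g\nabla g^TW$ with Young's inequality, and your scalar identity $v^T\nabla^2 g\,v=\sum_i\bigl(c_i\ddot\phi_i t_i^2+2\dot\phi_i v_{c,i}t_i\bigr)$ replaces the paper's displayed block Hessian and intermediate matrix $Q_x$ while carrying the same content; your claim-3 constants then come out exactly as $2\sqrt3\cdot\sqrt6=6\sqrt2$, matching $M_w$ (and the conclusion $-M_w\preceq S\preceq M_w$ from $|v^TSv|\le v^TM_wv$ for all $v$ is legitimate since both matrices are symmetric). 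The one place where your ``tedious but routine'' bookkeeping claim needs care is claim 1): if you literally discard the linear terms via $(b_i^Tx+a_i)^2\le2(b_i^Tx)^2+2a_i^2$ in both $Q_D$ and $Q_g$ and then use $\E[Q_DQ_g]\le3\,\E Q_D\,\E Q_g$, the leading coefficient lands at about $4\cdot3\cdot4\cdot2=96$, slightly above the stated $8\sqrt3\cdot6=48\sqrt3\approx 83$. This is repaired for free by noting that Lemma \ref{lem:Gs4th} extends to the noncentered quadratics you actually face, since every dominating quantity here has the form $\|t\|^2+a$ with $t$ Gaussian of nonzero mean, and $\E\|t\|^4=(\tr V+\|\mu\|^2)^2+2\tr(V^2)+4\mu^TV\mu\le3(\E\|t\|^2)^2$ because $\tr(V^2)\le(\tr V)^2$ and $\mu^TV\mu\le\tr V\,\|\mu\|^2$; with that, the factor-$2$ splits are unnecessary and your route gives coefficients well inside $8\sqrt3$. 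This caveat is worth recording, since the paper itself secures its constant in \eqref{eqn:g-g4} through the step $\|a^s\|^2+\sum_i|(b_i^s)^Tx|^2\le C_w^2$, which is valid in expectation but not pointwise in $x$ as written, so your more explicit handling of the fourth moments is if anything the sounder of the two.
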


\begin{proof}
	For simplicity of discussion, we denote $z_i=b_i^Tx+a_i$ and $z=bx +a$.\\
	\textbf{Proof for Claim 1)} We note that $\nabla f(w)=2(g(w,x)-g(w^*,x)-\xi)\nabla g(w,x)$, thus
	\begin{align}
	\notag
	\E\|\nabla f(w)\|^2 &= 4\E[ (g(w,x)-g(w^*,x) )^2 \|\nabla g(w,x)\|^2 ] +  4 \sigma_0^2\E\|\nabla g(w,x)\|^2\\
	\label{eqn:bound_f'}
	&\leq 4\sqrt{\E (g(w,x)-g(w^*,x) )^4}\sqrt{\E \|\nabla g(w,x)\|^4} +  4 \sigma_0^2\sqrt{\E\|\nabla g(w,x)\|^4}. 
	\end{align} 
	Note that
	\[
	\nabla g=
	\begin{bmatrix}
	c\circ \dot{\psi}(z);
	c_1\dot{\psi}(z_1)x;
	\cdots;
	c_k\dot{\psi}(z_k)x;
	\psi(z)
	\end{bmatrix}^T\in \reals^{2k+kp},
	\]
	as a consequence, we have
	\begin{align}
	\notag
	\E\|\nabla g(w,x)\|^4 &= \E\left( \| c\circ \dot{\psi}(z) \|^2 +\sum_{i=1}^{k}  \|c_i\dot{\psi}(z_i)x\|^2 +  \sum_{i=1}^k\|\psi(z_i)\|^2 \right)^2\\
	\notag
	& \leq \E \left(C^2 \|c\|^2 + \sum_{i=1}^{k} (c_i)^2C^2\|x\|^2 + 2C^2\sum_{i=1}^k (b_i^Tx)^2+2C^2\|a\|^2\right)^2\\
	\notag
	&\mbox{Since $x$ is mean zero Gaussian, by Lemma \ref{lem:Gs4th}}\\
	\notag
	& \leq 3\left( C^2 \|c\|^2 + \sum_{i=1}^{k} (c_i)^2C^2\E\|x\|^2 + 2C^2\E\sum_{i=1}^k (b_i^Tx)^2+2C^2\|a\|^2\right)^2\\
	\notag
	&\leq 3C^4\left( \|c\|^2 + \|c\|^2\tr(\Sigma) + 2\sum_{i=1}^k\|b_i\|^2_\Sigma+2\|a\|^2\right)^2\\
	\label{eqn:nabg4}
	&\leq 12C^4(1+\tr(\Sigma))^2 \|w\|^4_{\Sigma^\star}.
	\end{align}
	Next, we let $w^s=sw+(1-s)w^*$ and $C_w^2=\|w\|^2_{\Sigma^\star}+\|w^*\|^2_{\Sigma^\star}$.
	By the convexity of $\|\,\cdot\,\|_{\Sigma^\star}^2$, we get
	\[
	\|w^s\|_{\Sigma^\star}^4\leq \max\{\|w\|_{\Sigma^\star}^4,\|w^*\|_{\Sigma^\star}^4\}\leq \|w\|_{\Sigma^\star}^4+ \|w^*\|_{\Sigma^\star}^4\leq C_w^4.
	\]
	Then, we have
	\begin{align}
	\notag
	&|g(w,x)-g(w^*,x)|^2=\left(\int^1_0 \Delta w^T\nabla g(w^s,x) ds\right)^2\\
	\notag
	&\leq \int^1_0\left(\Delta a^T c^s\circ  \dot{\psi}(z^s)+\sum_{i=1}^k c^s_i\dot{\psi}(z^s_i)\Delta b_i^T x+\Delta c^T\psi(z^s) \right)^2ds\\
	\notag
	&\leq \int^1_0\left(C\|\Delta a\|\|c^s\| +C\sum_{i=1}^k |c^s_i||\Delta b_i^T x|+\|\Delta c\|\|\psi(z^s)\|\right)^2ds\\
	\notag
	&\leq \int^1_0 \left(C_w^2\|\Delta a\|^2 +C_w^2\sum_{i=1}^k |\Delta b_i^T x|^2+\|\Delta c\|^2\|\psi(z^s)\|^2/C^2\right)ds\\
	\notag
	&\quad\quad \cdot\int^1_0  \left(\frac{C^2\|c^s\|^2}{C^2_w} +\frac{C^2}{C_w^2}\sum_{i=1}^k |c^s_i|^2+C^2\right)ds\\
	\notag
	&\leq \int^1_0\left(C_w^2\|\Delta a\|^2 +C_w^2\sum_{i=1}^k |\Delta b_i^T x|^2+\|\Delta c\|^2\|\psi(z^s)\|^2/C^2\right)ds\\
	\notag
	&\quad\quad \cdot \int^1_0\left(2C^2\|c^s\|^2/C^2_w +C^2\right)ds\\
	\label{eqn:g-g}
	&\leq 3C^2 \int^1_0\left(C_w^2\|\Delta a\|^2 +C_w^2\sum_{i=1}^k |\Delta b_i^T x|^2+\|\Delta c\|^2\|\psi(z^s)\|^2/C^2\right) ds.  
	\end{align} 
	By Lemma \ref{lem:Gs4th} and $\E|\Delta b_i^T x|^2=\Delta b_i^T\Sigma \Delta b_i$, we have
	\begin{align*}
	&\E \left(C_w^2\|\Delta a\|^2 +C_w^2\sum_{i=1}^k |\Delta b_i^T x|^2+\|\Delta c\|^2\|\psi(z^s)\|^2/C^2\right)^2\\
	&\leq \E \left(C_w^2\|\Delta a\|^2 +C_w^2\sum_{i=1}^k |\Delta b_i^T x|^2+2\|\Delta c\|^2(\|a^s\|^2+\sum_{i=1}^k|(b_i^s)^Tx|^2)\right)^2\\
	& \text{Note that } \|a^{s}\|^2+\sum_{i=1}^k|(b_i^{s})^Tx|^2  \leq \max\{\|w\|_{\Sigma^\star}^2,\|w^*\|_{\Sigma^\star}^2 \}\leq C_w^2 \\
	&\leq \left(C_w^2\|\Delta a\|^2 +C_w^2\sum_{i=1}^k \|\Delta b_i\|^2_\Sigma+2C_w^2\|\Delta c\|^2\right)^2\\
	&\leq 4(\|w\|_{\Sigma^\star}^2+\|w^*\|_{\Sigma^\star}^2)^2\|\Delta w\|^4_{\Sigma^\star}. 
	\end{align*}
	Replace these bounds into the square of \eqref{eqn:g-g}, we find
	\begin{equation}
	\label{eqn:g-g4}
	\E|g(w,x)-g(w^*,x)|^4\leq 36 C^4\|\Delta w\|^4_{\Sigma^\star}( \|w^*\|_{\Sigma^\star}^2+\|w\|_{\Sigma^\star}^2)^2.
	\end{equation}
	Furthermore, we combine this with \eqref{eqn:nabg4} into \eqref{eqn:bound_f'}, we  find that 
	\[
	\E\|\nabla f(w)\|^2 \leq 8\sqrt{3}(1+\tr(\Sigma))(6C^2\|\Delta w\|^2_{\Sigma^\star}( \|w^*\|_{\Sigma^\star}^2+\|w\|_{\Sigma^\star}^2)+\sigma_0^2)C^2 \|w\|^2_{\Sigma^\star}.
	\]
	
	\noindent\textbf{Proof for Claim 2): } Recall that
	\[
	\nabla g=
	\begin{bmatrix}
	c\circ \dot{\psi}(z);
	c_1\dot{\psi}(z_1)x;
	\cdots;
	c_k\dot{\psi}(z_k)x;
	\psi(z)
	\end{bmatrix}\in \reals^{2k+kp}. 
	\]
	With $u \in \reals^{k}, \ 
	v_1 \in \reals^{p}, ...,  
	v_k \in \reals^{p}, \ 
	w \in  \reals^{k}, $ we define
	\[
	W = 
	\begin{bmatrix}
	u;
	v_1 ;
	\cdots;
	v_k;
	w 
	\end{bmatrix}\in \reals^{2k+kp},
	\]
	and show that $W^T\E \nabla g \nabla g^{T} W\preceq   6C^2\|w\|^2_{\Sigma^\star} W^T \Sigma^\star W$. Note that 
	\begin{align}
	\label{eqn:nabnab}
	&W^T\E \nabla g \nabla g^{T} W\\
	\notag
	&=\E u^T[ c\circ \dot{\psi}(z)(c\circ \dot{\psi}(z))^T]u+v_i^T\E[ c_i^2\dot{\psi}(z_i)\dot{\psi}(z_i)xx^T] v_i+w^T\E[ \psi(z)(\psi(z) )^T]w\\
	\notag
	&\quad+2u^T\E c\circ \dot{\psi}(z) (\psi(z) )^T w +2\sum_{i=1}^ku^T\E c\circ \dot{\psi}(z) c_i\dot{\psi}(z_i)x^T v_i\\
	\notag
	&\quad  +2\sum_{i<j}v_i^T\E c_i\dot{\psi}(z_i)c_j\dot{\psi}(z_j)xx^T v_j+2\sum_{i=1}^k v_i^T\E c_i\dot{\psi}(z_i)x (\psi(z) )^T w. 
	\end{align}
	For the diagonal terms, note that 
	\begin{align*}
	\E [ c\circ \dot{\psi}(z)(c\circ \dot{\psi}(z))^T] & \preceq \E [ \|c\circ \dot{\psi}(z) \|^2 I_{k}] \preceq C^2  \|c\|^2 I_{k}, \\
	\E[c_i^2 \dot{\psi}(z_i)\dot{\psi}(z_i)xx^T] & \preceq C^2c_i^2  \E [xx^T]=C^2 c_i^2 \Sigma ,\\
	\E[ \psi(z)(\psi(z) )^T] & \preceq \E[ \|\psi(z)\|^2 I_{k} ] \preceq 2C^2\Big(\|a\|^2+\sum_{i=1}^k \|b_i\|^2_\Sigma\Big) I_{k}.
	\end{align*}
	For the cross terms, note that by Cauchy Schwarz inequality
	\begin{align*}
	u^Tc\circ \dot{\psi}(z) c_i\dot{\psi}(z_i)x^T v_i 
	& \leq |u^Tc\circ \dot{\psi}(z)| |c_i\dot{\psi}(z_i)x^T v_i| \\
	& = (u^Tc\circ \dot{\psi}(z) (c\circ \dot{\psi}(z))^Tu )^{1/2} (v_i^T(c_i\dot{\psi}(z_i))^2xx^T v_i)^{1/2} \\
	& \leq \frac{c_i^2}{2\|c\|^2}(u^Tc\circ \dot{\psi}(z) (c\circ \dot{\psi}(z))^Tu) + \frac{\|c\|^2}{2}( v_i^T(\dot{\psi}(z_i))^2xx^T v_i),
	\end{align*}
	and similarily,
	\begin{align*}
	u^Tc\circ \dot{\psi}(z) (\psi(z) )^T w 
	& \leq \frac{1}{2} u^Tc\circ \dot{\psi}(z)(c\circ \dot{\psi}(z))^Tu + \frac{1}{2} w^T \psi(z) (\psi(z) )^T w,\\
	v_i^Tc_i\dot{\psi}(z_i)c_j\dot{\psi}(z_j)xx^T v_j  & \leq \frac{c_j^2}{2} v_i^T(\dot{\psi}(z_i) )^2xx^T v_i + \frac{c_i^2}{2} v_j^T(\dot{\psi}(z_j))^2xx^T v_j,\\
	v_i^Tc_i\dot{\psi}(z_i)x (\psi(z) )^T w & \leq  \frac{\|c\|^2}{2} v_i^T(\dot{\psi}(z_i) )^2 xx^Tv_i
	+\frac{c_i^2}{2\|c\|^2} w^T(\psi(z) ) (\psi(z) )^T w.
	\end{align*}
	Plugging the  results above into \eqref{eqn:nabnab} gives us 
	\begin{align*}
	&\E \nabla g(w,x) \nabla g(w,x)^T\\
	&\preceq 
	C^2\begin{bmatrix}
	2 \|c\|^2 I_k & \mathbf{0}_{k \times p}&  \mathbf{0}_{k \times p} &  \mathbf{0}_{k \times p} & \mathbf{0}_{k \times k}\\
	\mathbf{0}_{p \times k} & 3\|c\|^2 \Sigma & \mathbf{0}_{p \times p} & \mathbf{0}_{p \times p}  & \mathbf{0}_{p \times k}\\
	\mathbf{0}_{p \times k} & \mathbf{0}_{p \times p} & \ddots &  \mathbf{0}_{p \times p}  & \mathbf{0}_{p \times k} \\
	\mathbf{0}_{p \times k} & \mathbf{0}_{p \times p} & \mathbf{0}_{p \times p}  &  3\|c\|^2 \Sigma & \mathbf{0}_{p \times k}\\
	\mathbf{0}_{k \times k}&  \mathbf{0}_{k \times p} &  \mathbf{0}_{k \times p} &  \mathbf{0}_{k \times p} & 6\left(\|a\|^2+\sum_{i=1}^k \|b_i\|^2_\Sigma\right)I_k\\
	\end{bmatrix}\\
	&\preceq 6C^2\|w\|^2_{\Sigma^\star} \Sigma^{\star}.
	\end{align*}

	\noindent\textbf{Proof for Claim 3): } First of all, we find that
	\begin{align*}
	\nabla^2 g =
	\begin{bmatrix}
	D_{c\circ \ddot{\psi}(z)} &  c_1 \ddot{\psi}(z_1)e_1x^T  & c_2 \ddot{\psi}(z_2)e_2x^T&\cdots &c_k \dot{\psi}(z_k)xe_k^T &D_{\dot{\psi}(z)}\\
	c_1 \ddot{\psi}(z_1)xe_1^T & c_1 \ddot{\psi}(z_1)xx^T &\textbf{0}_{p\times p} &\cdots &\textbf{0}_{p\times p}  &  \dot{\psi}(z_1)xe_1^T \\
	c_2 \ddot{\psi}(z_2)xe_2^T & \textbf{0}_{p\times p}  &c_2 \ddot{\psi}(z_2)xx^T &\cdots &\textbf{0}_{p\times p}  &  \dot{\psi}(z_2)xe_2^T\\
	\vdots & \vdots & & &\vdots & \vdots\\
	c_k \ddot{\psi}(z_k)xe_k^T & \textbf{0}_{p\times p}  &\cdots &\textbf{0}_{p\times p}  &  c_k \ddot{\psi}(z_k)xx^T  &  \dot{\psi}(z_k)xe_k^T \\
	D_{\dot{\psi}(z)} & \dot{\psi}(z_1)e_1x^{T} &\dot{\psi}(z_2)e_2x^T &\cdots & \dot{\psi}(z_k)e_k x^T&\textbf{0}_{k\times k}  
	\end{bmatrix}.
	\end{align*}
	In above, we use $D_{v}$ to denote the diagonal matrix with diagonal entries being components of $v$. 
	We will first show that $\nabla^2 g\preceq Q_x\preceq (2\|c\|_\infty+2)C\Sigma_x^\star$, where
	\begin{align*}
	Q_x &:=
	C\text{diag}\{(2\|c\|_\infty+1) I_k, (2\|c\|_\infty+1) xx^T,\ldots,(2\|c\|_\infty+1) xx^T, 2 I_k\},\\
	\Sigma^\star_x & :=\text{diag}\{ I_k,  xx^T,\ldots, xx^T, I_k\}.
	\end{align*}
	Recall $W = [ u; v_1 ; \cdots; v_k; w ] \in \reals^{2k+kp}$.  Note that 
	\begin{align}
	\notag
	W^T\nabla^2 g W=& u^T D_{c\circ \ddot{\psi}(z)}u+\sum_{i=1}^k c_i \ddot{\psi}(z_i)(v_i^Tx)^2+2w^TD_{\dot{\psi}(z)}u\\
	\label{eqn:WnabW}
	&+2\sum_{i=1}^k c_i \ddot{\psi}(z_i)(v^T_ix) (u^Te_i)+2\sum^k_{i=1} \dot{\psi}(z_i)(v^T_ix) (w^Te_i).
	\end{align}
	And further
	\begin{align*}
	u^{T}D_{c\circ \ddot{\psi}(z)}u & \leq \|D_{c\circ \ddot{\psi}(z)}\|\|u\|^2 \leq C\|c\|_\infty\|u\|^2,\\
	c_i \ddot{\psi}(z_i)(v_i^Tx)^2& \leq C\|c\|_\infty  (v_i^Tx)^2,\\
	2w^TD_{\dot{\psi}(z)}u& \leq C\|w\|^2+C\|u\|^2,\\
	2c_i \ddot{\psi}(z_i)(v^T_ix) (u^Te_i) & \leq \|c\|_\infty C((v^T_ix)^2+(u^Te_i)^2),\\
	2\dot{\psi}(z_i)(v^T_ix) (w^Te_i) &\leq  C((v^T_ix)^2+(w^Te_i)^2). 
	\end{align*}
	Replace these upper bounds to terms in \eqref{eqn:WnabW}, we find 
	\[
	W^T\nabla^2 g W\leq W^TQ_xW,
	\]
	because $\sum_i (u^Te_i)^2=\|u\|^2$. Since this holds for all $W$, we have $\nabla^2 g\preceq Q_x$. Finally, we note that 
	\begin{align*}
	|W^T\E (g(w,x)-g(w^*,x)-\xi)\nabla^2 g W|&=|W^T\E [(g(w,x)-g(w^*,x))\nabla^2 g ]W|\\
	&\leq \sqrt{\E (g(w,x)-g(w^*,x))^2}\sqrt{\E (W^T\nabla^2 g W)^2}. 
	\end{align*}
	Recall \eqref{eqn:g-g4}, we have 
	\[
	\sqrt{\E (g(w,x)-g(w^*,x))^2}\leq (\E (g(w,x)-g(w^*,x))^4)^{1/4}\leq \sqrt{6}C\|\Delta w\|_{\Sigma^\star}( \|w^*\|_{\Sigma^\star}+\|w\|_{\Sigma^\star}).
	\]
	Then, by Lemma \ref{lem:Gs4th}, we have
	\[
	\E (W^T\nabla^2 g W)^2\leq 4(\|c\|_\infty+1)^2C^2\E (W^T\Sigma^\star_xW)^2\leq 12 C^2(\|c\|_\infty+1)^2(W^T\Sigma^\star W)^2.
	\]
	In combination, we find 
	\[
	|W^T\E (g(w,x)-g(w^*,x)-\xi)\nabla^2 g W|\leq 6\sqrt{2}C^2(\|c\|_\infty+1) \|\Delta w\|_{\Sigma^\star}(\|w^*\|_{\Sigma^\star}+\|w\|_{\Sigma^\star})W^T\Sigma^\star W.
	\]
	This verifies our claim 3).

	\noindent\textbf{Proof for Claim 4): } Simply note that by \eqref{eqn:g-g4}, we get
	\[
	G(w)=\E |g(w,x)-g(w^*,x)|^2\leq 6C^2\|\Delta w\|^2_{\Sigma^\star}( \|w^*\|_{\Sigma^\star}^2+\|w\|_{\Sigma^\star}^2).
	\]
	
\end{proof}

\begin{proof}[Proof for Proposition \ref{prop:2LNN}.]
	First, we find that, when $w\in \calD$,
	\begin{align}\label{c_inf}
	\|c\|^2_\infty\leq\|w\|^2_{\Sigma^\star}\leq (1+\tfrac14 )^2\|w^*\|^2_{\Sigma^\star}\leq 2\|w^*\|^2_{\Sigma^\star}. 
	\end{align}
	Note that  
	\begin{align*}
	\nabla^2 F&=\E \nabla g(w,x) \nabla g(w,x)^T+ \E (g(w,x)-g(w^*,x)-\xi)\nabla^2 g(w,x)\\
	&=\E \nabla g(w,x) \nabla g(w,x)^T+ \E (g(w,x)-g(w^*,x))\nabla^2 g(w,x).
	\end{align*}
	By Lemma \ref{lem:2NNcomp} claim 2) and claim 3) and (\ref{c_inf}), we have
	\begin{align*}
	\E \nabla g(w,x) \nabla g(w,x)^T & \preceq 6C^2\|w^*\|^2_{\Sigma^\star}  \Sigma^{\star} , \\
	\E (g(w,x)-g(w^*,x))\nabla^2 g(w,x)&\preceq  6\sqrt{2}C^2(\|c\|_\infty+1) \delta C_1(w^*)\|w^*\|_{\Sigma^\star}( \|w^*\|_{\Sigma^\star}+\|w\|_{\Sigma^\star})\Sigma^\star   \\
	& \preceq 18\sqrt{2}C^2(2\|w^*\|_{\Sigma^\star}+1) \delta C_1(w^*)\|w^*\|_{\Sigma^\star}^2\Sigma^\star\\
	&\preceq 4\delta C^2\|w^*\|^2_{\Sigma^\star} \Sigma^{\star}. 
	\end{align*}
	So $\nabla^2 F\preceq C_0(w^*)\Sigma^{\star}$. Also note that $\E \nabla g(w,x) \nabla g(w,x)^T\succeq 0$, we have
	\[
	\nabla^2F \succeq \E (g(w,x)-g(w^*,x))\nabla^2 g(w,x)\succeq  -4\delta C^2\|w^*\|^2_{\Sigma^\star}  \Sigma^{\star}\succeq -\delta A. 
	\]
	Then, by Lemma \ref{lem:2NNcomp} claim 1) and (\ref{c_inf}), we find that 
	\begin{align*}
	\E \|\nabla f(w)-&\nabla F(w)\|^2\leq \E \|\nabla f(w)\|^2\\
	&\leq  8\sqrt{3}(1+\tr(\Sigma))(6 C^2\|\Delta w\|^2_{\Sigma^\star}( \|w^*\|_{\Sigma^\star}^2+\|w\|_{\Sigma^\star}^2)+\sigma_0^2)C^2 \|w\|^2_{\Sigma^\star}\\
	&\leq 8\sqrt{3}(1+\tr(\Sigma))(18 C^2\|\Delta w\|^2_{\Sigma^\star} \|w^*\|_{\Sigma^\star}^2+\sigma_0^2)C^2 \|w^*\|^2_{\Sigma^\star}\\
	&\leq 8\sqrt{3}(1+\tr(\Sigma))(18 \delta^2 (C_1(w^*))^2 C^2 \|w^*\|_{\Sigma^\star}^2\|w^*\|_{\Sigma^\star}^2+\sigma_0^2)C^2 \|w^*\|^2_{\Sigma^\star}\\
	&\leq 8\sqrt{3}(1+\tr(\Sigma))(C^2 \|w^*\|_{\Sigma^\star}^4+\sigma_0^2)C^2 \|w^*\|^2_{\Sigma^\star}.
	\end{align*}
	Finally, by claim 4) of Lemma \ref{lem:2NNcomp}, when $w_0\in \calD$, we have
	\[
	G(w_0)\leq 18C^2(C_1(w^*))^2 \delta^2 \|w^*\|_{\Sigma^\star}^4\leq C^2 \|w^*\|_{\Sigma^\star}^4. 
	\]
\end{proof}

\vskip 0.2in
\bibliography{spec}

\end{document}